\newtheorem{theorem}{Theorem}
\newtheorem{lemma}{Lemma}
\newtheorem{definition}{Definition}
\newtheorem{theoremsub}{Theorem}[theorem]
\title{Hybrid Decentralized Optimization:\\
Leveraging Both First- and Zeroth-Order Optimizers for Faster Convergence}
\author{
    Matin Ansaripour\textsuperscript{\rm 1}\equalcontrib, Shayan Talaei\textsuperscript{\rm 2}\equalcontrib, Giorgi Nadiradze\textsuperscript{\rm 3}, Dan Alistarh\textsuperscript{\rm 3} 
}
\begin{document}

\maketitle

\begin{abstract}
Distributed optimization is the standard way of speeding up machine learning training, and most of the research in the area focuses on distributed first-order, gradient-based methods. Yet, there are settings where some computationally-bounded nodes may not be able to implement \emph{first-order, gradient-based} optimization, while they could still contribute to joint optimization tasks. In this paper, we initiate the study of hybrid decentralized optimization, studying settings where nodes with zeroth-order and first-order optimization capabilities co-exist in a distributed system, and attempt to jointly solve an optimization task over some data distribution. We essentially show that, under reasonable parameter settings, such a system can not only withstand noisier zeroth-order agents but can even benefit from integrating such agents into the optimization process, rather than ignoring their information. At the core of our approach is a new analysis of distributed optimization with noisy and possibly-biased gradient estimators, which may be of independent interest. Our results hold for both convex and non-convex objectives. Experimental results on standard optimization tasks confirm our analysis, showing that hybrid first-zeroth order optimization can be practical, even when training deep neural networks.
\end{abstract}



\begin{links}
    \link{Code}{https://github.com/ShayanTalaei/HDO}
    \link{Extended version}{https://arxiv.org/abs/2210.07703}
\end{links}

\section{Introduction}

One key enabler of the extremely rapid recent progress of machine learning has been \emph{distributed optimization}: the ability to efficiently optimize over large quantities of data, and large parameter counts, among multiple nodes or devices, in order to share the computational load, and therefore reduce end-to-end training time. 
Distributed machine learning has become commonplace, and it is not unusual to encounter systems which distribute model training among tens or even hundreds of nodes. 

By and large, the standard distribution strategy in the context of machine learning tasks has been \emph{data-parallel}~\cite{bottou2010large}, using \emph{first-order} gradient estimators. 
 We can formalize this as follows: considering a classical empirical risk minimization setting, we have a set of samples $S$ from a distribution, and wish to minimize the function $f: \mathbb{R}^d 
\rightarrow \mathbb{R}$, which is the average of losses over samples from $S$. In other words, we wish to find $x^\star = \textnormal{ argmin }_{x} \sum_{s \in S} f_s (x) / |S|$. 
Assuming that we have $n$ compute nodes which can process samples in parallel, data-parallel SGD consists of iterations in which each node computes gradient estimator for a batch of samples, and then  nodes then exchange this information, either globally, via all-to-all communication, or \emph{pair-wise}. 
Specifically, in this paper we will focus on the highly-popular \emph{decentralized optimization case}, in which nodes interact in randomly chosen pairs, exchanging model information, following each local optimization step. 

There is already a vast amount of literature on decentralized optimization in the case where nodes have access to \emph{first-order, gradient-based} estimators. 
While this setting is prevalent, it does not cover the interesting case where, among the set of nodes, a fraction only have access to weaker, \emph{zeroth-order} gradient estimators, corresponding to less computationally-capable devices, but which may still possess useful local data and computation. 

In this paper, we initiate the study of \emph{hybrid decentralized optimization} in the latter setting. Specifically, we aim to answer the following key question:

\begin{center}
    \emph{Can zeroth-order estimators be integrated in a decentralized setting, and can they boost convergence?} 
\end{center}

Roughly, we show that the answer to this question is {affirmative}. 
To arrive at it, we must overcome a number of non-trivial technical obstacles, and the answer must be qualified by key parameters, such as the first-order/zeroth-order split in the population, and the estimator variance and bias. 
More precisely, a key difficulty we must overcome in the algorithm and in the analysis is the fact that, under standard implementations, zeroth-order estimators are \emph{biased}, breaking one of the key analytic assumptions in existing work on decentralized optimization, e.g.~\cite{lian2017can, wang2018cooperative, Koloskova*2020Decentralized, koloskova2020unified, nadiradze2021asynchronous}. 

Our analysis approach overcomes this obstacle and provides the first convergence bounds for hybrid decentralized optimization via a novel potential argument. 
Roughly, assuming a $d$-dimensional and $L$-smooth finite-sum objective function $f$, 
and a population of $n$ nodes, in which $n_1$ have first-order stochastic gradient estimators of variance $\sigma_1$, and $n_0$ have zeroth-order estimators of variance $\sigma_0$, then our analysis shows that the ``stochastic noise'' in the convergence of our hybrid decentralized optimization algorithm in this population is given, up to constants, by the following three quantities: 

\begin{equation}\label{eq:noise}
    \frac{\eta(d n_0\varsigma_0^2+n_1 \varsigma_1^2)}{n^2}, \frac{\eta(d n_0\sigma_0^2+n_1 \sigma_1^2)}{n^2}, \eta^2\big(\frac{ L d n_0}{n}\big)^k.
\end{equation}

\noindent In this expression, $\eta$ is the learning rate, and the quantities $\varsigma_1$ and $\varsigma_0$ are bounds on the \emph{average} variance of first-order and zeroth-order estimators at the nodes, respectively, given by the way in which the data is split among these two types of agents. 
Intuitively, the first term is the variance due to the (random) data split, whereas the second term is the added variance due to noise in the two types of gradient estimators. 
(The zeroth-order terms are scaled by the dimension, as is common in this case.)
The third term bounds \emph{the bias} induced by the zeroth-order gradient estimators, where $k$ equals 1 for the convex case and 2 for the non-convex case. 
Using this characterization, we show that there exist reasonable parameter settings such that, if zeroth-order nodes do not have extremely high variance, they may in fact be useful for convergence, especially since the third bias term can be controlled via the learning rate $\eta$.  

Our analysis approach should be of independent interest: first, we provide a simple and general way of characterizing convergence in a population mixing first- and zeroth-order agents, which can be easily parametrized given population and estimator properties, for both convex and non-convex objectives. 
(For instance, we can directly cover the case when the zeroth-order estimators are unbiased~\cite{Chen2020unbiased} as in this case the bias term becomes zero.) 
Second, we do so in a very general communication model which allows agents to interact at different rates (due to randomness), covering both the pair-wise interaction model~\cite{AADFP06, nadiradze2021asynchronous} and the global matching interactions model~\cite{lian2017can, wang2018cooperative, Koloskova*2020Decentralized, koloskova2020unified}.  

A key remaining question is whether the above characterization can be validated for practical setting. 
For this, we implemented our algorithm and examined the convergence under various optimization tasks, population relative sizes, and estimator implementations. 
Specifically, we implemented three different types of zeroth-order estimators: a standard biased one, e.g.~\cite{nesterov2017random}, a de-biased estimator~\cite{Chen2020unbiased}, and the novel gradient-free estimator of~\cite{baydin2022forwardmode}, and examined their behavior when mixed with first-order estimators. 
In brief, our results show that, even for high-dimensional and complex tasks, such as fine-tuning the ResNet-18~\cite{he2015deepresiduallearningimage}, or a Transformer model~\cite{vaswani2023attentionneed}, our approach continues to converge. Importantly, we observe that our approach allows a system to incorporate information from the zeroth-order agents in an efficient and robust, showing higher convergence speed relative to the case where only first-order information is considered for optimization.

\paragraph{Related Work.} 
The study of decentralized optimization algorithms dates back to~\citet{tsitsiklis1984problems}, and is related to the study of \emph{gossip} algorithms for information dissemination~\cite{kempe2003gossip, xiao2004fast}.  The distinguishing feature of this setting is that optimization occurs jointly, but in the absence of a coordinator node. 
Several classic first-order algorithms have been ported and analyzed in the gossip setting, such as subgradient methods for convex objectives~\cite{nedic2009distributed, johansson2009randomized, shamir2014distributed} or ADMM~\cite{wei2012distributed, iutzeler2013asynchronous}. References~\cite{lian2017can, lian2017asynchronous, assran2018stochastic} consider SGD-type algorithms in the non-convex setting, while references~\cite{tang2018decentralization, Koloskova*2020Decentralized, nadiradze2021asynchronous} analyzed the use of quantization in the gossip setting.
By contrast, zeroth-order optimization has been relatively less investigated: \citet{sahu2020decentralized} proposes a distributed deterministic zeroth-order Frank-Wolfe-type algorithm, whereas other works by~\cite{yuan2021distributed} and \cite{pmlr-v202-mhanna23a} investigated the rates which can be achieved by decentralized zeroth-order algorithms, proposing multi-stage methods which can match the rate of centralized algorithms in some parameter regimes. Relative to the latter reference, we focus on simpler decentralized algorithms, which can easily interface with first-order optimizers, and perform a significantly more in-depth experimental validation. 

Stochastic zeroth-order optimization has been classically applied for gradient-free optimization of convex functions, e.g.~\cite{nesterov2017random}, 
and has been extended to tackling high-dimensionality and saddle-point constraints, e.g.~\cite{balasubramanian2019zeroth}. 
(The area has tight connections to bandit online optimization, under time-varying objective functions, e.g.~\cite{flaxman2004online, agarwal2010optimal, shamir2017optimal}; however, our results are not immediately relevant to this direction, as we are interested in interactions with agents possessing first-order information as well.) 
In this paper, we also investigate improved single-point function evaluation for better gradient estimation~\cite{kuhn2022zerothimaginary} as well as the forward-mode unbiased estimator of~\citet{baydin2022forwardmode}. 

\section{Preliminaries}

\subsection{The System Model}

We consider a standard model for the decentralized optimization setting, which is similar to~\cite{Koloskova*2020Decentralized, koloskova2020unified, lian2017can, nadiradze2021asynchronous}.
Specifically, we have $n \geq 2$ agents, of which $n_0$ agents have zeroth-order gradient oracles, and $n_1$ have first-order gradient oracles. (We describe the exact optimization setup in the next section.) 
Beyond their oracle type, the agents are assumed to be anonymous for the purposes of the protocol. 
The execution will proceed in discrete \emph{steps}, or \emph{rounds}, where in each step, two agents are chosen to interact, uniformly at random. Specifically, when chosen, each agent performs some local computation, e.g. obtains some gradient information from their local oracle. 
Then, the two agents exchange parameter information, and update their local models, after which they are ready to proceed to the next round. Notice that this random interaction model is asynchronous, in the sense that the number of interactions taken by agents up to some point in time may be different, due to randomness. 
The basic unit of time used in the analysis, which we call \emph{fine-grained time}, will be the total number of interactions among agents up to some given point in the execution. To express global progress, we will consider \emph{parallel time}, which is the \emph{average} number of interactions up to some point, and can be obtained by dividing by $n$ the total number of interactions. This corresponds to the intuition that $\Theta(n)$ interactions may occur in parallel. 
In experiments, we will examine the convergence of the local model at a fixed node. 

This model is an instantiation of the classic \emph{population model} of distributed computing~\cite{AADFP06}, in an optimization setting. The model is similar to the one adopted by~\citet{nadiradze2021asynchronous} for analyzing asynchronous decentralized SGD, 
and is more general than the ones adopted by~\citet{Koloskova*2020Decentralized, koloskova2020unified, lian2017can, wang2018cooperative} for decentralized analysis, since the latter assume that nodes are paired via perfect global random matchings in each round. 
(Our analysis would easily extend to global matching, yielding virtually the same results.)


\subsection{Optimization Setup}

We assume each node $i$ has a local data distribution $\mathcal{D}^i$, and that the loss function corresponding to the samples at node $i$, denoted by $f^i(x):\mathbb{R}^d \shortrightarrow \mathbb{R}$ can be approximated using its stochastic form $F^i(x, \xi^i)$ for each parameter $x \in \mathbb{R}^d$ and (randomly chosen) sample $\xi^i \sim \mathcal{D}^i$, where $f^i(x) = \E_{\xi^i \sim \mathcal{D}^i}\big[ F^i(x, \xi^i) \big]$. For simplicity of notation, we assume that nodes in the set $N_0=\{1,2, ..., n_0\}$ are zeroth-order nodes and the nodes in the set $N_1 = [n]/N_0$ are first-order nodes. Let $n_0$ and $n_1$ be the sizes of the sets $N_0$ and $N_1$ correspondingly.

In this setup nodes communicate to solve a distributed stochastic optimization problem, i.e. $$f^* = \underset{x \in \mathbb{R}^d}{min} \Big[ f(x) := \frac{1}{n_0} \sum_{i \in N_0} f^i(x)  + \frac{1}{n_1} \sum_{i \in N_1}f^i(x)\Big].$$

This means that we wish to optimize the function $f$ which corresponds to the loss over all data samples.
Since in the analysis we will wish to throttle the ratio of zeroth-order to first-order agents, we split the entire data among zeroth-order nodes, and we do the same thing for the first-order nodes. 
(Our analysis can be extended to settings where this is not the case, but this will allow us for instance to study what happens when either $n_0$ or $n_1$ goes to zero, without changing our objective function.)
We make the following assumptions on the optimization objectives: 

\begin{restatable}[Strong convexity]{assumption}{stronglyConvex} \label{asmp:strongly_convex} We assume that the function $f$ is strongly convex with parameter $\ell>0$, i.e. for all $x, y \in \mathbb{R}^d$:
\begin{equation*}
(x - y)^T(\nabla f(x) - \nabla f(y)) \geq \ell \| x - y\|^2.
\end{equation*}
\end{restatable}

\begin{restatable}[Smooth gradient]{assumption}{smoothGradient}\label{asmp:lipschitz} All the stochastic gradients $\nabla F^i$ are L-Lipschitz for some constant $L > 0$, i.e. for all $\xi^i \sim \mathcal{D}^i$ and $x, y \in \mathbb{R}^d$:
\begin{equation}
    \| \nabla F^i(x, \xi^i) - \nabla F^i(y, \xi^i) \| \leq L \|x - y\|.
\end{equation}
If in addition $F^i$ are convex functions, then
\begin{align*}
    \| \nabla F^i(x, \xi^i) - \nabla F^i(y, \xi^i) \|& \leq\\ 2L (F^i(x, \xi^i)) - F^i(y,& \xi^i) - \langle x-y, \nabla F^i(y, \xi^i) \rangle).
\end{align*}
\end{restatable}

Using Assumption \ref{asmp:lipschitz}, one can easily find that the gradients of  $f$ and $f^i(x)$ $\forall i \in [n]$ are also satisfying the above inequalities.
Further, we make the following assumptions about the data split and the stochastic gradient estimators: 

\begin{restatable}[Balanced data distribution]{assumption}{globalVariance}\label{asmp:global_variance} The average variance of $\nabla f^i(x)$s for both zero and first order nodes is bounded by a global constant values, i.e. for all $x \in \mathbb{R}^d$:
\begin{align*}
\frac{1}{n_0}\underset{i \in N_0}{\sum} &\| \nabla f^i(x)  - \nabla f(x) \|^2 \leq \varsigma_0^2; \\
\frac{1}{n_1}\underset{i \in N_1}{\sum} &\| \nabla f^i(x)  - \nabla f(x) \|^2 \leq \varsigma_1^2.
\end{align*}
\end{restatable}

\begin{restatable}[Unbiasedness and bounded variance]{assumption}{unbiasedness}\label{asmp:unbiasedness_bounded_local_variance_of_F} For each i, $\nabla F^i(x, \xi^i)$ is an unbiased estimator of $\nabla f^i(x)$ and its variance is bounded by a constant $s_i^2$, i.e. for all $x \in \mathbb{R}^d$:
\begin{eqnarray*}
\E_{\xi^i \sim \mathcal{D}^i}[\nabla F^i(x, \xi^i)] = \nabla f^i(x); \\ \E_{\xi^i}\| \nabla F^i(x, \xi^i) - \nabla f^i(x)\| \leq s_i^2.
\end{eqnarray*}
\end{restatable} 

Each node has access to an estimator $G^i(x)$ that estimates the local gradient $\nabla f^i(x)$ at point $x$. 
For nodes which can perform the gradient computation over a batch of data, i.e. first-order nodes, $G^i(x)$ is $\nabla F^i(x, \xi^i)$, where $\xi^i \sim \mathcal{D}^i$.

\begin{definition}
    We define the average of $s_i$ for the zeroth and first order populations as $\sigma_0^2$ and $\sigma_1^2$ respectively. Formally, we define
    \begin{align*}
        \sigma_0^2 := \frac{1}{n_0} \sum_{i \in n_0} s_i^2,\quad \sigma_1^2 := \frac{1}{n_1} \sum_{i \in n_1} s_i^2.
    \end{align*}
\end{definition}

\subsection{Zeroth-order Optimization}

We now provide a brief introduction relative to standard basic facts and assumptions concerning zeroth-order optimization. 
Let the function $f^i_\nu(x):=\E_u[f^i(x+\nu u)],$ $u\sim N(0, I_d)$ be the smoothed version of each function $f^i(x)$. Then, node $i$ can estimate the gradient of $f^i_\nu$ by only evaluating some points of $f^i$. 
\begin{restatable}[Zeroth-order estimator]{definition}{zerothEstimator}
\label{def: zeroth-order estimator}
\begin{equation}
    G^i_\nu(x, u, \xi^i)=\frac{F^i(x+\nu u, \xi^i)-F^i(x, \xi^i)}{\nu}u,
\end{equation}
where $u \sim N(0, I_d)$ and $\xi^i \sim \mathcal{D}^i$.\
\end{restatable}
Note that under Assumption \ref{asmp:unbiasedness_bounded_local_variance_of_F}, one can easily prove that $G^i_\nu(x, u, \xi^i)$ is an unbiased estimator of $\nabla f^i_\nu$ since
\begin{align} \label{E(G_v)}
    \E_{u, \xi^i} [ G^i_\nu(x, u, \xi^i) ] &= \E_u [ \frac{f^i(x+\nu u) - f^i(x)}{\nu}u ] \nonumber \\&= \nabla f^i_\nu(x).
\end{align}

As a technical note, in our analysis we will set $\nu:=\frac{\eta}{c}$, where $\eta$ is the learning rate and $c$ is a constant to be defined later. Therefore, for simplicity we can define $G^i(x):=G^i_\nu(x, u, \xi^i)$, where $G^i_\nu(x, u, \xi)$ is as defined in Definition~\ref{def: zeroth-order estimator} and $\nu=\frac{\eta}{c}$. Since zeroth-order nodes cannot perform gradient computation directly, we  use this $G^i(x)$ as their gradient estimator.
We restate the following well-known fact:
\begin{restatable}[\cite{nesterov2017random}, Theorem 1.1 in~\cite{balasubramanian2019zeroth}]{lemma}{nestrov} 
\label{smth_approx} For a Gaussian random vector $u\sim N(0,I_d) $ we have that 
\begin{align}\label{eq:l2gauss}
\E[\|u\|^k] \le (d+k)^{k/2}
\end{align} 
for any $k \ge 2$. Moreover, the following statements hold for any function $f$ whose gradient is Lipschitz continuous with constant $L$.
\begin{itemize}
\item [a)]The gradient of $f_{\nu}$ is Lipschitz continuous with constant $L_{\nu}$ such that $L_{\nu} \le L$.
\item [b)] For any $x \in \mathbb{R}^d$,
\begin{align*}
|f_{\nu}(x)-f(x)| &\le \frac{\nu^2}{2} L d,\\ 
\|\nabla f_{\nu}(x) - \nabla f^i(x)\| &\le \frac{\nu}{2}L (d+3)^{\frac{3}{2}}.
\end{align*}
\item [c)]
For any $x \in \mathbb{R} ^n$,
\begin{eqnarray*} \label{stoch_smth_approx_grad}
\frac{1}{\nu^2}\E_u[\{f(x+\nu u)-f(x)\}^2\|u\|^2] \le \\ \frac{ \nu^2}{2}L^2(d+6)^3 + 2(d+4)\|\nabla f(x)\|^2.
\end{eqnarray*}
\end{itemize}
\label{rand_smth_close_grad}
\end{restatable}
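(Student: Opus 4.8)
The plan is to reduce every claim to two ingredients: the moment bound $\E[\|u\|^k]\le(d+k)^{k/2}$, and the representation of $f_\nu$ and $\nabla f_\nu$ as Gaussian averages, after which parts (a)--(c) become applications of $L$-smoothness together with $\E[u]=0$ and $\E[uu^\top]=I_d$.

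First I would establish the moment bound. Since $\|u\|^2=\sum_{i=1}^d u_i^2$ is chi-squared with $d$ degrees of freedom, for an even integer $k=2m$ the moment factorizes exactly as $\E[\|u\|^{2m}]=\prod_{j=0}^{m-1}(d+2j)$, and bounding each of the $m$ factors by $d+2m-2\le d+k$ gives $\E[\|u\|^k]\le(d+k)^{k/2}$. For a general real $k\ge 2$ I would interpolate by Lyapunov's inequality: with $q=2\lceil k/2\rceil$ the next even integer, $\E[\|u\|^k]\le(\E[\|u\|^q])^{k/q}\le(d+q-2)^{k/2}\le(d+k)^{k/2}$, the last step using $q-2<k$. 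The only instances needed below are $k\in\{2,3,6\}$, with $\E[\|u\|^2]=d$ exactly.

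Differentiating under the Gaussian integral gives $\nabla f_\nu(x)=\E_u[\nabla f(x+\nu u)]$, so part (a) is immediate: $\|\nabla f_\nu(x)-\nabla f_\nu(y)\|\le\E_u\|\nabla f(x+\nu u)-\nabla f(y+\nu u)\|\le L\|x-y\|$ by Jensen and $L$-Lipschitzness of $\nabla f$, whence $L_\nu\le L$. For the first bound in (b) I would use the quadratic estimate $|f(x+\nu u)-f(x)-\nu\langle\nabla f(x),u\rangle|\le\frac{L\nu^2}{2}\|u\|^2$; taking $\E_u$ annihilates the linear term since $\E[u]=0$, leaving $|f_\nu(x)-f(x)|\le\frac{L\nu^2}{2}\E\|u\|^2=\frac{L\nu^2}{2}d$. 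For the gradient bound in (b) I would instead use the finite-difference representation $\nabla f_\nu(x)=\E_u\big[\frac{f(x+\nu u)-f(x)}{\nu}u\big]$ (the same identity already used to show the zeroth-order estimator is unbiased for $\nabla f_\nu$) together with $\nabla f(x)=\E_u[\langle\nabla f(x),u\rangle u]$, which holds because $\E[uu^\top]=I_d$. Subtracting, the linear term cancels and the integrand becomes $\frac{1}{\nu}\big(f(x+\nu u)-f(x)-\nu\langle\nabla f(x),u\rangle\big)u$, of norm at most $\frac{L\nu}{2}\|u\|^3$; taking $\E_u$ and the $k=3$ moment yields $\|\nabla f_\nu(x)-\nabla f(x)\|\le\frac{L\nu}{2}(d+3)^{3/2}$.

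Part (c) is the step I would treat most carefully, and it is the main obstacle: the bound must be split so that the curvature remainder carries all the $L^2$-dependence (and is of order $\nu^2$) while the linear part carries the $\|\nabla f\|^2$-dependence. I would write $f(x+\nu u)-f(x)=A+B$ with $A:=f(x+\nu u)-f(x)-\nu\langle\nabla f(x),u\rangle$ and $B:=\nu\langle\nabla f(x),u\rangle$, and use $(A+B)^2\le 2A^2+2B^2$. The $A$-term gives $\frac{2}{\nu^2}\E[A^2\|u\|^2]\le\frac{L^2\nu^2}{2}\E\|u\|^6\le\frac{L^2\nu^2}{2}(d+6)^3$ via $|A|\le\frac{L\nu^2}{2}\|u\|^2$ and the $k=6$ moment; the $B$-term gives $\frac{2}{\nu^2}\E[B^2\|u\|^2]=2\,\E[\langle\nabla f(x),u\rangle^2\|u\|^2]$, which a direct Gaussian computation (rotate so $\nabla f(x)$ is axis-aligned, then use $\E[u_1^4]=3$ and $\E[u_1^2u_i^2]=1$ for $i\neq 1$) evaluates to $(d+2)\|\nabla f(x)\|^2\le(d+4)\|\nabla f(x)\|^2$. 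Summing the two contributions reproduces the claimed bound exactly; everything beyond the moment estimates and this $A/B$ bookkeeping is routine $L$-smoothness.
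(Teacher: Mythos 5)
Your proof is correct: the paper does not prove this lemma itself but imports it from \citet{nesterov2017random} and \citet{balasubramanian2019zeroth}, and your argument is essentially a faithful reconstruction of the standard proofs there (exact chi-squared moments plus Lyapunov interpolation for \eqref{eq:l2gauss}, the Gaussian-average representation of $\nabla f_\nu$ for (a)--(b), and the $2A^2+2B^2$ split of the finite difference for (c)). Your exact evaluation $\E[\langle g,u\rangle^2\|u\|^2]=(d+2)\|g\|^2$ is in fact slightly tighter than the $(d+4)\|g\|^2$ bound used in the references, so the stated inequality follows a fortiori.
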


\section{The HDO Algorithm}

\paragraph{Algorithm Description.} 
We now describe a decentralized optimization algorithm, designed to be executed by a population of $n$ nodes, interacting in pairs chosen uniformly at random as per our model. We assume that $n_1$ of the nodes have access to first-order estimators and $n_0$ of them have access to zeroth-order estimators, hence $n=n_1+n_0$. Two copies of the training data are distributed, once among the first-orders and once among the zeroth-orders. Thus, each first- and zeroth-order node has access to $\frac{1}{n_1}$, $\frac{1}{n_0}$ of the entire training data, respectively.  
We assume that each node $i$ has access to a local stochastic estimator of the gradient, which we denote by $G^i$, and maintains a model estimate $X^i$, as well as the global learning rate $\eta$. Without loss of generality, we assume that the models are initialized to the same randomly-chosen point. 
Specifically, upon every interaction, the interacting agents $i$ and $j$ perform the following steps:

\begin{algorithm}[h]
	\caption{HDO pseudocode for each interaction between randomy chosen nodes $i$ and $j$} \label{algo:hdo}
	~\tcp*[h]{Nodes perform local steps.}\\
        $X^i \gets X^i - \eta G^i(X^i)$; \\
        $X^j \gets X^j - \eta G^j(X^j)$; \\
        ~\tcp*[h]{Nodes average their local models.}\\
        ${avg} \gets (X^i + X^j) / 2$; \\
        $X^i \gets avg$; \\
        $X^j \gets avg$; \\
\end{algorithm}

In a nutshell, upon each interaction, each node first performs a local model update based on its estimator, and then nodes average their local models following the interaction. We do not distinguish between estimator types in this interaction. The nodes are then ready to proceed to the next round. 



\section{The Convergence of the HDO Algorithm}

This section is dedicated to proving that the following result

\begin{restatable}{theorem}{maintheorem} \label{thm:main}
Assume an objective function $f : \mathbb{R}^d \shortrightarrow \mathbb{R}$, equal to the average loss over all data samples, whose optimum $x^*$ we are trying to find using Algorithm~\ref{algo:hdo}. Let $n_0$ be the number of zeroth-order nodes, and $n_1$ be the number of first-order agents. Given the data split described in the previous section, let $f_i$ be the local objective function of node $i$. Assume that zeroth-order nodes use estimators with $\nu=\frac{\eta}{\sqrt{d}}$. Let the total number of steps in the algorithm $T$ and $\mu_t = \sum_{i = 1}^n X^i_t/n$, then we can derive the following convergence rates.

\paragraph{Non-Convex:} Under assumptions \ref{asmp:lipschitz}, \ref{asmp:global_variance} and \ref{asmp:unbiasedness_bounded_local_variance_of_F}, and letting $T$ be large enough such that $T = \Omega\left(\max\big\{\frac{L^2(dn_0 + n_1)^2}{dn^2}, n^2L^2, nn_0^3d\big\}\right)$, we have
\begin{align*}
    &\frac{1}{T}\sum_{t=0}^{T-1}\E\|\nabla f(\mu_t)\|^2 = \sqrt{\frac{d}{T}} \times O \Bigg( \big(f(\mu_0) - f^*\big)+ \\& L \big( \frac{dn_0\varsigma_0^2+n_1 \varsigma_1^2}{nd} \big)
    + L \big( \frac{d n_0\sigma_0^2+n_1\sigma_1^2}{nd} \big)
    + L^2\sqrt{\frac{n_0}{n}}\Bigg).
\end{align*}
Adding, convexity (Assumption \ref{asmp:strongly_convex}), we prove the following:
\paragraph{Strongly Convex:} If we assume that the functions $f$ and $f_i$ satisfy Assumptions \ref{asmp:strongly_convex}, \ref{asmp:lipschitz}, \ref{asmp:global_variance} and \ref{asmp:unbiasedness_bounded_local_variance_of_F}, and let $T$ be large enough such that $\frac{T}{\log T } = \Omega\left(\frac{n(d+n)(L+1)\left(\frac{1}{\ell}+1\right)}{\ell}\right)$, and let the learning rate be $\eta = \frac{4n\log T }{T \ell}$. For $1 \le t \le T$, let the sequence of weights $w_t$ be given by $w_t = \left(1-\frac{\eta \ell}{2n}\right)^{-t}$ and let $S_T = \sum_{t=1}^{T} w_T$. Finally, define $y_T=\sum_{t=1}^T \frac{w_t \mu_{t-1}}{S_T}$ to be the mean over local model parameters. Then, we can show that HDO provides the following convergence rate:
\begin{align*}
\E[&f(y_T) - f(x^*)]+\frac{\ell \E\|\mu_{T}-x^*\|^2}{8} \\&=
O\Bigg(\frac{L \|\mu_0-x^*\|^2}{T\log T } +\frac{\log (T) (d n_0\varsigma_0^2+n_1 \varsigma_1^2)}{T \ell n} \\&\quad\quad\quad+
\frac{\log(T)(d n_0\sigma_0^2+n_1 \sigma_1^2)}{T \ell n} +\frac{\log(T) d n_0}{T \ell n}\Bigg).
\end{align*}

\end{restatable}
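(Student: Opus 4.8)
The plan is to collapse the decentralized dynamics onto a single virtual iterate, the parameter mean $\mu_t=\frac1n\sum_i X^i_t$, analyze its progress with a potential argument, and separately bound how far the local models drift from $\mu_t$. First I would compute the one-step evolution of the mean. Because an interaction between a uniformly random pair $\{i,j\}$ replaces both $X^i,X^j$ by their post-gradient average, the sum $\sum_k X^k$ decreases by exactly $\eta(G^i(X^i_t)+G^j(X^j_t))$, so $\mu_{t+1}=\mu_t-\frac{\eta}{n}\big(G^i(X^i_t)+G^j(X^j_t)\big)$, and taking expectation over the random pair gives $\E[\mu_{t+1}\mid\mathcal F_t]=\mu_t-\frac{2\eta}{n^2}\sum_k G^k(X^k_t)$. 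Thus in fine-grained time the mean performs (stochastic) gradient descent on $f$ with effective step $\Theta(\eta/n)$ per step, i.e. $\Theta(\eta)$ per parallel round, which is the source of the speedup; under the balanced split of Assumption~\ref{asmp:global_variance} the averaged local gradients align with $\nabla f$ up to a constant factor. The two corrections I must carry through are that (a) the estimators are taken at the local models $X^k_t$ rather than at $\mu_t$, and (b) for $k\in N_0$ the estimator $G^k$ is unbiased only for the smoothed $\nabla f^k_\nu$ (see Definition~\ref{def: zeroth-order estimator} and \eqref{E(G_v)}), not $\nabla f^k$.

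Second, I would establish the key load-balancing bound on the consensus error $\Gamma_t:=\sum_i\|X^i_t-\mu_t\|^2$. Using the identity $\sum_{a<b}\|X^a_t-X^b_t\|^2=n\Gamma_t$ together with the parallelogram law, a single averaging step contracts $\Gamma_t$ in expectation by the factor $1-\tfrac{1}{n-1}$, while the two gradient steps inject an additive term of order $\eta^2$ times the squared estimator norms. Bounding the first-order second moments through Assumption~\ref{asmp:unbiasedness_bounded_local_variance_of_F} and the zeroth-order ones through Lemma~\ref{rand_smth_close_grad}(c), which supplies the dimension factor $d$ and a $\|\nabla f\|^2$ term, and summing the resulting geometric series, I expect $\E[\Gamma_t]=O\!\big(\eta^2(dn_0\sigma_0^2+n_1\sigma_1^2+dn_0\varsigma_0^2+n_1\varsigma_1^2+\cdots)\big)$, so the drift is second order in $\eta$. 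This contraction-versus-injection estimate is the technical heart of the argument.

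Third, for the non-convex case I would apply the $L$-smoothness descent inequality (Assumption~\ref{asmp:lipschitz}) to $f(\mu_t)$ along the update of the first step: the linear term yields $-\Theta(\tfrac{\eta}{n})\|\nabla f(\mu_t)\|^2$; smoothness converts the ``gradient-at-local-model'' discrepancy into a $\tfrac{L^2}{n}\E[\Gamma_t]$ term that is $O(\eta^3)$ by the second step; the zeroth-order bias is controlled by Lemma~\ref{rand_smth_close_grad}(b), and with the prescribed $\nu=\eta/\sqrt d$ its norm is $O(\eta L\sqrt d)$, contributing the $L^2\sqrt{n_0/n}$ term; and the quadratic term produces the variance contributions $\frac{dn_0\sigma_0^2+n_1\sigma_1^2}{nd}$ and $\frac{dn_0\varsigma_0^2+n_1\varsigma_1^2}{nd}$. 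Telescoping over $T$ and choosing $\eta=\Theta(\sqrt{d/T})$, admissible under the stated lower bound on $T$, gives the claimed $\sqrt{d/T}$ rate. For the strongly convex case I would instead run the same recursion on $\E\|\mu_t-x^*\|^2$, using Assumption~\ref{asmp:strongly_convex} to turn the inner product $\langle \mu_t-x^*,\nabla f(\mu_t)\rangle$ into the contraction $\E\|\mu_{t+1}-x^*\|^2\le(1-\tfrac{\eta\ell}{2n})\E\|\mu_t-x^*\|^2+(\text{per-step noise})$. Multiplying by $w_t=(1-\tfrac{\eta\ell}{2n})^{-t}$ telescopes the geometric decay, convexity bounds $f(y_T)-f(x^*)$ by the weighted average of the linear terms, and dividing by $S_T$ with $\eta=\frac{4n\log T}{T\ell}$ yields the $\frac{\log T}{T}$ terms, the final bias term $\frac{\log(T)dn_0}{T\ell n}$ arising from the zeroth-order $O(\eta^2 L^2 dn_0/n)$ contribution.

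The main obstacle I anticipate is the coupling between the load-balancing bound and the zeroth-order variance. Because Lemma~\ref{rand_smth_close_grad}(c) bounds the zeroth-order second moment by $\tfrac{\nu^2}{2}L^2(d+6)^3+2(d+4)\|\nabla f\|^2$ rather than by a constant, both the consensus-error recursion and the descent step carry a $\|\nabla f(\mu_t)\|^2$ coefficient scaled by $dn_0/n$. Closing the argument requires the learning rate, equivalently the lower bounds on $T$, to be small enough that this coefficient is strictly dominated by the negative descent term, so that the gradient terms are absorbed rather than accumulated. Verifying this absorption simultaneously for the mean recursion and the consensus recursion, while keeping the bias term $O\!\big(\eta^2(Ldn_0/n)^k\big)$ subordinate with $k=1$ in the convex and $k=2$ in the non-convex regime, is the delicate part of the proof.
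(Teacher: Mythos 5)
Your proposal follows essentially the same route as the paper: a recursion for the mean $\mu_t$ (the paper's $F_t$/$F'_t$ decomposition), a consensus potential $\Gamma_t$ with the $1-\Theta(1/n)$ averaging contraction versus $\eta^2$-scale gradient injection (Lemmas~\ref{lem:GammaBoundPerStepHelper} and~\ref{lem:h M^G}), absorption of the $\|\nabla f(\mu_t)\|^2$ (resp.\ $f(\mu_t)-f(x^*)$) terms that leak into the potential from the zeroth-order second moments, bias control via Lemma~\ref{rand_smth_close_grad}, and the same weighted telescoping with $w_t=(1-\frac{\eta\ell}{2n})^{-t}$ in the strongly convex case. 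The only deviations are minor calibration details (e.g.\ your $\eta=\Theta(\sqrt{d/T})$ versus the paper's $\eta=en/\sqrt{T}$ with $e=1/\sqrt{d}$, and the bias norm from Lemma~\ref{rand_smth_close_grad}(b) being $O(\eta L d)$ rather than $O(\eta L\sqrt{d})$ at $\nu=\eta/\sqrt{d}$), which do not change the argument.
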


\paragraph{Speedup.} 
Here, the time $T$ refers to the \emph{total number of interactions} among agents, as opposed to \emph{parallel time}, corresponding to the average number of interactions $T / n$. 
These rates are reminiscent of sequential SGD. However, there are some distinctions: we are counting the total number of gradient oracle queries by the nodes, and there are some additional trailing terms, whose meanings we discuss below. 

We interpret this formula from the perspective of an arbitrary local model. For this, notice that the notion of \emph{parallel time} corresponding to the number of total interactions $T$, which is by definition $T_p = T / n$,  corresponds (up to constants) to the \emph{average} number of interactions and gradient oracle queries performed by each node up to time $T$. 
Therefore, in the strongly convex setup, for any single model, convergence with respect to its number of performed SGD steps $T_p$ would be $O( \log(nT_p) / (n T_p))$ (assuming all parameters are constant), which would correspond to $\Omega(\frac{n}{\log(nT_p)})=
\Omega(\frac{n}{\log(T)})$ speedup compared to a variant of sequential SGD. 
Notice that this is quite favorable to our algorithm, since we are considering  \emph{biased} zeroth-order estimators for some of the nodes in the population. Hence, assuming that $T$ is polynomial in $n$, we get an almost-linear speedup of $\Omega\Big(\frac{n}{\log(n)}\Big)$. Similarly, in the non-convex case, we get a speedup $\Omega(\sqrt{n})$, which shows the scalability of our algorithm.

\paragraph{Impact of Zeroth-Order Nodes.}
Notice that our convergence bounds cleanly separate in the terms which come from zeroth-order nodes and terms which come from first-order nodes. For $n_0=0$, we get asymptotically the same bound as we would get if all nodes performed pure first-order SGD steps. Similarly, when $n_0=n$ we should be able to achieve asymptotically-optimal convergence for biased zeroth-order estimators. Further, notice that, if the bias is negligible, then the last term in each of the upper bounds disappears, and we obtain a trade-off between two populations with different variances. We can also observe the following theoretical threshold: we asymptotically match the convergence rate in the case with all nodes performing SGD steps, as long as $d n_0 = O(n)$ (assuming all other parameters are constant).

\subsection{Analysis}
As an example, we discuss the proof overview for the strongly convex case. The notations and proof steps are closely aligned with those used in the non-convex case.
\paragraph{Proof Overview.} 
The convergence proof, given in full in the Appendix, can be split conceptually into two steps. 
The first aims to bound the variance of the local models $X^i_t$ for each time step $t$ and node $i$ with respect to the mean $\mu_t = \sum_i X^i_t$. It views this variance as a potential $\Gamma_t$, which as we show has supermartingale-like behavior for small enough learning rate: specifically, this quantity tends to increase due to gradient steps, but is pushed towards the mean $\mu_t$ by the averaging process. 

The key component here is Lemma~\ref{lem:GammaBoundPerStepHelper}, which carefully bounds the evolution of the potential at a step, by modeling optimization as a dynamic load balancing process: each interaction corresponds to a \emph{weight generation} step (in which gradient estimators are generated) and a \emph{load balancing step}, in which the ``loads'' of the two nodes (corresponding to their model values) are balanced through averaging. 

In the second step, we first bound the rate at which the mean $\mu_t$ converges towards $x^*$, where we crucially (and carefully) leverage the variance bound obtained above. The main challenge in this part is dealing with biased zeroth-order estimators. In fact, even dealing with biased first-order estimators is not trivial, since for example, they are the main reason for the usage of error feedback when stochastic gradients are compressed using biased quantization~\cite{TopK}.
This is our second key technical result. 

With this in hand, we can complete the proof by applying a standard argument which characterizes the rate at which $\E[f(y_T)-f(x^*)]$ and $\E[\|\mu_t-x^*\|^2$ converge towards $0$.

\paragraph{Notation and Preliminaries.} In this section, we provide a more in-depth sketch of the analysis of the HDO protocol. 
We begin with some notation. 
Recall that $n$ is the number of nodes, split into first-order ($n_1$) and zeroth-order ($n_0$). 
We will analyze a sequence of \emph{time steps} $t = 1, 2,\ldots, T$, each corresponding to an individual interaction between two nodes, which are usually denoted by $i$ and $j$. 

\textbf{Step 1: Parameter Concentration.} Next, let $X_t$ be a vector of model estimates at time step $t$, that is $X_t=(X_t^1, X_t^2, ..., X_t^n)$.
Also, let $\mu_t=\frac{1}{n} \sum\limits_{i=1}^n X_t^i$, be an average estimate at time step $t$. The following potential function measures the variance of the models: 
\begin{equation*}
\Gamma_t=\frac{1}{n} \sum_{i=1}^n \|X_t^i-\mu_t \|^2.
\end{equation*}


With this in place, one of our key technical results is to provide a supermartingale-type bound on the evolution of the potential $\Gamma_t$, in terms $\eta$, and average second moment of estimators at step $t$, defined as 
$M_t^G := \frac{1}{n}\sum_i \big\|G^i(X_t^i)\big\|^2$.

\begin{restatable} {lemma}{GammaBoundPerStepHelper} 
 \label{lem:GammaBoundPerStepHelper}
For any time step $t$ :
\begin{align*}
\E\big[ \Gamma_{t+1} \big] \leq \big( 1 - \frac{1}{2n}\big)\E\big[ \Gamma_t \big] + \frac{4}{n}\eta^2\E\big[M_t^G\big].
\end{align*}
\end{restatable}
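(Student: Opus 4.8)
The plan is to track the unnormalized potential $\Phi_t := n\Gamma_t = \sum_{i=1}^n \|X_t^i - \mu_t\|^2$, establish $\E[\Phi_{t+1}] \le (1 - \frac{1}{2n})\Phi_t + 4\eta^2 M_t^G$ conditioned on the configuration $X_t$, and then divide by $n$. Throughout I condition on $X_t$ and take expectation over the uniformly random interacting pair $\{i,j\}$ together with the freshly sampled estimators $G^k := G^k(X_t^k)$; note that this lemma involves only second moments, so it is insensitive to whether a node is first- or zeroth-order (matching the fact that the algorithm treats the estimator types identically). Following the load-balancing viewpoint, I split each interaction into a \emph{weight-generation} (gradient) substep producing $\tilde X^k = X_t^k - \eta G^k$ for $k\in\{i,j\}$ and $\tilde X^k = X_t^k$ otherwise, followed by a \emph{load-balancing} (averaging) substep replacing $\tilde X^i,\tilde X^j$ by their common mean.

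For the averaging substep I would use the elementary identity that merging two vectors into their average leaves the global mean unchanged and decreases $\sum_k\|\cdot-\text{mean}\|^2$ by exactly $\tfrac12\|\tilde X^i - \tilde X^j\|^2$; hence $\Phi_{t+1} = \tilde\Phi - \tfrac12\|\tilde X^i - \tilde X^j\|^2$, where $\tilde\Phi$ is the potential right after the gradient substep. For the gradient substep I would write $\tilde X^k - \tilde\mu = (X_t^k - \mu_t) + \delta^k$, where $\delta^k$ is the per-node displacement minus the mean displacement $-\tfrac{\eta}{n}(G^i+G^j)$. Since $\sum_k\delta^k=0$ and $\delta^k$ equals the common vector $\tfrac{\eta}{n}(G^i+G^j)$ for every $k\notin\{i,j\}$, the constant part dots against $\sum_k(X_t^k-\mu_t)=0$ and the cross term reduces to $-2\eta\langle X_t^i - \mu_t, G^i\rangle -2\eta\langle X_t^j - \mu_t, G^j\rangle$, while $\sum_k\|\delta^k\|^2 \le \sum_k\|\tilde X^k - X_t^k\|^2 = \eta^2(\|G^i\|^2+\|G^j\|^2)$ (variance at most second moment).

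Next I would take expectation over the random pair. Three facts drive this: each node lies in the chosen pair with probability $2/n$; $\sum_{i\neq j}\|X_t^i - X_t^j\|^2 = 2n\Phi_t$; and $\sum_{i\neq j}\langle X_t^i - X_t^j,\, G^i - G^j\rangle = 2n\sum_k\langle X_t^k - \mu_t, G^k\rangle$. The first turns the gradient-substep contributions into $\frac2n$ times node-sums; applied to the averaging term the second yields the genuine contraction $-\frac{1}{n-1}\Phi_t$ (plus a nonpositive $-\frac{\eta^2}{2n(n-1)}\sum_{i\neq j}\|G^i-G^j\|^2$ that I simply drop). Collecting the inner products from both substeps, the cross terms do \emph{not} cancel: they combine into $-\frac{2\eta(n-2)}{n(n-1)}\sum_k\langle X_t^k - \mu_t, G^k\rangle$.

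This surviving, sign-indefinite cross term is the main obstacle. I would control it with Young's inequality $|\langle X_t^k - \mu_t, G^k\rangle| \le \frac{\alpha}{2}\|X_t^k - \mu_t\|^2 + \frac{1}{2\alpha}\|G^k\|^2$, choosing the single free parameter $\alpha$ so that the induced $\Phi_t$ contribution exactly fills the gap between the contraction $-\frac{1}{n-1}\Phi_t$ and the target $-\frac{1}{2n}\Phi_t$, i.e. $\frac{\eta(n-2)\alpha}{n(n-1)} \le \frac{n+1}{2n(n-1)}$. With that choice, and recalling $\sum_k\|G^k\|^2 = n M_t^G$, the leftover $\|G^k\|^2$ terms from Young together with the $\frac{2\eta^2}{n}\sum_k\|G^k\|^2$ from the gradient substep add to at most $4\eta^2 M_t^G$ after bounding the coefficients using $n\ge 2$. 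This gives $\E[\Phi_{t+1}] \le (1 - \frac{1}{2n})\Phi_t + 4\eta^2 M_t^G$; dividing by $n$ and taking the outer expectation over $X_t$ and the gradients yields the claim. The delicate point is tuning $\alpha$ so that \emph{both} the contraction rate lands at exactly $1 - \frac{1}{2n}$ and the noise constant lands at exactly $\frac4n$.
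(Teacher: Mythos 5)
Your proposal is correct and follows essentially the same route as the paper's proof: both isolate the averaging contraction $-\tfrac{1}{n-1}\Phi_t$ (the paper's $P_1$ term), arrive at the identical surviving cross term $-\tfrac{2\eta(n-2)}{n(n-1)}\sum_k\langle X_t^k-\mu_t, G^k\rangle$ (the paper's $P_2$), and dispose of it with Young's inequality to land on the $\big(1-\tfrac{1}{2n}\big)$ contraction and $\tfrac{4}{n}\eta^2 M_t^G$ noise term. Your explicit two-substep (weight-generation / load-balancing) organization is just a cleaner packaging of the same computation the paper performs in one expanded display, and your tuning of $\alpha$ is a minor cosmetic variation on the paper's fixed Young split.
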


Notice that, if we had a universal second moment bound on the estimators,
that is, for any vector $X$ and node $i$ $\E\big\|G^i(X)\big\|^2 \le M$, for some $M > 0$, then we would be able to unroll the recursion, and,  
for any $t \ge 0$ upper bound $E[\Gamma_t]$ by $\eta^2 M^2$.
In the absence of such upper bound we must derive the following upper bound on $\E\big[M_t^G\big]$:

\begin{restatable}{lemma}{MGBound} \label{lem:h M^G}
Assume $\nu: =\frac{\eta}{c}$ is fixed, where $\eta$ and $c$ are the learning rate and a constant respectively. Then, for any time step t we have:  
\begin{align*}
\E\big[ M_t^G\big] &\leq 
6(d+4)L^2 \E[\Gamma_t] + \frac{6(d+4)n_0\varsigma_0^2+3n_1 \varsigma_1^2}{n}\\&+6(2d+9)L\E[f(\mu_t)-f(x^*)] \\&+ \frac{2(d+4) \sum_{i \in N_0} s_i^2+\sum_{i \in N_1} s_i^2}{n} \\&+ \eta^2 \frac{n_0}{2nc^2}L^2(d+6)^3.
\end{align*}
\end{restatable}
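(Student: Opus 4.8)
The plan is to bound $\E[M_t^G]=\frac1n\sum_{i\in N_0}\E\|G^i(X_t^i)\|^2+\frac1n\sum_{i\in N_1}\E\|G^i(X_t^i)\|^2$ by treating the two populations separately and then recombining. The common engine for both is a three-way split of the local gradient norm around the mean,
\[
\|\nabla f^i(X_t^i)\|^2 \le 3L^2\|X_t^i-\mu_t\|^2 + 3\|\nabla f^i(\mu_t)-\nabla f(\mu_t)\|^2 + 3\|\nabla f(\mu_t)\|^2 ,
\]
where the first piece is controlled by $L$-Lipschitzness (Assumption~\ref{asmp:lipschitz}), the second is controlled on average by $\varsigma_0^2$ / $\varsigma_1^2$ (Assumption~\ref{asmp:global_variance}), and the third by the smoothness consequence $\|\nabla f(\mu_t)\|^2 \le 2L(f(\mu_t)-f^*)$, which holds for any $L$-smooth $f$ with global minimizer $x^*$.

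For the first-order nodes I would first split mean from variance via Assumption~\ref{asmp:unbiasedness_bounded_local_variance_of_F}, giving $\E_{\xi^i}\|\nabla F^i(X_t^i,\xi^i)\|^2 \le \|\nabla f^i(X_t^i)\|^2 + s_i^2$, and then substitute the three-way split. Averaging over $i\in N_1$ and using $\frac1n\sum_{i\in N_1}\|X_t^i-\mu_t\|^2 \le \Gamma_t$ yields the contributions $3L^2\Gamma_t$, $\tfrac{3n_1}{n}\varsigma_1^2$, $\tfrac{6n_1}{n}L\,(f(\mu_t)-f^*)$, and $\tfrac1n\sum_{i\in N_1}s_i^2$, matching the first-order pieces of the target (note there is no dimension factor here).

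For the zeroth-order nodes, the key step is an expectation-over-$\xi^i$ version of Lemma~\ref{rand_smth_close_grad}(c). Applying part (c) to the deterministic, $L$-smooth function $F^i(\cdot,\xi^i)$ for each fixed sample gives $\E_u\|G^i_\nu(X_t^i,u,\xi^i)\|^2 \le \tfrac{\nu^2}{2}L^2(d+6)^3 + 2(d+4)\|\nabla F^i(X_t^i,\xi^i)\|^2$; then taking $\E_{\xi^i}$ and invoking Assumption~\ref{asmp:unbiasedness_bounded_local_variance_of_F} replaces the last term by $2(d+4)\big(\|\nabla f^i(X_t^i)\|^2 + s_i^2\big)$. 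Since $\nu=\eta/c$, the first term averages over $N_0$ to exactly $\eta^2\tfrac{n_0}{2nc^2}L^2(d+6)^3$, and feeding the three-way split into $2(d+4)\|\nabla f^i(X_t^i)\|^2$ produces $6(d+4)L^2\cdot\frac1n\sum_{i\in N_0}\|X_t^i-\mu_t\|^2$, $\tfrac{6(d+4)n_0}{n}\varsigma_0^2$, $\tfrac{12(d+4)n_0}{n}L\,(f(\mu_t)-f^*)$, and $\tfrac{2(d+4)}{n}\sum_{i\in N_0}s_i^2$.

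Finally I would sum the two populations. The $\Gamma_t$ coefficients collapse because $\frac1n\sum_{i\in N_0}\|X_t^i-\mu_t\|^2$ and $\frac1n\sum_{i\in N_1}\|X_t^i-\mu_t\|^2$ are disjoint and add to $\Gamma_t$; bounding both under the larger factor $6(d+4)L^2\ge 3L^2$ gives the stated $6(d+4)L^2\E[\Gamma_t]$. The suboptimality coefficient $\tfrac{12(d+4)n_0+6n_1}{n}L$ is a convex combination of $12d+48$ and $6$, hence at most $12d+48 \le 6(2d+9)$, recovering the stated constant; the $\varsigma$-terms and $s_i$-terms then match termwise, and taking the outer expectation finishes the bound. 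I expect the main obstacle to be the zeroth-order second-moment step: one must apply Lemma~\ref{rand_smth_close_grad}(c) in its per-sample stochastic form and verify that the dimension-dependent factor $2(d+4)$ attaches \emph{only} to the zeroth-order gradient norms, so that the dimensional scaling lands correctly ($6(d+4)L^2$ on $\Gamma_t$, $6(2d+9)L$ on $f(\mu_t)-f^*$, and the factor $(d+4)$ inside the $\varsigma_0$- and $s_i$-terms for $i\in N_0$ but not for $i\in N_1$).
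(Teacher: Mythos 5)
Your proposal is correct and follows essentially the same route as the paper: the paper likewise applies the per-sample second-moment bound $\E_{u,\xi^i}\|G^i_\nu\|^2 \le \tfrac{\nu^2}{2}L^2(d+6)^3 + 2(d+4)[\|\nabla f^i(X_t^i)\|^2+s_i^2]$ to the zeroth-order nodes (its Lemma~\ref{lem:Gv second moment upper bound}), the plain mean-plus-variance bound to the first-order nodes, and then the same three-way decomposition of $\|\nabla f^i(X_t^i)\|^2$ packaged as the weighted quantity $M_t^f(2(d+4),1)$ in Lemma~\ref{lem:h M^f}, with the coefficients collapsed exactly as you describe ($6(d+4)L^2$ on $\Gamma_t$ and $6(2(d+4)+1)L=6(2d+9)L$ on the suboptimality). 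No gaps.
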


First,  we check how this upper bound affects the upper bound given by Lemma \ref{lem:GammaBoundPerStepHelper}.
For small enough $\eta$, the term containing $\E[\Gamma_t]$ (which comes from the upper bound on $\E\big[ M_t^G\big]$) can be upper bounded by $\frac{1}{4n} \E[\Gamma_t]$, and hence it will just change the factor in front of 
$\E[\Gamma_t]$ to $(1-1/4n)$.

Second, since the above bound contains the term with $\E[f(\mu_t)-f(x^*)]$
we are not able to bound the potential $\Gamma$ per step, instead, 
for weights $w_t = (1-\frac{\eta \ell}{2n})^{-t}$, we can upper bound 
$\sum_{t=1}^{T} w_t \E[\Gamma_{t-1}]$ (please see Lemma \ref{lem:h sum_gamma} in the Appendix). The crucial property is that  the upper bound on the weighted sum $\sum_{t=1}^{T} w_t \E[\Gamma_{t-1}]$, is  
$$O(\eta^2 \sum_{t=1}^{T} w_t \E[f(\mu_{t-1})-f(x^*)])+\sum_{t=1}^{T} w_t O(\eta^2).$$
(for simplicity, above we assumed that all other parameters are constant.)

\paragraph{Step 2: Convergence of the Mean and Risk Bound.}  
The above result allows us to characterize how well the individual parameters are concentrated around their mean. In turn, this will allow us to provide a recurrence for how fast the parameter average is moving towards the optimum.
To help with the intuition, we provide the lemma which is simplified  version  of the one given in the additional material:

\begin{lemma} \label{lem:supmartingale_v2}
For small enough $\eta$  and $t \ge 1$ we have that:
\begin{align*}
\E \Big\| \mu_{t}  -x^* \Big \|^2 &\le (1-\frac{\ell\eta}{2n})\E\|\mu_{t-1}-x^*\|^2 \\&- \Omega(\frac{\eta}{n}) \E\big[f(\mu_{t-1})-f(x^*)\big]
\\&+ O\left( \frac{\eta}{n}\right)\E[\Gamma_{t-1}]+O(\frac{\eta^2}{n^2}).
\end{align*}
\end{lemma}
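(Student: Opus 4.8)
The plan is to track how a single averaging-plus-gradient interaction moves the global mean $\mu_t$. The crucial structural observation is that the averaging step $X^i,X^j \gets (X^i+X^j)/2$ leaves the sum $\sum_k X^k$ (hence $\mu_t$) unchanged, so only the two local gradient steps affect the mean: if the interacting pair is $(i,j)$, then $\mu_t = \mu_{t-1} - \frac{\eta}{n}\big(G^i(X^i_{t-1}) + G^j(X^j_{t-1})\big)$. Since the pair is uniform and each node lies in the chosen pair with probability $2/n$, conditioning on the history $\mathcal F_{t-1}$ and taking expectation over the pair and the fresh estimator samples gives $\E[\mu_t - \mu_{t-1} \mid \mathcal F_{t-1}] = -\frac{2\eta}{n^2}\sum_{k=1}^n \E[G^k(X^k_{t-1})]$, where $\E[G^k(X^k)] = \nabla f^k(X^k)$ for first-order nodes and $\E[G^k(X^k)] = \nabla f^k_\nu(X^k)$ for zeroth-order nodes by \eqref{E(G_v)}.

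Next I would expand $\|\mu_t - x^*\|^2 = \|\mu_{t-1}-x^*\|^2 + 2\langle \mu_{t-1}-x^*, \mu_t-\mu_{t-1}\rangle + \|\mu_t-\mu_{t-1}\|^2$ and bound the two non-trivial terms. For the cross term I would first replace the per-node arguments by the mean, writing $\sum_k \nabla f^k(X^k) = \sum_k \nabla f^k(\mu_{t-1}) + E$ with $\|E\|^2 \le n^2 L^2 \Gamma_{t-1}$ by $L$-smoothness (Assumption~\ref{asmp:lipschitz}), and collecting the zeroth-order bias $B = \sum_{k\in N_0}(\nabla f^k_\nu - \nabla f^k)(X^k)$, whose per-node norm Lemma~\ref{smth_approx}(b) bounds by $\frac{\nu}{2}L(d+3)^{3/2}$. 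Because the data is duplicated across the two populations, the uniform-average objective $\frac1n\sum_k f^k$ coincides with $f$ up to a constant factor, so $\sum_k \nabla f^k(\mu_{t-1})$ is proportional to $\nabla f(\mu_{t-1})$; applying the standard split $\langle \nabla f(\mu), \mu-x^*\rangle \ge \frac{\ell}{2}\|\mu-x^*\|^2 + \frac12(f(\mu)-f^*)$ (Assumption~\ref{asmp:strongly_convex}) yields the contraction $-\frac{\ell\eta}{n}\|\mu_{t-1}-x^*\|^2$ and the descent $-\frac{\eta}{n}(f(\mu_{t-1})-f^*)$. I would then absorb the $E$ and $B$ cross terms via Young's inequality, spending half the contraction budget so that the net coefficient of $\|\mu_{t-1}-x^*\|^2$ becomes $1-\frac{\ell\eta}{2n}$; the leftover $\|E\|^2$ piece produces the $O(\frac\eta n)\E[\Gamma_{t-1}]$ term, and, using $\nu = \eta/\sqrt d$, the $\|B\|^2$ piece is of order $\eta^3/n^3$ and folds into $O(\eta^2/n^2)$.

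For the second-moment term I would use $\|\mu_t-\mu_{t-1}\|^2 = \frac{\eta^2}{n^2}\|G^i+G^j\|^2$ and average over the pair to get $\le \frac{4\eta^2}{n^2}\E[M_{t-1}^G]$, then invoke Lemma~\ref{lem:h M^G}. The resulting $\E[\Gamma_{t-1}]$ and $\E[f(\mu_{t-1})-f^*]$ contributions carry an extra $\eta^2$ factor, so for small enough $\eta$ they are dominated by the $O(\frac\eta n)\Gamma$ term and by the negative descent term respectively (shrinking its constant but keeping it $\Omega(\eta/n)$); the remaining variance and smoothing contributions of Lemma~\ref{lem:h M^G} are all $O(\eta^2/n^2)$ once the data and estimator parameters are treated as constants. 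Collecting everything gives the claimed recurrence.

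The main obstacle is the biased zeroth-order term $B$: unlike the estimator variance, it does not vanish in expectation, so the argument hinges on the choice $\nu = \eta/\sqrt d$ making the bias enter only at order $\eta^2/n^2$ or lower, and on carefully partitioning the strong-convexity contraction budget between the genuine contraction and the Young-absorption of the deviation and bias cross terms — it is precisely this bookkeeping that produces the exact $\frac{\ell\eta}{2n}$ rather than $\frac{\ell\eta}{n}$. A secondary subtlety is the mismatch between the uniform per-node weighting induced by random pairing and the population-dependent weighting of $f$, which is reconciled by the data-duplication identity.
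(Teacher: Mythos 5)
Your argument is correct and reaches the stated recursion, but it takes a somewhat different route from the paper's. The paper centers the expansion at the idealized full-population step $F'_t = \frac{2\eta}{n^2}\sum_i \nabla f^i(X_t^i)$, writing $\|\mu_{t+1}-x^*\|^2 = \|\mu_t - x^* - F'_t\|^2 + \|F'_t - F_t\|^2 + 2\langle \mu_t - x^* - F'_t,\, F'_t - F_t\rangle$; this is engineered so that the quadratic term is a \emph{variance} (bounded via $(\bar\sigma_t)^2$ and Lemma~\ref{lem:h average variance upper bound}) and the cross term, after conditioning, isolates \emph{exactly} the estimator bias $\|\E_t(F'_t-F_t)\| \le \frac{2\eta}{n}B$, which is then absorbed by Young/Jensen. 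You instead expand directly around $\mu_{t-1}$, fold the bias into the conditional drift and absorb it (together with the consensus deviation $E$) by Young against the strong-convexity contraction, and pay a raw second moment $\frac{4\eta^2}{n^2}\E[M^G_{t-1}]$ controlled by Lemma~\ref{lem:h M^G} --- this is essentially the paper's \emph{non-convex} bookkeeping (Lemmas~\ref{lem: non-convex avg movement upper bound}--\ref{lem: non-convex mu_t difference}) transplanted to the strongly convex case. For this $O/\Omega$-level lemma the two are equivalent: the raw second moment costs you only an extra $O(\frac{\eta^2 dL}{n^2})\E[f(\mu_{t-1})-f^*]$, which you correctly absorb into the descent term for small $\eta$, and your bias contribution $O(\eta^3 n_0^2 L^2 d^2/(\ell n^3))$ sits inside $O(\eta^2/n^2)$ just as the paper's does. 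Two minor points: your claim that $\frac1n\sum_k \nabla f^k$ is proportional to $\nabla f$ does rely on the data-duplication convention (the paper silently uses the same identity in its bounds on $R_1$ and $R_2$), and your strong-convexity split follows from Assumption~\ref{asmp:strongly_convex} only together with convexity of $f$, a liberty the paper also takes. Neither is a gap relative to the paper's own treatment.
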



Note that $O$ and $\Omega$ hide all other parameters (we assume that all other parameters are constant).
As mentioned,  \emph{the main challenge in the proof of this lemma
is taking care of biased zeroth-order estimators}. 

Recall that $w_t = (1-\frac{\eta \ell}{2n})^{-t}$, by definition. 
We proceed by multiplying both sides of the above inequality by $w_{t}$
and then summing it up for $1 \le t \le T$. 
Then, once we plug the upper bound on $\sum_{t=1}^{T} w_t \E[\Gamma_{t-1}]$, for small enough $\eta$ the term  $O(\frac{\eta}{n})O(\eta^2 \sum_{t=1}^{T} w_t \E[f(\mu_{t-1}-f(x^*)])$ vanishes as it is dominated by the term $-\sum_{t=1}^T \Omega(\frac{\eta}{n}) \E\big[f(\mu_{t-1})-f(x^*)\big]$.

We get the final convergence bound after some simple calculations involving division of both sides by $S_T=\sum_{t=1}^T w_t$, and using $\eta=\frac{4n log(T)}{T \ell}$  together with the upper bound on $T$
(in turn, this makes sure that $\eta$ is small enough, so that all upper bounds we mentioned hold).

\section{Experimental Results}

\paragraph{Experimental Setup and Goals.}
In this section, we validate our results by simulating the HDO algorithm under different conditions, including varying the number of nodes, the ratios between first-order (FO) and zeroth-order (ZO) nodes, and the "strength" of the zeroth-order gradient estimators. Our focus is on the algorithm's convergence behavior, relative to the total number of optimization steps, which we measure by tracking the loss over time or the accuracy on the hidden validation set.
Our goal is to determine whether a hybrid system, combining both FO and ZO agents, can achieve better convergence compared to a system that relies on a single type of agent. Additionally, we aim to demonstrate the convergence speed, in terms of training loss at a fixed node, for different sizes of \emph{mono-type populations}, each consisting of only one type of estimator.

In the implementation of HDO, each step involves nodes updating their models, followed by the formation of $O(n)$ random disjoint pairs. Each pair exchanges their models and replaces their own with the averaged model. To demonstrate the effectiveness of zeroth-order nodes under reasonable parameter settings, we evaluate the mean validation loss and accuracy across all nodes and analyze the consensus of the models by measuring the standard deviation of losses. Additional details on the models, datasets, and the complete experimental setup are provided in the Appendix.


\begin{figure}[t]
\centering
\includegraphics[width=1.0\columnwidth]{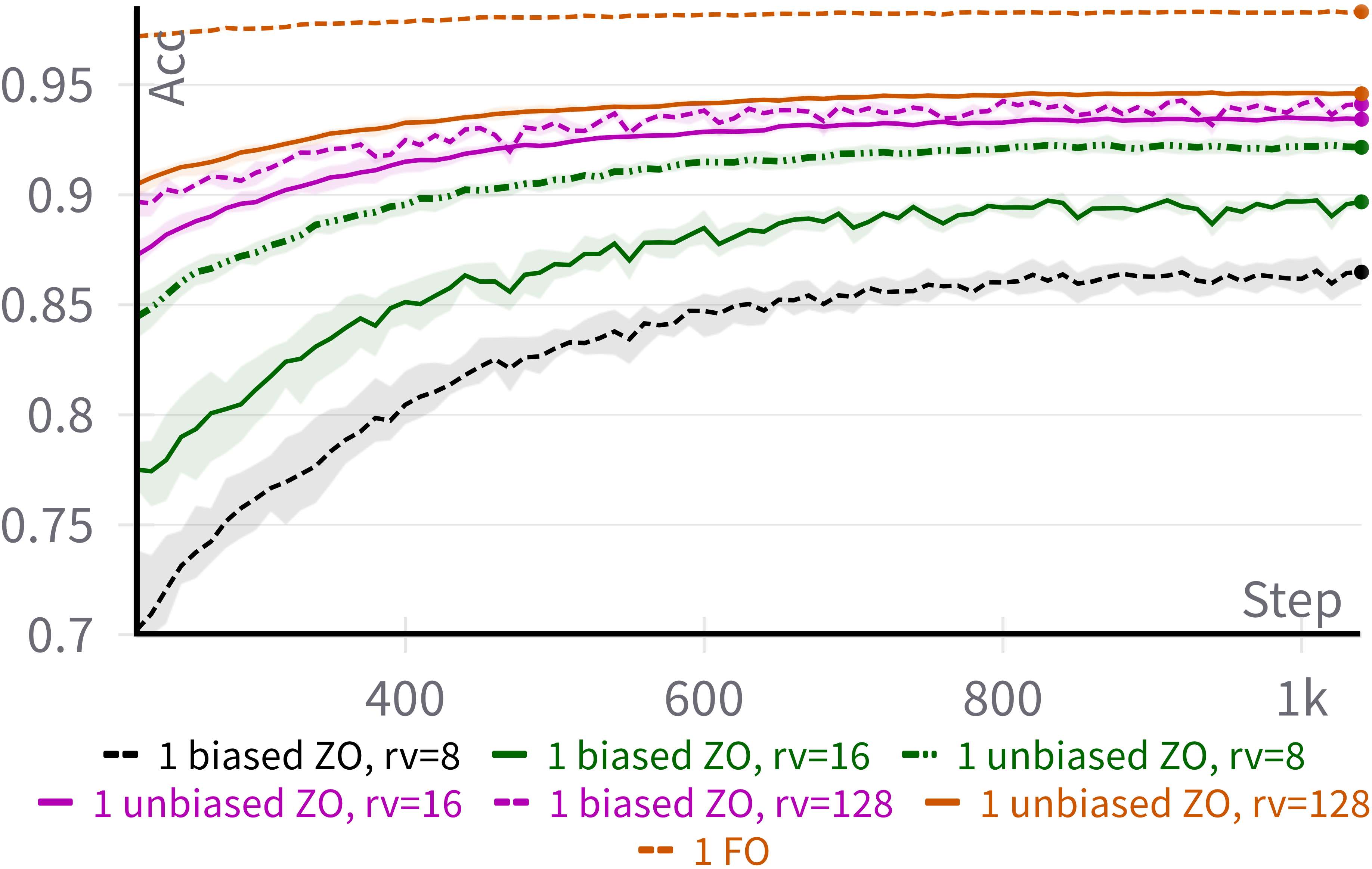} 
\caption{Number of random vectors (rv) impact on the biased/unbiased ZO estimators Accuracy (Acc), using a CNN model on MNIST.}
\label{fig:cnn_acc}
\end{figure}

\begin{figure}[t]
\centering
\includegraphics[width=1.0\columnwidth]{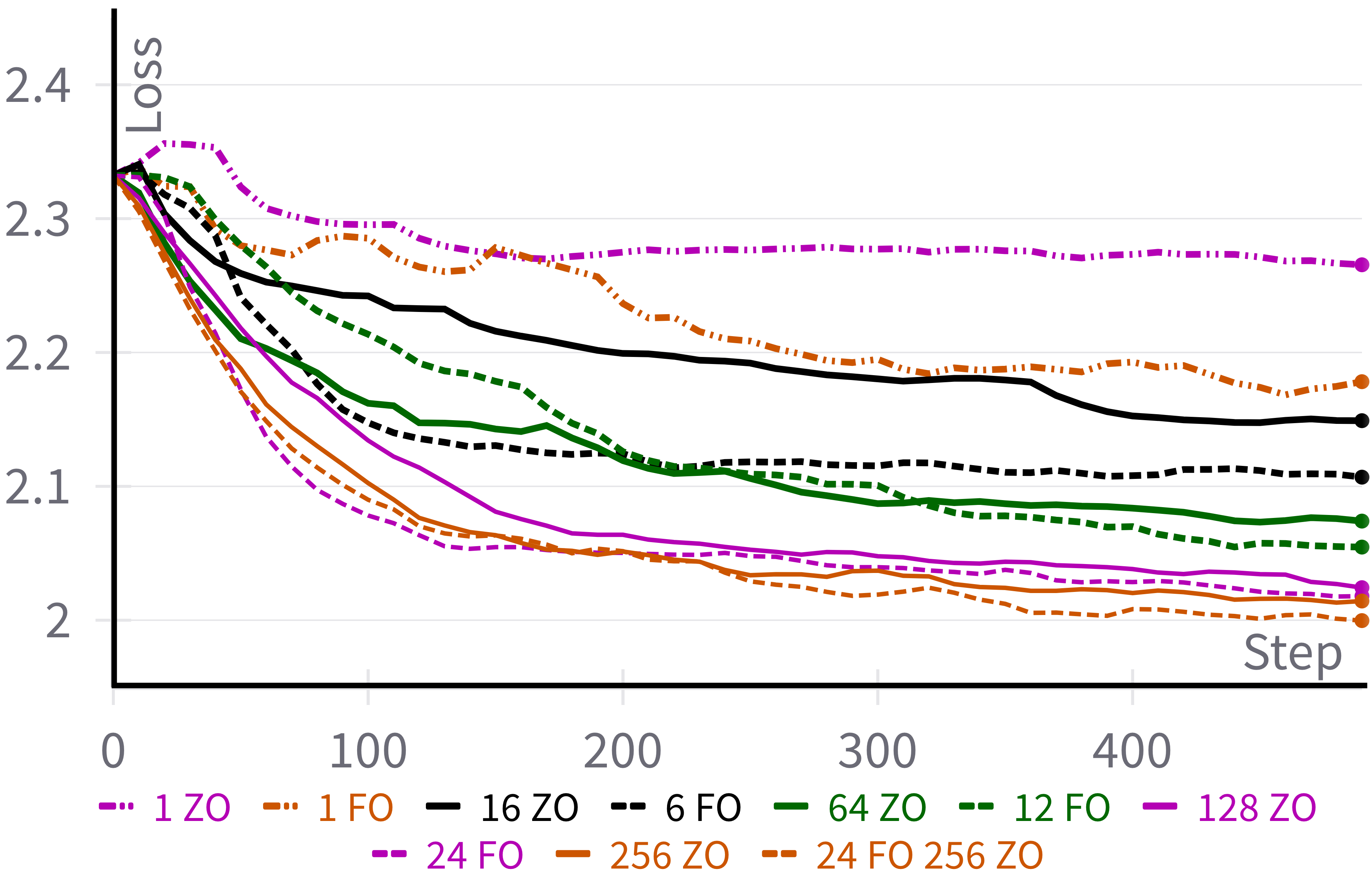} 

\caption{Validation loss vs. various population configurations for regression model on MNIST.}
\label{fig:logistic_loss}
\end{figure}

\begin{figure}[t]
\centering
\includegraphics[width=1.0\columnwidth]{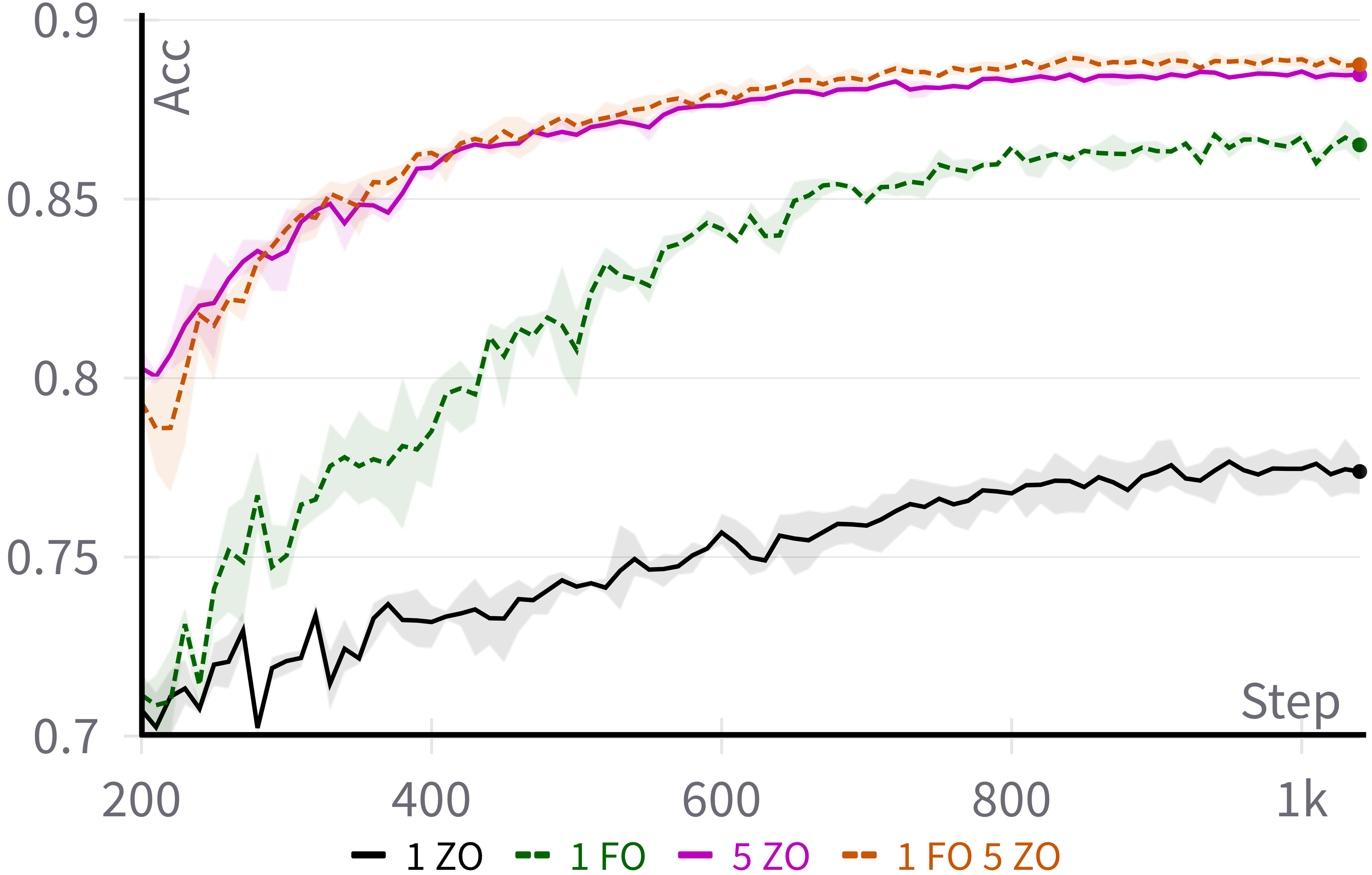} 
\caption{Validation accuracy (Acc) comparison between the hybrid and mono-type estimator population for ResNet-18 on the CIFAR-10 dataset.}
\label{fig:resnet_acc}
\end{figure}

\begin{figure}[t]
\centering
\includegraphics[width=1.0\columnwidth]{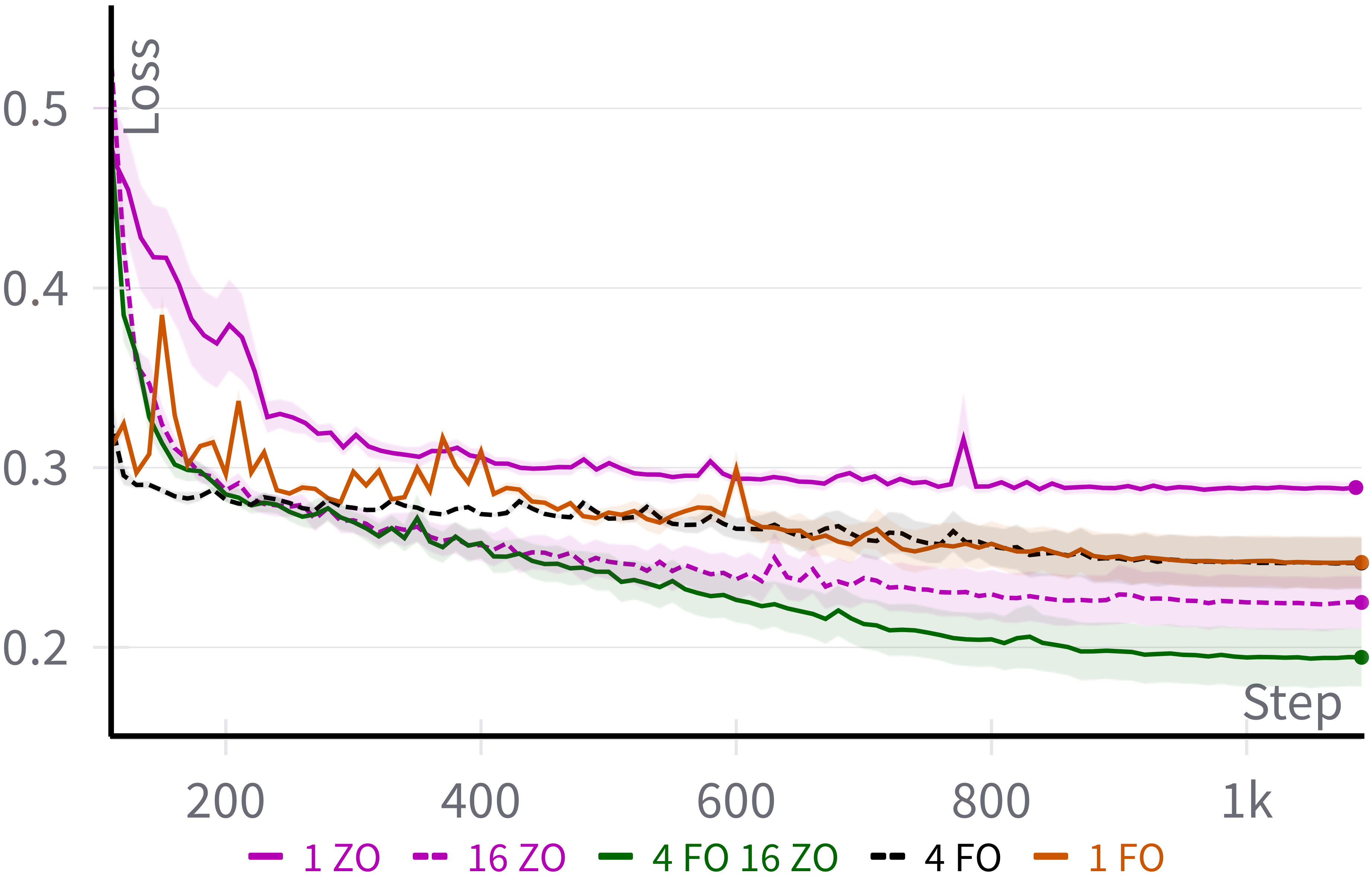}
\caption{Validation loss comparison between the hybrid and mono-type estimator population for transformer model on the synthetic Brackets dataset.}
\label{fig:transformer_loss}
\end{figure}

\paragraph{Results.} At first, we examine the performance of individual zeroth-order gradient estimators (with $\nu=10^{-4}$) over time, as a function of the number of random vectors (rv) used for the gradient estimation (Figure~\ref{fig:cnn_acc}); that is the number of $u$'s used to estimate the gradient using the equation~\ref{E(G_v)}. To do so, we use the MNIST classification task~\cite{deng2012mnist} using a Convolutional Neural Network (CNN)~\cite{726791} model. We choose values $8$, $16$, and $128$ for the number of random vectors, and compare against the unbiased forward-only estimator recently proposed by~\cite{baydin2022forwardmode}.  The results clearly demonstrate an accuracy-versus-steps advantage for a higher number of random vectors and for unbiased zeroth-order estimators compared to biased ones. Since the computational overhead of unbiasing estimators is relatively low~\cite{Chen2020unbiased}, we will use unbiased zeroth-order estimators in the subsequent experiments. A detailed explanation and experimental evaluation of the accuracy-efficiency trade-offs associated with the number of random vectors will be provided in the Appendix.


For the next set of experiments, we evaluate different \emph{mono-type populations} of ZO and FO optimizers and compare their performance with that of a hybrid population. In Figure~\ref{fig:resnet_acc}, we fine-tune the ResNet-18~\cite{he2015deepresiduallearningimage} model, pre-trained with ImageNet-1K~\cite{ILSVRC15}, on CIFAR-10~\cite{Krizhevsky2009LearningML} using populations of 1 ZO, 1 FO, 5 ZO, and a hybrid system consisting of 1 FO and 5 ZO nodes. To demonstrate the scalability of HDO in both convex and non-convex scenarios, we consider two settings: for the convex case, we assess the performance of a Logistic model on the MNIST dataset with various mono-type populations and a hybrid configuration of 24 FO and 256 ZO (Figure~\ref{fig:logistic_loss}). For the non-convex case, we use a Transformer model~\cite{vaswani2023attentionneed} on ``Brackets" dataset~\cite{ebrahimi2020selfattentionnetworksrecognizedyckn} (see the Appendix for more details). The populations that we study here are 1 ZO, 1 FO, 4 FO, 16 ZO, and a hybrid combination of 4 FO and 16 ZO (Figure~\ref{fig:transformer_loss}).

The results presented in Figure~\ref{fig:logistic_loss} (for the convex case), and Figures~\ref{fig:resnet_acc}~\&~\ref{fig:transformer_loss} (for the non-convex case) confirm the intuition, as well as our analysis; first-order nodes always outperform the same or lower number of zeroth-order ones, as shown in all the figures. However, zeroth-order nodes can in fact outperform first-order ones if their number is larger. We can conclude that 1) a larger population of ZO nodes can outperform smaller populations of FO or ZO nodes; and that 2) this larger uniform population is itself outperformed by a hybrid population.

Our experiments demonstrate the desired speedup and scalability; populations with a large number of ZO nodes have faster convergence. Interestingly, aligned with the Theorem~\ref{thm:main} that the speedup caused by the ZO nodes will appear after sufficiently large $T$, in Figure~\ref{fig:logistic_loss}, we observe that the (24-FO) group has a lower validation loss initially but the (256-ZO) and (24-FO, 256-ZO) groups outperform the former group after step 200. A similar phenomenon can also be observed in Figure~\ref{fig:transformer_loss}. These results validate the theoretical finding, showing that with a sufficient number of steps, hybrid populations achieve faster convergence compared to \emph{homogeneous populations} of first-order optimizers.

\section{Discussion, Limitations, and Future Work}

We provided a first analysis of the convergence of decentralized gradient-based methods in a population mixing first- and zeroth-order gradient estimators for both convex and non-convex objectives. Our results show that even biased or noisy zeroth-order information can enhance convergence when integrated into a protocol.

The experimental results validate our analysis and premise, demonstrating that first- and zeroth-order estimators can be effectively hybridized in a decentralized population. This is promising for environments with heterogeneous computational power, allowing agents to leverage local data even without gradient extraction capabilities.

A practical embodiment could be a decentralized learning system where computationally powerful agents perform backpropagation as first-order agents, while computationally limited nodes estimate gradients via forward passes over local data, sharing this information during pairwise interactions.

We focused on \emph{decentralized optimization}, where nodes interact in randomly chosen pairs. However, our analysis can extend to more general interaction graph topologies, where convergence depends on the eigenvalue gap. Another extension we plan to explore includes additional gradient estimators and large-scale practical deployments to validate our approach. Our experimental simulations confirm the feasibility of our method, achieving their intended goal.

\section{Acknowledgements}
This project has received funding from the European Research Council (ERC) under the European Union's Horizon 2020 research and innovation programme (grant agreement No 805223 ScaleML). The authors would like to acknowledge Eugenia Iofinova for useful discussions during the inception of this project. 

\bibliography{aaai25.bib}


\onecolumn
\normalfont
\appendix
\section{Appendix}

\section{Experimental Setup} \label{sec:experimental_setup}
In this section, we describe our experimental setup in detail. We begin by carefully describing the way in which we simulated HybridSGD in the sequential form. Then, we proceed by explaining the different types of gradient estimators that we used in our experiments, together with their implementation methods. Finally, we detail the datasets, tasks, and models used for our experiments. 

\subsection{Simulation}
We attempt to simulate a realistic decentralized deployment scenario sequentially,  as follows. We assume $n$ nodes, each of which initially has a model copy. Each node has an oracle to estimate the gradient of the loss function with respect to its model. It is assumed that $n_1$ nodes have access to first-order oracle and $n_0$ nodes have access to zeroth-order oracle, which can be biased or unbiased. The training dataset is distributed among first- and zeroth-order nodes so that each node has access to $\frac{1}{n}$ of training data. At each simulation step, we select $O(n)$ disjoint pairs uniformly at random and make each pair interact with each other. During the interaction, first, each node takes an SGD step and then they share their models and adapt the averaged model as their new model. To track the performance of our algorithm, each node evaluates its model on an unseen validation dataset which is shared between all the nodes. After every 10 steps, each node computes the validation \emph{loss} and \emph{accuracy}, and then the averaged loss and accuracy from all the nodes will be reported.

\subsection{Estimator types}

\begin{description}
    \item[First-order] Using this estimator node $i$ can estimate the gradient $\nabla f^i(X^i)$ by computing $\nabla F^i(X^i, \xi^i)$, where $f^i$ and $F^i$ are the node's local loss function and its stochastic estimator respectively. The computation is done using Pytorch built-in \textbf{.backward()} method. 
    \item[Unbiased Zeroth-order] We implemented this estimator using forward-mode differentiation technique inspired by \cite{}. Using this method, for a randomly chosen vector $u\sim N(0,I_d)$, node $i$ can compute $F^i(X^i, \xi^i)$ and $u.\nabla F^i(X^i, \xi^i)$ in a single forward pass. It will then use $(u.\nabla F^i(X^i, \xi^i))u$ as its gradient estimator. Note that the node does not need to compute $\nabla F^i(X^i, \xi^i)$, hence it is a zeroth-order estimation of the gradient. Moreover, $E_{u \sim N(0, I_d)}[(u.\nabla F^i(X^i, \xi^i))u] = \nabla F^i(X^i, \xi^i)$ which means the gradient estimator is unbiased.    
    \item[Biased Zeroth-order] For a fixed $\nu$ and a randomly chosen $u \sim N(0, I_d)$, node $i$ can estimate the gradient $\nabla F^i(X^i, \xi^i)$ simply by computing $\frac{F^i(X^i+\nu u, \xi^i) - F^i(X^i, \xi^i)}{\nu}u$ or $\frac{F^i(X^i+\nu u, \xi^i) - F^i(X^i-\nu u, \xi^i)}{2\nu}u$. The computations consist of evaluating only function values, thus they are called zeroth-order estimators. However, both of them are biased estimators as their expected values would be equal to the gradient of the smoothed-version of the function, $\nabla F^i_\nu(X^i, \xi^i)$, which is close but not necessarily equal to $\nabla F^i(X^i, \xi)$. 
\end{description}

Note that the approximation of zeroth-order estimators can be improved by increasing the number of randomly chosen vectors and averaging the results. For the biased zeroth-order estimators, we use the \emph{batch matrix multiplication} to compute the function values for all the randomly chosen vectors using constant GPU calls. Moreover, for unbiased zeroth-order estimators, we simulate the forward-mode differentiation by computing the gradient followed by computing the dot products of the gradient and the randomly chosen vectors. For more details on the implementation, we encourage readers to look at the source code of our experiments.

\subsection{Datasets and Models}
We use Pytorch to manage the training process in our algorithm, as well as Open MPI~\cite{gabriel04:_open_mpi} for communication between the nodes. In the first steps, we do some warm-up steps in a way that each node just trains its model without communication. We use a linear scheduler to increase the learning rate to the desired value during these steps. Then they start to communicate and we use a Cosine Annealing (CA) scheduler~\cite{loshchilov2017sgdrstochasticgradientdescent} to make more stable training between the nodes. We did the experiments in different random seeds and for each step, we computed the mean and standard error for the final evaluation. We use the Cross-Entropy loss function in our implementation. Since each step is training on a single batch, to decrease the noise of the computed gradient, we use a gradient momentum $g_{t+1} = mg_t + (1-m)\nabla G^i(x)$ where $m$ is the momentum value and $g_t$ is the gradient to update the model at step $t$. We fine-tuned the hyperparameters using grid search as much as possible. To be mentioned, we individually optimized the learning rates for a single agent of each node type, FO and ZO, to ensure optimal performance for both (recognizing that optimal rates are type-specific due to differences in gradient estimations). You can find the details of hyperparameter tuning in the tables below. Complete search was not possible because of the large search space. In the next paragraphs, we explain some task-specific details. 

\begin{description}
     \item[CNN on MNIST (Table~\ref{tab:ht_ab_study}).] The CNN model 
     used in this study consists of three convolutional and three linear layers. ReLU activation~\cite{Nair2010RectifiedLU} is applied throughout, with the convolutional layers having an output channel size of 8 and the linear layers having a width of 128. The experiments were conducted on a single A10 GPU with 24 GB of VRAM and 32 GB of system RAM.

     \item[ResNet-18 on CIFAR-10 (Table~\ref{tab:ht_resnet}).] ZOs have a lower memory footprint, which allows us to use a larger batch size. Since they process more data points simultaneously, it is more appropriate to consider a larger learning rate for them in certain regimes. This advantage of using a larger batch size is evident in theory, as it results in lower gradient variance for ZOs. Consequently, it would not be realistic to use the same batch size and learning rate for ZOs as for FOs, given their higher variance. By leveraging the lower memory footprint of ZOs to use larger batches, we are aligning the training process more closely with real-world scenarios and reducing variance. Failing to do so would mean not fully capitalizing on the benefits that ZOs offer. Therefore, after fine-tuning, we ended up using different learning rates and batch sizes for ZO and FO nodes. We conducted this setup on a single A10 GPU with 24 GB VRAM and 100 GB RAM.

     \item[Regression model on MNIST (Table~\ref{tab:ht_reg}).] We evaluated this scenario using 96 CPU cores and 4 RTX 3090 GPUs, each with 24 GB of VRAM. To maximize the number of nodes within our resource constraints, we used a single seed for this linear model. To better assess the behavior of ZO and FO nodes at scale, we set the minibatch size to 2. This choice prevents the models from seeing all the data in the initial steps, thereby avoiding premature convergence to the optimum point. Additionally, we omitted the use of gradient momentum and the CA scheduler to obtain a more realistic observation of the model's performance.

     \item[Transformer model on Brackets (Table~\ref{tab:ht_transformer}).] For this task, we employ a Transformer model with 2 layers of multi-head self-attention, each with 2 heads, and an embedding size of 4. A dropout rate of 0.1 is applied to mitigate overfitting. The experiments were conducted on a single Tesla T4 GPU with 15 GB of VRAM and 51 GB of system RAM.

     \item [Brackets.] This dataset consists of sequences of opening '(' and closing brackets ')'. The task is to predict the correctness of the entire sequence in terms of bracketing. A sequence is defined as correct if every opening bracket has a corresponding closing bracket. The training set consists of 25,600 samples, and the validation set consists of 2,560 samples. We use this dataset because the correct bracket sequences form a context-free language, which captures some properties of natural language~\cite{ebrahimi2020selfattentionnetworksrecognizedyckn}. Therefore, it can reveal nontrivial model capabilities while still being a relatively simple dataset.
\\
\\
\end{description}

\begin{minipage}{0.45\textwidth}
    \centering
    \begin{tabular}{|c|c|c|}
        \hline
        Hyperparameter & Value & Search interval \\
        \hline
        FO batch size & 256 & \{128, 256, 512\} \\
        \hline
        ZO batch size & 256 & \{128, 256, 512\} \\
        \hline
        FO learning rate & 0.01 & [0.001, 0.1] \\
        \hline
        ZO learning rate & 0.01 & [0.001, 0.1] \\
        \hline
        FO momentum & 0.9 & - \\
        \hline
        ZO momentum & 0.9 & - \\
        \hline
        T & 1000 & - \\
        \hline
        Number of seeds & 3 & - \\
        \hline
        Warm-up steps & 50 & - \\
        \hline
        CA scheduler & Yes & - \\
        \hline
        rv & - & - \\
        \hline
    \end{tabular}
    \captionof{table}{Hyperparameters-tuning details for studying the impact of the number of random vectors on the biased/unbiased ZO estimators, conducted using a CNN model on MNIST.}
    \label{tab:ht_ab_study}
\end{minipage}
\hfill
\begin{minipage}{0.45\textwidth}
    \centering
    \begin{tabular}{|c|c|c|}
        \hline
        Hyperparameter & Value & Search interval \\
        \hline
        FO batch size & 10 & [10, 100] \\
        \hline
        ZO batch size & 50 & [50, 250] \\
        \hline
        FO learning rate & 0.001 & [0.0001, 0.1] \\
        \hline
        ZO learning rate & 0.01 & [0.0001, 0.1] \\
        \hline
        FO momentum & 0.9 & - \\
        \hline
        ZO momentum & 0.9 & - \\
        \hline
        T & 1000 & - \\
        \hline
        Number of seeds & 3 & - \\
        \hline
        Warm-up steps & 50 & - \\
        \hline
        CA scheduler & Yes & - \\
        \hline
        rv & 128 & \{16, 32, 64, 128\} \\
        \hline
    \end{tabular}
    \captionof{table}{Hyperparameters-tuning details for the ResNet-18 model on CIFAR-10.}
    \label{tab:ht_resnet}
\end{minipage}

\vspace{1em} 

\begin{minipage}{0.45\textwidth}
    \centering
    \begin{tabular}{|c|c|c|}
        \hline
        Hyperparameter & Value & Search interval \\
        \hline
        FO batch size & 2 & - \\
        \hline
        ZO batch size & 2 & - \\
        \hline
        FO learning rate & 0.01 & [0.001, 0.1] \\
        \hline
        ZO learning rate & 0.01 & [0.001, 0.1] \\
        \hline
        FO momentum & - & - \\
        \hline
        ZO momentum & - & - \\
        \hline
        T & 500 & - \\
        \hline
        Number of seeds & 1 & - \\
        \hline
        Warm-up steps & 0 & - \\
        \hline
        CA scheduler & No & - \\
        \hline
        rv & 128 & \{32, 64, 128\} \\
        \hline
    \end{tabular}
    \captionof{table}{Hyperparameters-tuning details for the linear regression model on MNIST.}
    \label{tab:ht_reg}
\end{minipage}
\hfill
\begin{minipage}{0.45\textwidth}
    \centering
    \begin{tabular}{|c|c|c|}
        \hline
        Hyperparameter & Value & Search interval \\
        \hline
        FO batch size & 128 & \{128, 256, 1024, 2048\} \\
        \hline
        ZO batch size & 256 & \{128, 256, 1024, 2048\} \\
        \hline
        FO learning rate & 0.05 & [0.0001, 0.1] \\
        \hline
        ZO learning rate & 0.1 & [0.0001, 0.1] \\
        \hline
        FO momentum & 0.8 & [0.0, 0.95] \\
        \hline
        ZO momentum & 0.8 & [0.0, 0.95] \\
        \hline
        T & 1000 & - \\
        \hline
        Number of seeds & 17 & - \\
        \hline
        Warm-up steps & 100 & 100 \\
        \hline
        CA scheduler & Yes & - \\
        \hline
        rv & 64 & \{8, 16, 32, 64, 128\} \\
        \hline
    \end{tabular}
    \captionof{table}{Hyperparameters-tuning details to use the Transformer model on the Brackets dataset.}
    \label{tab:ht_transformer}
\end{minipage}

\section{More Ablation Studies} \label{sec:ab_study}
We pursue further ablation studies to cover more aspects of the theoretical results in our experiments.

\subsection{Learning Rate Impact} \label{subsec:lr_impact}
To analyze the impact of the learning rate (lr) on stochastic noise (Eq.~\ref{eq:noise}), we used the experimental settings described for the experiment of Table~\ref{tab:ht_reg}, with the following modifications: the experiments were conducted using three different random seeds, and the number of estimators was fixed at 90 ZO and 3 FO for all trials. The experiments were performed on A6000 GPUs with 48 GB VRAM each. The results, presented in Figure~\ref{fig:lr_ab}, show the relationship between learning rate and convergence behavior. The plot demonstrates that smaller learning rates (e.g., 0.005 and 0.01) result in smoother convergence curves, whereas larger learning rates (e.g., 0.5) exhibit more pronounced oscillations and slower convergence. 

\subsection{Effect of Random Vector Count on Convergence} \label{subsec:rv_count_effect}
While we used 64–128 random vectors (rv) to demonstrate effectiveness, using fewer random vectors can provide a balance between efficiency and accuracy. Experiments were conducted on the MNIST dataset using a multilayer perceptron (MLP) model with two hidden layers of size 128. The experiments were performed on A6000 GPUs with 48 GB VRAM, and additional details on hyperparameter tuning can be found in Table~\ref{tab:ht_low_rv}.

The results, shown in Figure~\ref{fig:low_rv}, illustrate the impact of the number of random vectors on loss convergence. Configurations with fewer random vectors (e.g., 8 or 16) show slower convergence and larger fluctuations in loss compared to configurations using higher numbers of random vectors (e.g., 32 or 64). This demonstrates that increasing the number of random vectors enhances the accuracy of gradient estimation, leading to more stable and rapid convergence. However, the computational overhead introduced by using more random vectors must be carefully balanced with available resources. The plot also underscores that zeroth-order nodes can achieve competitive performance by leveraging multiple forward passes, even with reduced computational capabilities.

\begin{figure}[ht]
    \centering
    \begin{minipage}[b]{0.45\textwidth}
        \centering
        \includegraphics[width=\textwidth]{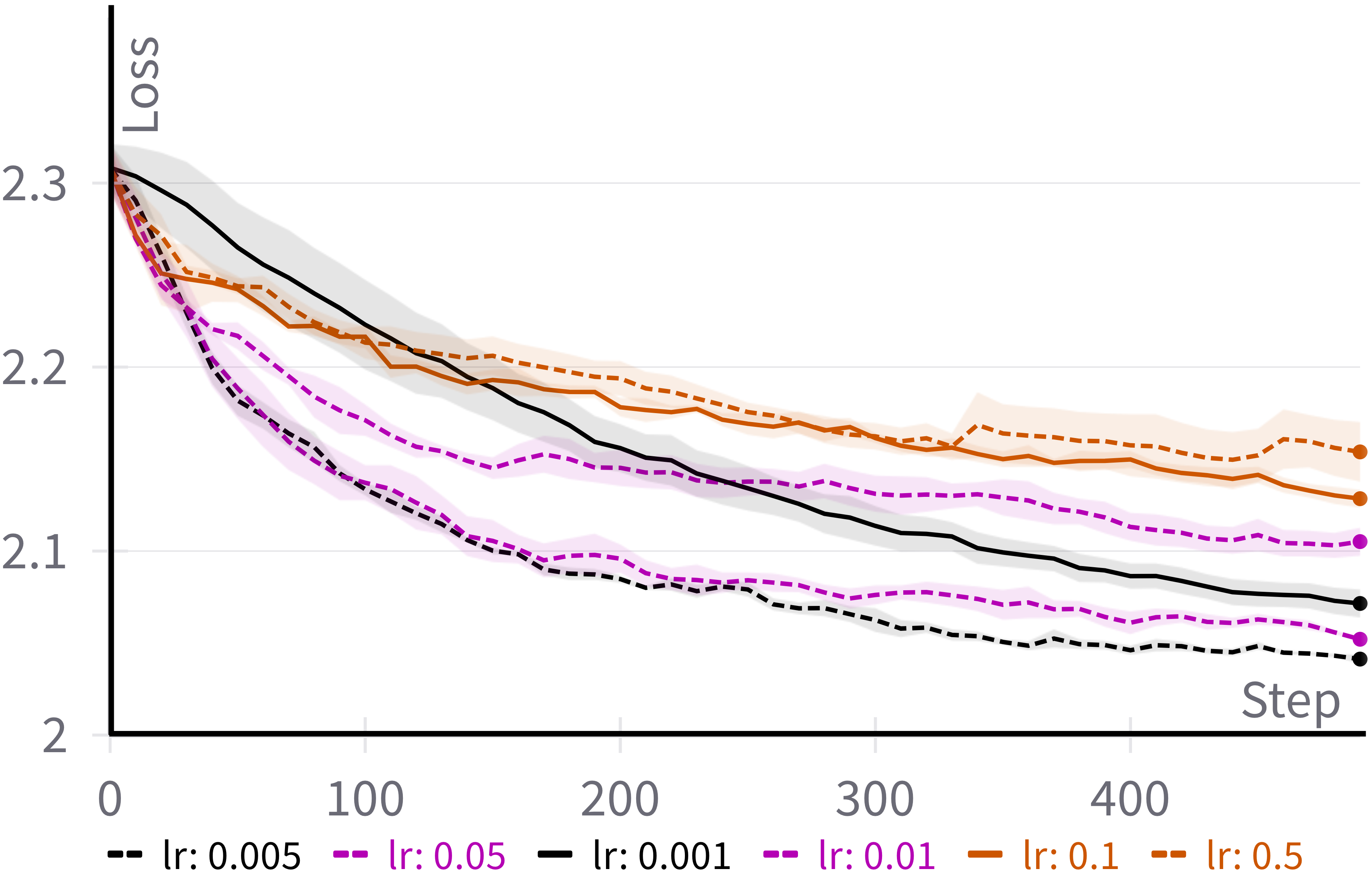}
        \caption{Impact of the learning rate (lr) on the validation loss, using a regression model on MNIST with 3 FO and 90 ZO nodes.}
        \label{fig:lr_ab}
    \end{minipage}
    \hfill 
    \begin{minipage}[b]{0.45\textwidth}
        \centering
        \includegraphics[width=\textwidth]{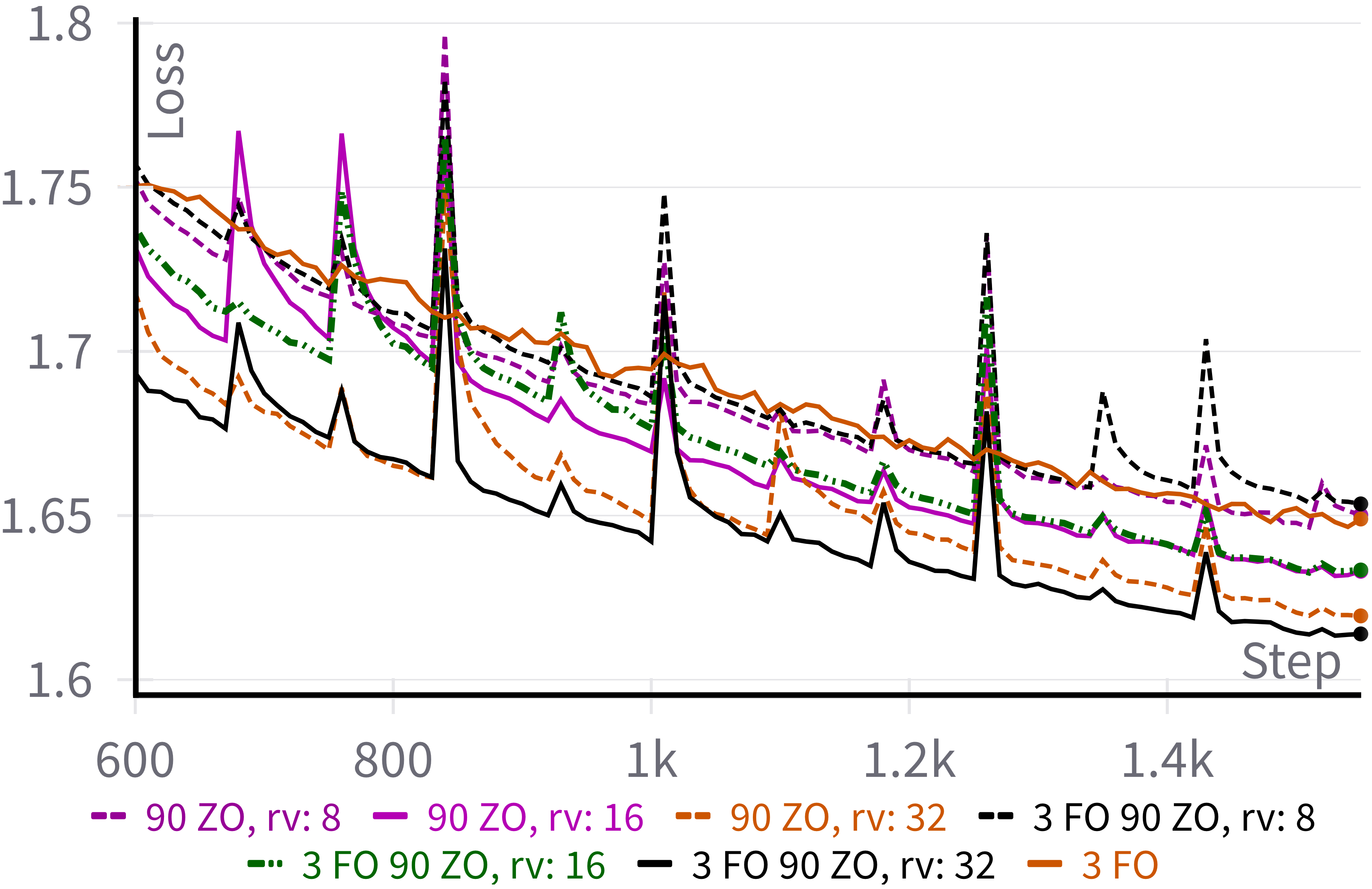}
        \caption{Impact of the different numbers of random vectors on the validation loss between the hybrid and mono-type estimator population, using an MLP model on MNIST. Confidence intervals are omitted for better interpretability.}
        \label{fig:low_rv}
    \end{minipage}
\end{figure}

\subsection{Model Consensus and ZO Population Impact} \label{subsec:consensus_zo_pop}
We expect that each node's model converges to a global model, likely due to each model converging toward a stable point in the final steps. This convergence leads to rapid population across models, as each step involves $O(n)$ interactions. To investigate this phenomenon, we conducted experiments with 16 nodes while varying the number of ZOs within the population to study the effect of ZOs on convergence. To assess the degree of convergence and the closeness of individual models to a global model, we measured the standard deviation of their losses.

The experiments were performed on the MNIST dataset using a model architecture comprising two convolutional layers and two linear layers. All experiments were conducted on A6000 GPUs with 48 GB of VRAM. Additional hyperparameter tuning details are provided in Table~\ref{tab:ht_con}, and the results are shown in Figure~\ref{fig:con_err}.

Figures~\ref{fig:con_err_loss} and~\ref{fig:con_err_std} demonstrate that the standard deviation of losses across models approaches zero in different settings, indicating strong convergence to a global model. Figure~\ref{fig:con_err_loss} shows that configurations with higher FO nodes (e.g., 16 FO) converge more rapidly, while configurations with more ZO nodes (e.g., 16 ZO) exhibit slower convergence but achieve a similar final loss. Meanwhile, Figure~\ref{fig:con_err_std} highlights that the standard deviation of losses diminishes across all configurations, showcasing consistent agreement among models regardless of the number of ZO nodes. This consistency supports the feasibility of using ZO nodes in heterogeneous systems while maintaining overall model consensus.

\begin{figure}[ht]
    \centering
    \begin{subfigure}[b]{0.45\textwidth}
        \includegraphics[width=\textwidth]{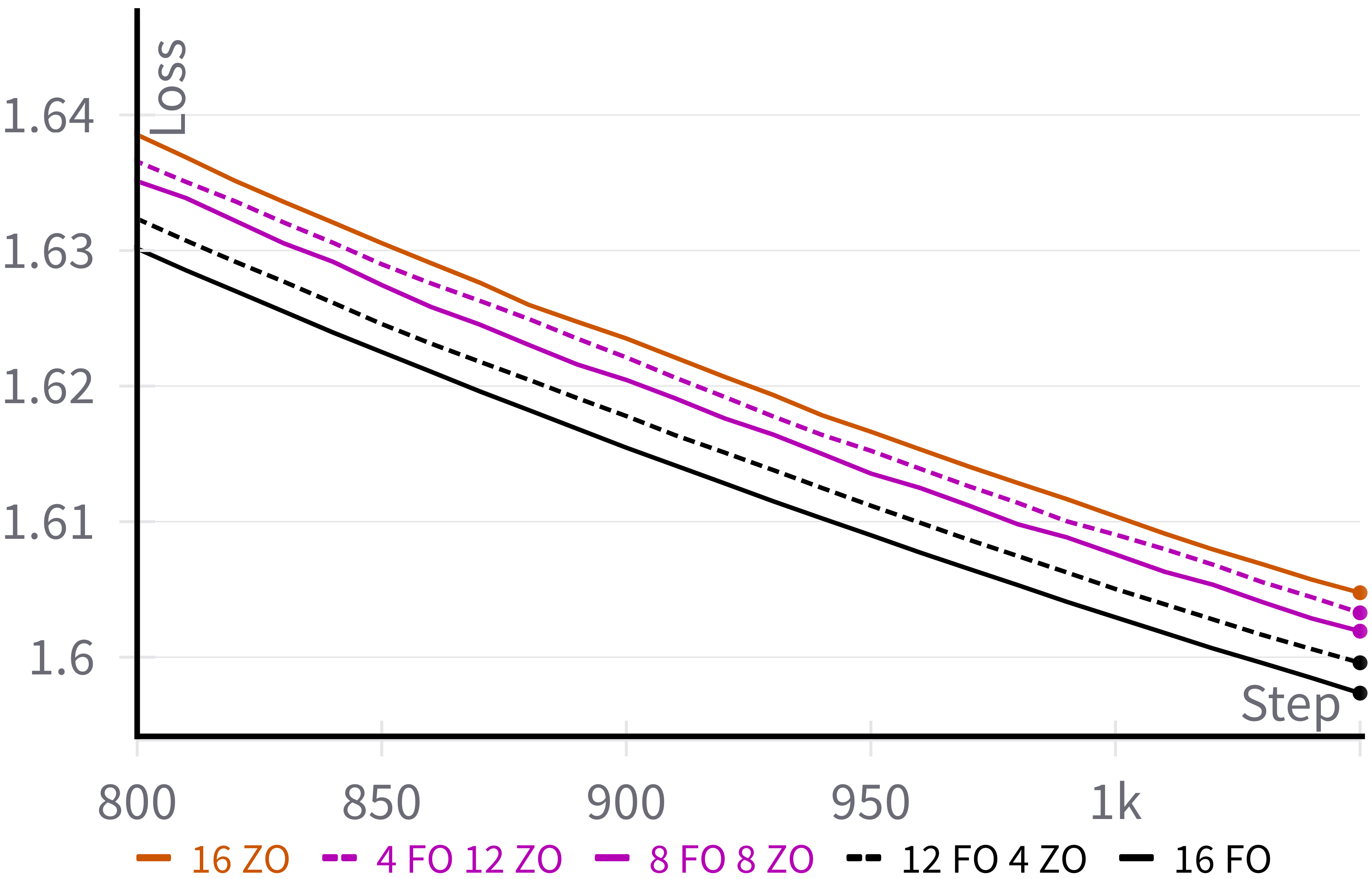}
        \caption{Validation loss vs. steps.}
        \label{fig:con_err_loss}
    \end{subfigure}
    \hfill 
    \begin{subfigure}[b]{0.45\textwidth}
        \includegraphics[width=\textwidth]{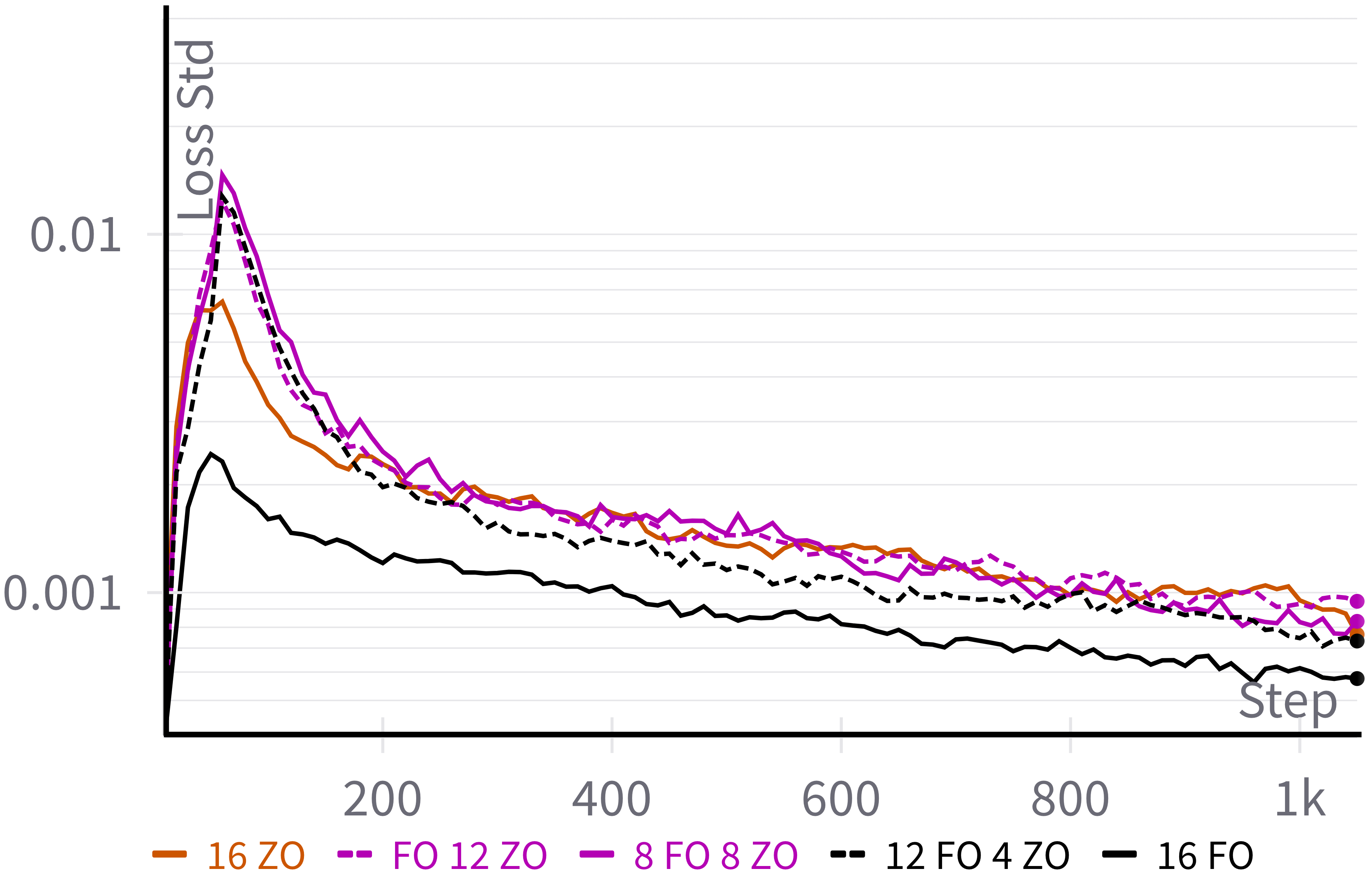}
        \caption{Loss Std (log scale) vs. steps.}
        \label{fig:con_err_std}
    \end{subfigure}
    \caption{(\ref{fig:con_err_loss}) Validation loss and (\ref{fig:con_err_std}) loss standard deviation across nodes' model loss (Loss Std), considering different populations of 16 nodes using a CNN model on MNIST.}
    \label{fig:con_err}
\end{figure}

\begin{minipage}{0.45\textwidth}
    \centering
    \begin{tabular}{|c|c|c|}
        \hline
        Hyperparameter & Value & Search interval \\
        \hline
        FO batch size & 8 & - \\
        \hline
        ZO batch size & 8 & - \\
        \hline
        FO learning rate & 0.1 & [0.001, 0.5] \\
        \hline
        ZO learning rate & 0.5 & [0.001, 0.5] \\
        \hline
        FO momentum & 0.0 & - \\
        \hline
        ZO momentum & 0.0 & - \\
        \hline
        T & 1500 & - \\
        \hline
        Number of seeds & 3 & - \\
        \hline
        Warm-up steps & 50 & - \\
        \hline
        CA scheduler & No & - \\
        \hline
        rv & - & - \\
        \hline
    \end{tabular}
    \captionof{table}{Hyperparameter tuning details for the ablation study on the impact of the number of random vectors on the convergence behavior of different populations, conducted using an MLP model on the MNIST dataset.}
    \label{tab:ht_low_rv}
\end{minipage}
\hfill
\begin{minipage}{0.45\textwidth}
    \centering
    \begin{tabular}{|c|c|c|}
        \hline
        Hyperparameter & Value & Search interval \\
        \hline
        FO batch size & 256 & \{8, 64, 256\} \\
        \hline
        ZO batch size & 256 & \{8, 64, 256\} \\
        \hline
        FO learning rate & 0.1 & [0.001, 0.1] \\
        \hline
        ZO learning rate & 0.1 & [0.001, 0.1] \\
        \hline
        FO momentum & 0.0 & - \\
        \hline
        ZO momentum & 0.0 & - \\
        \hline
        T & 1000 & - \\
        \hline
        Number of seeds & 3 & - \\
        \hline
        Warm-up steps & 50 & - \\
        \hline
        CA scheduler & No & - \\
        \hline
        rv & 128 & - \\
        \hline
    \end{tabular}
    \captionof{table}{Hyperparameter tuning details for the consensus study and analysis of the impact of different ZO populations, conducted using a CNN model on the MNIST dataset.}
    \label{tab:ht_con}
\end{minipage}

\section{Zeroth-order Stochastic Gradient Properties.}


\begin{lemma}\label{lem:Gv second moment upper bound}
Let $G^i_{\nu}(x, u, \xi^i)$ be computed by \ref{def: zeroth-order estimator}. Then, under Assumptions \ref{asmp:lipschitz} and \ref{asmp:unbiasedness_bounded_local_variance_of_F} we have:

\begin{equation}\label{eqn:Gv_second_moment_upper_bound}
\mathbb{E}_{u,\xi^i} \|G^i_{\nu}(x, u, \xi^i)\|^2 \leq \tfrac{1}{2} \nu^2 L^2 (d+6)^3+ 2 (d+4)  \left[\|\nabla f^i(x)\|^2+s_i^2\right], 
\end{equation}

\begin{equation} \label{eqn:Gv_variance_upper_bound}
    \mathbb{E}_{u, \xi^i}  \|G^i_\nu(x, u, \xi) - \nabla f^i(x) \|^2 \le \frac{3\nu^2}{2} L^2 (d+6)^3 + 4(d+4)\left[\|\nabla f^i(x)\|^2 + s_i^2\right].
\end{equation}
\end{lemma}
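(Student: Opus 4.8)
The plan is to handle the two inequalities in turn, reducing both to the Nesterov-type estimate in Lemma~\ref{rand_smth_close_grad}(c) applied \emph{conditionally on the sample} $\xi^i$, and then averaging over $\xi^i$.

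For the second-moment bound~\eqref{eqn:Gv_second_moment_upper_bound}, I would start from Definition~\ref{def: zeroth-order estimator}, writing
$$\|G^i_\nu(x,u,\xi^i)\|^2 = \frac{\big(F^i(x+\nu u,\xi^i)-F^i(x,\xi^i)\big)^2}{\nu^2}\,\|u\|^2.$$
Fixing $\xi^i$, the realized function $y \mapsto F^i(y,\xi^i)$ has $L$-Lipschitz gradient by Assumption~\ref{asmp:lipschitz}, so Lemma~\ref{rand_smth_close_grad}(c) applies to it directly and gives, after taking $\E_u$,
$$\E_u\|G^i_\nu(x,u,\xi^i)\|^2 \le \tfrac{\nu^2}{2}L^2(d+6)^3 + 2(d+4)\,\|\nabla F^i(x,\xi^i)\|^2.$$
It then remains to take $\E_{\xi^i}$ and control $\E_{\xi^i}\|\nabla F^i(x,\xi^i)\|^2$. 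Here I would use the bias--variance identity $\E_{\xi^i}\|\nabla F^i(x,\xi^i)\|^2 = \|\nabla f^i(x)\|^2 + \E_{\xi^i}\|\nabla F^i(x,\xi^i)-\nabla f^i(x)\|^2$, which by the unbiasedness and variance bound in Assumption~\ref{asmp:unbiasedness_bounded_local_variance_of_F} is at most $\|\nabla f^i(x)\|^2 + s_i^2$. Substituting yields exactly~\eqref{eqn:Gv_second_moment_upper_bound}.

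For the variance bound~\eqref{eqn:Gv_variance_upper_bound}, the clean route is to decompose the error around the \emph{mean} of the estimator rather than around $\nabla f^i$. Since $\E_{u,\xi^i}[G^i_\nu(x,u,\xi^i)] = \nabla f^i_\nu(x)$ by~\eqref{E(G_v)}, and $\nabla f^i_\nu(x)-\nabla f^i(x)$ is deterministic, the cross term vanishes and
$$\E_{u,\xi^i}\|G^i_\nu - \nabla f^i(x)\|^2 = \E_{u,\xi^i}\|G^i_\nu - \nabla f^i_\nu(x)\|^2 + \|\nabla f^i_\nu(x)-\nabla f^i(x)\|^2.$$
The first (variance) term is at most the second moment already bounded in~\eqref{eqn:Gv_second_moment_upper_bound}; the second (squared-bias) term is at most $\tfrac{\nu^2}{4}L^2(d+3)^3 \le \tfrac{\nu^2}{4}L^2(d+6)^3$ by squaring the gradient-approximation estimate in Lemma~\ref{rand_smth_close_grad}(b). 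Adding the two and using $2(d+4) \le 4(d+4)$ together with $\tfrac12+\tfrac14 = \tfrac34 \le \tfrac32$ produces a bound no larger than the stated one, establishing~\eqref{eqn:Gv_variance_upper_bound} (in fact with slightly sharper constants).

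The only real subtlety is the first step: one must apply Lemma~\ref{rand_smth_close_grad}(c) \emph{pointwise in} $\xi^i$, to the random realization $F^i(\cdot,\xi^i)$ whose gradient is $L$-Lipschitz, before averaging, rather than to the deterministic $f^i$; otherwise the stochastic-gradient variance $s_i^2$ never enters. For the second inequality, the key is to expand around $\nabla f^i_\nu$ (the true mean of the estimator) so that the smoothing bias is isolated into the clean term $\|\nabla f^i_\nu-\nabla f^i\|^2$; a naive $\|a-b\|^2\le 2\|a\|^2+2\|b\|^2$ split would overshoot the stated gradient coefficient $4(d+4)$, so the decomposition around the mean is what makes the constants work out.
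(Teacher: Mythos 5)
Your proof is correct. For the first inequality you follow exactly the paper's route: apply Lemma~\ref{rand_smth_close_grad}(c) pointwise to the realization $F^i(\cdot,\xi^i)$ (whose gradient is $L$-Lipschitz by Assumption~\ref{asmp:lipschitz}), then average over $\xi^i$ and use $\E_{\xi^i}\|\nabla F^i(x,\xi^i)\|^2 \le \|\nabla f^i(x)\|^2 + s_i^2$ from Assumption~\ref{asmp:unbiasedness_bounded_local_variance_of_F}. For the second inequality you diverge slightly, and in your favor: you exploit that $\E_{u,\xi^i}[G^i_\nu]=\nabla f^i_\nu(x)$ and that the smoothing bias $\nabla f^i_\nu(x)-\nabla f^i(x)$ is deterministic, so the cross term vanishes exactly and you get $\E\|G^i_\nu-\nabla f^i\|^2 = \E\|G^i_\nu-\nabla f^i_\nu\|^2 + \|\nabla f^i_\nu-\nabla f^i\|^2 \le \tfrac34\nu^2L^2(d+6)^3 + 2(d+4)[\|\nabla f^i(x)\|^2+s_i^2]$, which is strictly sharper than the stated bound. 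The paper instead applies the factor-two split $\|a+b\|^2\le 2\|a\|^2+2\|b\|^2$ to the same two pieces, which is precisely what produces the coefficients $4(d+4)$ and $\tfrac32\nu^2$ in the lemma statement. So your closing remark is slightly off: the ``naive'' split through $\nabla f^i_\nu$ does not overshoot the stated constants --- it lands on them exactly, and is the paper's own argument; your orthogonal decomposition simply buys a factor of two that the lemma does not need. Both routes are valid, and nothing downstream depends on the sharper constants.
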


\begin{proof}
Firstly, by plugging in $F^i(x, \xi^i)$ in \ref{stoch_smth_approx_grad} under Assumptions \ref{asmp:lipschitz} and \ref{asmp:unbiasedness_bounded_local_variance_of_F}, we obtain

\begin{equation*}
\mathbb{E} \big\|G^i_{\nu}(x, u, \xi)\big\|^2 \leq \tfrac{1}{2} \nu^2 L^2 (d+6)^3+ 2 (d+4)  
\| \nabla F^i(x, \xi^i) \|^2
\end{equation*}
Then by getting an expectation and eliminating the randomness of the right-hand side with respect to $\xi^i$, we get

\begin{align*}
\mathbb{E}_{u, \xi^i} \big\|G^i_{\nu}(x, u, \xi^i)\big\|^2 &\leq \tfrac{1}{2} \nu^2 L^2 (d+6)^3+ 2 (d+4)  
\| \nabla F^i(x, \xi^i) \|^2 \\
&\overset{Assumption \ref{asmp:unbiasedness_bounded_local_variance_of_F}}{\leq} \tfrac{1}{2} \nu^2 L^2 (d+6)^3+ 2 (d+4)  \left[\|f^i(x)\|^2+s_i^2\right].
\end{align*}

Secondly, using \ref{E(G_v)} we have:
\begin{align*}\label{eqn:Gv - nabla fv}
    \E \big \| G_\nu(x, u, \xi) - \nabla f_\nu(x) \big \| ^2 &= \E \big\|G_{\nu}(x, u, \xi)\big\|^2 +  \big \| \nabla f_\nu(x) \big \|^2 - 2\langle \E \big(G_\nu(x, u, \xi) \big), \nabla f_\nu(x)\rangle\\
    &\overset{\ref{E(G_v)}}{=} \E \big\|G_{\nu}(x, u, \xi)\big\|^2 +
    \underbrace{ \big \| \nabla f_\nu(x) \big \|^2 -2 \big \| \nabla f_\nu(x) \big \|^2}_{- \big \| \nabla f_\nu(x) \big \|^2 \le 0}
    \le \E \big\|G_{\nu}(x, u, \xi)\big\|^2 \\
    &\overset{\ref{eqn:Gv_second_moment_upper_bound}}{\le} \tfrac{1}{2} \nu^2 L^2 (d+6)^3+ 2 (d+4)  \left[\|f^i(x)\|^2+s_i^2\right].
\end{align*}

Finally, together with Lemma~\ref{rand_smth_close_grad} and the inequality above we can deduce:
\begin{align*}
    \E_{u, \xi^i} \big \| G^i_\nu(x, u, \xi^i) - \nabla f^i(x) \| ^2 &\le 2\E_{u, \xi^i} \big \| G^i_\nu(x, u, \xi^i) - \nabla f^i_\nu(x) \big \|^2 + 2 \big \| \nabla f^i_\nu(x) - \nabla f^i(x) \big \|^2 \\
    &\overset{}{\le} \nu^2 L^2 (d+6)^3+ 4 (d+4)  \left[\|f^i(x)\|^2+s_i^2\right] + 2 \big \| \nabla f^i_\nu(x) - \nabla f^i(x) \big \|^2 \\
    &\overset{Lemma~\ref{rand_smth_close_grad}}{\le} \nu^2 L^2 (d+6)^3+ 4 (d+4)  \left[\|f^i(x)\|^2+s_i^2\right] + \frac{\nu^2}{2}L^2 (d+3)^3\\
    &\le \frac{3\nu^2}{2} L^2 (d+6)^3+ 4 (d+4)  \left[\|f^i(x)\|^2+s_i^2\right].
\end{align*}
\end{proof}

\section{Definitions}
For the sake of simplicity, we now define some notations for the frequently-used expressions in the proof.
\begin{definition}[Gamma]
\begin{equation}
 \Gamma_{t}  :=  \frac{1}{n}\sum_i\|X_{t}^i-\mu_{t}\|^2.
\end{equation}
\end{definition}

\begin{definition}[Average second-moment of estimator]
\begin{equation}
M_t^G := \frac{1}{n}\sum_i \big\|G^i(X_t^i)\big\|^2.
\end{equation}
\end{definition}

\begin{definition}[Expectation conditioned step]
\begin{equation}
\E_t[Y] := \E[Y|X^1_t,X^2_t,...,X^n_t].
\end{equation}
\end{definition}

\begin{definition}[Biasedness of estimators] For node $i$, using $G^i(x)$ as its gradient estimator we define $b_i$ as the upper bound for its biasedness, i.e.
\begin{equation}\label{def: beta_i}
    \big\|\nabla f^i(x) -  \E\big[G^i(x)\big]\big\| \leq b_i.
\end{equation}
Note that for an unbiased estimator we have $b_i = 0$. Moreover, for zeroth-order estimators $\E\big[G^i(x)\big]= \nabla f^i(x)$. Hence, according to Lemma~\ref{rand_smth_close_grad}, $\big\|\nabla f^i(x) -  \E\big[G^i(x)\big]\big\|$ is bounded for a fixed $\nu$. Therefore, $b_i$ is well-defined in our setup.\\
We further define the average biasedness of estimators as 
\begin{equation}\label{def: B}
    B:=\frac{1}{n} \sum_i b_i.
\end{equation}
\end{definition}

\begin{definition}[Variance of estimators] For node $i$, using $G^i(X_t^i)$ as its gradient estimator at step $t$, we define $(\sigma_t^i)^2$ as the upper-bound of its variance, i.e.
\begin{equation}
    E\big\|\nabla f^i(X_t^i) -  G^i(X_t^i)\big\|^2 \leq (\sigma_t^i)^2.
\end{equation}
Note that for the first-order nodes, i.e. $G^i(x)=\nabla F^i(x)$, using \ref{asmp:unbiasedness_bounded_local_variance_of_F} we have $(\sigma_t^i)^2:=s_i^2$. Moreover, for the zeroth-order nodes, i.e. $G^i(X^i_t)$ is computed using \ref{def: zeroth-order estimator}, according to \ref{eqn:Gv_variance_upper_bound} we have $(\sigma_t^i)^2:= \frac{3\nu^2}{2} L^2 (d+6)^3 + 4(d+4)\left[\|\nabla f^i(X_t^i)\|^2 + s_i^2\right]$, which is well-defined considering that $\nu$ is fixed in our setup.\\
We further define the average variance of estimators as
\begin{equation}\label{eqn:average variance}
    (\bar{\sigma}_t)^2:= \frac{1}{n}\sum_i(\sigma_t^i)^2.
\end{equation}
\end{definition}

\section{Useful Inequalities}
\label{useful-inequalities}
\begin{lemma}[Young] For any pair of vectors $x, y$ and $\alpha >0$ we have 
\begin{equation*}
    \langle x,y \rangle \leq \frac{\|x\|^2}{2\alpha} + \frac{\alpha \|y\|^2}{2}.
\end{equation*}
\end{lemma}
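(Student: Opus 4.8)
The plan is to derive the inequality from the single fact that the squared Euclidean norm of any vector is nonnegative, by choosing a cleverly scaled combination of $x$ and $y$ whose square expands into exactly the three terms appearing in the statement. Since $\alpha > 0$ is assumed, both $\sqrt{\alpha}$ and $1/\sqrt{\alpha}$ are well-defined real scalars, and this is the only place the positivity hypothesis is used.

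First I would introduce the auxiliary vector
\[
v := \frac{1}{\sqrt{\alpha}}\, x - \sqrt{\alpha}\, y,
\]
and invoke $\|v\|^2 \ge 0$. Expanding the norm via bilinearity of the inner product yields
\[
0 \le \|v\|^2 = \frac{1}{\alpha}\|x\|^2 - 2\langle x, y\rangle + \alpha\|y\|^2,
\]
where the cross term picks up the factor $2\cdot \tfrac{1}{\sqrt{\alpha}}\cdot\sqrt{\alpha} = 2$, while the two diagonal terms produce $\tfrac{1}{\alpha}\|x\|^2$ and $\alpha\|y\|^2$ respectively.

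Second, I would rearrange this inequality by moving the cross term to the left-hand side and dividing by $2$, which gives $\langle x, y\rangle \le \frac{\|x\|^2}{2\alpha} + \frac{\alpha\|y\|^2}{2}$, exactly the claimed bound. An equivalent route, should a norm-free derivation be preferred, is to combine the Cauchy--Schwarz inequality $\langle x, y\rangle \le \|x\|\,\|y\|$ with the scalar arithmetic--geometric-mean inequality $ab \le \tfrac{1}{2}(a^2/\alpha + \alpha b^2)$ applied to $a = \|x\|$ and $b = \|y\|$; both paths reach the same conclusion in a single line.

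There is essentially no technical obstacle here: the result is a routine completion-of-the-square argument. The only point requiring a moment's attention is the choice of the scaling $1/\sqrt{\alpha}$ versus $\sqrt{\alpha}$ so that the diagonal terms land with the correct coefficients on the right-hand side, together with the observation that $\alpha>0$ guarantees these scalars exist and that dividing by the positive constant $2$ preserves the direction of the inequality.
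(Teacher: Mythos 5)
Your proof is correct. The paper states this lemma without proof, treating it as a standard fact, so there is no argument to compare against; your completion-of-the-square derivation from $\bigl\|\tfrac{1}{\sqrt{\alpha}}x - \sqrt{\alpha}\,y\bigr\|^2 \ge 0$ is the canonical one, the expansion and the use of $\alpha > 0$ are handled correctly, and the alternative route via Cauchy--Schwarz plus the scalar AM--GM inequality that you mention is equally valid.
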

\begin{lemma}[Cauchy-Schwarz] For any vectors $x_1, x_2, ..., x_n \in \mathbb{}{R}^d$ we have
\begin{equation*}
    \| \sum_{i=1}^n x_i \|^2 \leq n \sum_{i=1}^n \|x_i\|^2.
\end{equation*}
\end{lemma}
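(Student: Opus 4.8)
The plan is to prove the inequality $\|\sum_{i=1}^n x_i\|^2 \le n \sum_{i=1}^n \|x_i\|^2$ directly from the convexity of the squared norm, which is the shortest route. First I would rewrite the left-hand side in terms of the arithmetic mean, using $\|\sum_{i=1}^n x_i\|^2 = n^2 \,\big\|\frac{1}{n}\sum_{i=1}^n x_i\big\|^2$. Since the map $z \mapsto \|z\|^2$ is convex on $\mathbb{R}^d$, Jensen's inequality with the uniform weights $1/n$ gives $\big\|\frac{1}{n}\sum_{i=1}^n x_i\big\|^2 \le \frac{1}{n}\sum_{i=1}^n \|x_i\|^2$. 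Multiplying both sides by $n^2$ yields $\|\sum_{i=1}^n x_i\|^2 \le n \sum_{i=1}^n \|x_i\|^2$, which is exactly the claim.

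An equivalent elementary route, which avoids invoking Jensen and instead reuses the preceding Young's inequality, is to expand the squared norm into a double sum of inner products. Here I would write $\|\sum_{i=1}^n x_i\|^2 = \sum_{i=1}^n \sum_{j=1}^n \langle x_i, x_j\rangle$ and bound each cross term via Young's inequality with $\alpha = 1$, namely $\langle x_i, x_j\rangle \le \tfrac{1}{2}\|x_i\|^2 + \tfrac{1}{2}\|x_j\|^2$. Summing over all $n^2$ index pairs and collapsing the two symmetric sums gives $\tfrac{1}{2}\big(n\sum_{i=1}^n \|x_i\|^2 + n\sum_{j=1}^n \|x_j\|^2\big) = n\sum_{i=1}^n \|x_i\|^2$, matching the desired upper bound.

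There is no substantive obstacle in either derivation: the statement is a standard instance of the power-mean (Cauchy--Schwarz) inequality and is used purely as a technical tool in the convergence analysis, e.g.\ when splitting norms of summed gradient estimators. The only point worth noting is that the factor $n$ cannot be improved in general, since equality in the Young step forces $x_i = x_j$ for all $i,j$, so the bound is tight precisely when all the $x_i$ coincide. For the write-up I would present the convexity argument as the main proof for brevity, since it requires no case analysis and makes the tightness transparent.
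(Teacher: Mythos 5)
Your proof is correct. The paper states this lemma without proof, treating it as a standard utility fact, so there is no argument to compare against; both of your routes (Jensen's inequality applied to the convex map $z \mapsto \|z\|^2$ with uniform weights, and the expansion $\|\sum_{i=1}^n x_i\|^2 = \sum_{i,j}\langle x_i, x_j\rangle$ bounded termwise by Young's inequality with $\alpha=1$) are valid and yield exactly the stated bound, and your remark that equality holds precisely when all $x_i$ coincide is also correct.
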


\section{The Complete Convergence Proof}
\label{the-proof}

\subsection{Proof of Convex Case of Theorem \ref{thm:main}}

In this part, we assume that there exist $n_0$ zeroth-order nodes and $n_1$ first-order nodes, all having access to a shared dataset, hence a shared objective function $f$ that they want to minimize. 

\begin{lemma}\label{lem:h M^f}
For any time step $t$ and constants $\alpha_0 \ge \alpha_1 > 0$ let $M_t^f(\alpha_0, \alpha_1)=\frac{\alpha_0}{n}\sum_{i  \in N_0} \big\|\nabla f^i(X_t^i)\big\|^2 + \frac{\alpha_1}{n}\sum_{i \in N_1} \big\|\nabla f^i(X_t^i)\big\|^2$. We have that:
\begin{equation*}
 \E[M_t^f(\alpha_0, \alpha_1)] \le 
3L^2 \alpha_0 \E[\Gamma_t] + \frac{3\alpha_0 n_0\varsigma_0^2+3\alpha_1 n_1\varsigma_1^2}{n}+6 (\alpha_0+\alpha_1) L\E[f(\mu_t)-f(x^*)].
\end{equation*}
\end{lemma}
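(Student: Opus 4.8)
The plan is to control each individual squared gradient norm $\|\nabla f^i(X_t^i)\|^2$ by relating it to three more tractable quantities: the deviation of the local model $X_t^i$ from the mean $\mu_t$, the data heterogeneity at node $i$, and the suboptimality at the mean. Concretely, I would write the identity
\[
\nabla f^i(X_t^i) = \bigl(\nabla f^i(X_t^i) - \nabla f^i(\mu_t)\bigr) + \bigl(\nabla f^i(\mu_t) - \nabla f(\mu_t)\bigr) + \nabla f(\mu_t),
\]
and apply the Cauchy--Schwarz inequality with three summands to obtain
\[
\|\nabla f^i(X_t^i)\|^2 \le 3\|\nabla f^i(X_t^i) - \nabla f^i(\mu_t)\|^2 + 3\|\nabla f^i(\mu_t) - \nabla f(\mu_t)\|^2 + 3\|\nabla f(\mu_t)\|^2.
\]

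Next I would bound the three terms. The first is handled by $L$-smoothness (Assumption~\ref{asmp:lipschitz}), which gives $\|\nabla f^i(X_t^i) - \nabla f^i(\mu_t)\|^2 \le L^2\|X_t^i - \mu_t\|^2$. The third term is identical for every node and is bounded using the standard smoothness consequence $\|\nabla f(\mu_t)\|^2 \le 2L\,(f(\mu_t) - f(x^*))$, which follows from Assumption~\ref{asmp:lipschitz} applied to $f$ via the descent lemma (no convexity is needed here, which is why the same step reappears in the non-convex analysis). The second term is left untouched for now, to be summed across each population using Assumption~\ref{asmp:global_variance}.

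I would then split the sum defining $M_t^f(\alpha_0,\alpha_1)$ over $N_0$ and $N_1$ and insert the above bounds. Summing the heterogeneity term over each population and invoking the balanced-data bounds $\sum_{i\in N_0}\|\nabla f^i(\mu_t)-\nabla f(\mu_t)\|^2 \le n_0\varsigma_0^2$ and $\sum_{i\in N_1}\|\nabla f^i(\mu_t)-\nabla f(\mu_t)\|^2 \le n_1\varsigma_1^2$ produces the term $\tfrac{3\alpha_0 n_0\varsigma_0^2 + 3\alpha_1 n_1\varsigma_1^2}{n}$. The deviation terms assemble into $\tfrac{3L^2}{n}\bigl(\alpha_0\sum_{i\in N_0}\|X_t^i-\mu_t\|^2 + \alpha_1\sum_{i\in N_1}\|X_t^i-\mu_t\|^2\bigr)$, and here I use the hypothesis $\alpha_0\ge\alpha_1$ to replace both coefficients by $\alpha_0$ and recover $3L^2\alpha_0\Gamma_t$. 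The third terms yield $\tfrac{6L(\alpha_0 n_0 + \alpha_1 n_1)}{n}(f(\mu_t)-f(x^*))$, and since $\alpha_0\ge\alpha_1$ gives $\alpha_0 n_0 + \alpha_1 n_1 \le (\alpha_0+\alpha_1)n$, this is at most $6(\alpha_0+\alpha_1)L\,(f(\mu_t)-f(x^*))$. Taking expectations conditioned on the history up to step $t$ completes the bound.

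I do not anticipate a genuine obstacle here; the lemma is essentially a bias--variance-style decomposition. The only points requiring care are ensuring the $\Gamma_t$ coefficient collapses cleanly to $\alpha_0$ (which is exactly where the ordering assumption $\alpha_0\ge\alpha_1$ is used), and correctly invoking the gradient-norm-to-suboptimality inequality for the \emph{aggregate} function $f$ rather than for the individual $f^i$. Everything else is a direct application of Cauchy--Schwarz, Lipschitzness, and the data-split assumption.
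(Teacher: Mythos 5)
Your proof is correct and follows essentially the same route as the paper's: the same three-way gradient decomposition at $\mu_t$, Cauchy--Schwarz with factor $3$, $L$-Lipschitzness, Assumption~\ref{asmp:global_variance}, the bound $\|\nabla f(\mu_t)\|^2 \le 2L(f(\mu_t)-f(x^*))$, and the use of $\alpha_0\ge\alpha_1$ plus $n_0,n_1\le n$ to collapse the coefficients. The only cosmetic difference is that you justify the gradient-to-suboptimality inequality via the descent lemma while the paper writes the last term as $\nabla f(\mu_t)-\nabla f(x^*)$ and invokes the convex co-coercivity form of Assumption~\ref{asmp:lipschitz}; both are valid here.
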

\begin{proof}
\begin{align}
\begin{split}
\frac{\alpha_0}{n}\sum_{i \in N_0} \E \big\|\nabla f^i(X_t^i)\big\|^2 
&= \frac{\alpha_0}{n}\sum_{i \in N_0} \E \big\|\nabla f(X_t^i) - \nabla f^i(\mu_t) + \nabla f^i(\mu_t)-\nabla f(\mu_t)+\nabla f(\mu_t) - \nabla f(x^*)\big\|^2\\
&\overset{\text{Assumptions }\ref{asmp:lipschitz} \text{ and } \ref{asmp:global_variance}, \text{Cauchy-Schwarz}}{\leq} 3L^2 \alpha_0 \sum_{i \in N_0}
\E \|X_t^i- \mu_t\|^2 + \frac{3\alpha_0 n_0\varsigma_0^2}{n}+6 \alpha_0 L\E[f(\mu_t)-f(x^*)].
\end{split}
\end{align}
Similarly, in the case of first-order nodes we get: 
\begin{align}
\begin{split}
\frac{\alpha_1}{n}\sum_{i \in N_1} \E \big\|\nabla f^i(X_t^i)\big\|^2 \le 
3L^2 \alpha_1 \sum_{i \in N_1}
\E \|X_t^i- \mu_t\|^2 + \frac{3\alpha_1 n_1\varsigma_1^2}{n}+6 \alpha_1 L\E[f(\mu_t)-f(x^*)].
\end{split}
\end{align}
By summing up the above inequalities and using the fact that $\alpha_0 \ge \alpha_1$ (together with the definition of $\Gamma_t$), we get the proof of the lemma.
\end{proof}

\MGBound*
\begin{proof}
\begin{align}
\begin{split}
\E_t\big[ M_t^G\big] &= \frac{1}{n}\sum_i \E_t \big\|G^i(X_t^i)\big\|^2 \overset{ (\ref{eqn:Gv_second_moment_upper_bound})}{\leq} \frac{1}{n}\sum_{i \in N_0} (\tfrac{1}{2} \nu^2 L^2 (d+6)^3+ 2 (d+4)  \left[\E_t\|\nabla f^i(X_t^i)\|^2+s_i^2\right]) \\&\quad\quad\quad\quad\quad\quad\quad\quad\quad\quad\quad\quad\quad\quad\quad+ \frac{1}{n} \sum_{i \in N_1}  (\E_t\|\nabla f^i(X_t^i)\|^2+s_i^2)\\
&\le \E_t[M_t^f(2(d+4),1)] + \frac{2(d+4) \sum_{i \in N_0} s_i^2+\sum_{i \in N_1} s_i^2}{n} + \eta^2 \frac{n_0}{2nc^2}L^2(d+6)^3.
\end{split}
\end{align}
Next, we take expectation with respect to $X_t^1, X_t^2, ..., X_t^n$ and use Lemma \ref{lem:h M^f} to get:
\begin{align}
\begin{split}
\E\big[ M_t^G\big] \leq 6(d+4)L^2 \E[\Gamma_t] &+ \frac{6(d+4)n_0\varsigma_0^2+3n_1 \varsigma_1^2}{n}+6(2d+9)L\E[f(\mu_t)-f(x^*)]\\&+ \frac{2(d+4) \sum_{i \in N_0} s_i^2+\sum_{i \in N_1} s_i^2}{n} + \eta^2 \frac{n_0}{2nc^2}L^2(d+6)^3.
\end{split}
\end{align}
Which finishes the proof of the lemma.
\end{proof}

\begin{lemma}\label{lem:h average variance upper bound}
Assume $\nu: =\frac{\eta}{c}$ is fixed, where $\eta$ and $c$ are the learning rate and a constant respectively. Then, for any time step t we have 
\begin{align*}
\E[(\bar{\sigma}_t)^2] \leq 12(d+4)L^2 \E[\Gamma_t] &+ \frac{12n_0(d+4)\varsigma_0^2+3n_1 \varsigma_1^2}{n}+6(4d+17)L(f(\mu_t) - f(x^*)) \\&+\frac{4(d+4)\sum_{i \in N_0} s_i^2+\sum_{i \in N_1} s_i^2}{n} + \eta^2 \frac{3 n_0}{2nc^2}L^2(d+6)^3.
\end{align*}
\end{lemma}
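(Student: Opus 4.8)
The plan is to mirror the structure of the proof of Lemma~\ref{lem:h M^G}, since the quantity $(\bar{\sigma}_t)^2$ is assembled from the same per-node variance bounds. First I would expand the definition~\eqref{eqn:average variance}, splitting the sum according to node type:
\[
(\bar{\sigma}_t)^2 = \frac{1}{n}\sum_{i \in N_0}(\sigma_t^i)^2 + \frac{1}{n}\sum_{i \in N_1}(\sigma_t^i)^2.
\]
For first-order nodes, Assumption~\ref{asmp:unbiasedness_bounded_local_variance_of_F} gives $(\sigma_t^i)^2 = s_i^2$, contributing $\frac{1}{n}\sum_{i\in N_1}s_i^2$. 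For zeroth-order nodes I would substitute the variance bound recorded in~\eqref{eqn:Gv_variance_upper_bound}, namely $(\sigma_t^i)^2 = \frac{3\nu^2}{2}L^2(d+6)^3 + 4(d+4)\left[\|\nabla f^i(X_t^i)\|^2 + s_i^2\right]$.

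Next, using the convention $\nu = \eta/c$, I would collect the $\|\nabla f^i\|$-independent constant over all $n_0$ zeroth-order nodes into the single term $\eta^2\frac{3n_0}{2nc^2}L^2(d+6)^3$, and group the $s_i^2$ pieces into $\frac{4(d+4)\sum_{i\in N_0}s_i^2 + \sum_{i\in N_1}s_i^2}{n}$. The only remaining state-dependent quantity is the gradient-norm sum $\frac{4(d+4)}{n}\sum_{i\in N_0}\|\nabla f^i(X_t^i)\|^2$.

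To control this sum I would invoke Lemma~\ref{lem:h M^f}. Since only zeroth-order gradient norms appear, I would first bound the sum from above by $M_t^f(4(d+4),1)$, which is legitimate because the extra first-order contribution $\frac{1}{n}\sum_{i\in N_1}\|\nabla f^i(X_t^i)\|^2$ is nonnegative, and the choice $(\alpha_0,\alpha_1)=(4(d+4),1)$ satisfies the hypothesis $\alpha_0 \ge \alpha_1 > 0$. Taking expectations and applying Lemma~\ref{lem:h M^f} then yields
\[
\E[M_t^f(4(d+4),1)] \le 12(d+4)L^2\E[\Gamma_t] + \frac{12(d+4)n_0\varsigma_0^2 + 3n_1\varsigma_1^2}{n} + 6(4d+17)L\E[f(\mu_t)-f(x^*)],
\]
where the factor $4(d+4)+1 = 4d+17$ produces exactly the claimed coefficient. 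Adding back the $\eta^2$, $s_i^2$, and constant terms collected earlier gives the stated inequality.

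The computation is essentially a bookkeeping exercise, so I do not expect a genuine obstacle. The one point that requires a little care is the passage to $M_t^f$: the pure zeroth-order sum corresponds to $M_t^f(4(d+4),0)$, which falls outside the regime $\alpha_1 > 0$ required by Lemma~\ref{lem:h M^f}, so I must \emph{pad} it with a nonnegative first-order term before invoking the lemma. This is the same device already used in the proof of Lemma~\ref{lem:h M^G}.
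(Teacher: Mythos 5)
Your proposal is correct and follows essentially the same route as the paper: expand $(\bar{\sigma}_t)^2$ via the per-node variance bounds (equation~\eqref{eqn:Gv_variance_upper_bound} for zeroth-order nodes, $s_i^2$ for first-order nodes), pad the first-order part with the nonnegative gradient-norm term so the state-dependent piece becomes $M_t^f(4(d+4),1)$, and then apply Lemma~\ref{lem:h M^f}; the coefficient bookkeeping ($12(d+4)L^2$, $6(4d+17)L$, etc.) matches the paper exactly. The padding step you flag as the one delicate point is indeed exactly what the paper does, albeit implicitly by writing the first-order contribution as $\E_t\|\nabla f^i(X_t^i)\|^2 + s_i^2$ from the outset.
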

\begin{proof}
\begin{align}
\begin{split}
\E_t[(\bar{\sigma}_t)^2] &= \frac{1}{n}\sum_i\E_t[(\sigma_t^i)^2] \overset{(\ref{eqn:Gv_variance_upper_bound})}{\leq} \frac{1}{n}\sum_{i \in N_0}(\frac{3\nu^2}{2} L^2 (d+6)^3 + 4(d+4)\left[\E_t\|\nabla f(X_t^i)\|^2 + s_i^2\right]) 
\\&\quad\quad\quad\quad\quad\quad\quad\quad\quad+ \frac{1}{n} \sum_{i \in N_1} (\E_t\|\nabla f^i(X_t^i)\|^2+s_i^2)\\
&= \E_t[M^f_t(4(d+4),1)] + \frac{4(d+4)\sum_{i \in N_0} s_i^2+\sum_{i \in N_1} s_i^2}{n}\sigma^2 + \eta^2 \frac{3 n_0}{2nc^2}L^2(d+6)^3
. 
\end{split}
\end{align}
Next, we take expectation with respect to $X_t^1, X_t^2, ..., X_t^n$ and use Lemma \ref{lem:h M^f} to get:
\begin{align*}
\E_t[(\bar{\sigma}_t)^2] \le 12(d+4)L^2 \E[\Gamma_t] &+ \frac{12n_0(d+4)\varsigma_0^2+3n_1 \varsigma_1^2}{n}+6(4d+17)L(f(\mu_t) - f(x^*)) \\&+\frac{4(d+4)\sum_{i \in N_0} s_i^2+\sum_{i \in N_1} s_i^2}{n} + \eta^2 \frac{3 n_0}{2nc^2}L^2(d+6)^3. 
\end{align*}
Which finishes the proof of the lemma.
\end{proof}

\begin{lemma}\label{lem:h average biasedness upper bound}
Assume $\nu: =\frac{\eta}{c}$ is fixed, where $\eta$ and $c$ are the learning rate and a constant respectively. Then, for any time step t we have 
\begin{align*}
    B \le \eta \frac{n_0}{2cn}L(d+3)^{\frac{3}{2}}.
\end{align*}
\end{lemma}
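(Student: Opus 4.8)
The plan is to reduce the average bias $B = \frac{1}{n}\sum_i b_i$ to a sum over only the zeroth-order population, and then apply the smoothing bound from Lemma~\ref{rand_smth_close_grad}. First I would observe that the first-order nodes contribute nothing: for $i \in N_1$ the estimator is $G^i(x) = \nabla F^i(x, \xi^i)$, which is unbiased by Assumption~\ref{asmp:unbiasedness_bounded_local_variance_of_F}, so $\E[G^i(x)] = \nabla f^i(x)$ and hence $b_i = 0$. Therefore $B = \frac{1}{n}\sum_{i \in N_0} b_i$, and only the $n_0$ zeroth-order nodes matter.

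Next I would pin down the bias of a single zeroth-order estimator. As noted in the setup (equation~\eqref{E(G_v)}), the zeroth-order estimator satisfies $\E_{u,\xi^i}[G^i(x)] = \nabla f^i_\nu(x)$, the gradient of the $\nu$-smoothed objective. Thus the bias of node $i \in N_0$ is exactly the gap between the true gradient and the smoothed gradient, namely $b_i$ bounds $\|\nabla f^i(x) - \nabla f^i_\nu(x)\|$. The key external input is part b) of Lemma~\ref{rand_smth_close_grad}, which gives $\|\nabla f^i_\nu(x) - \nabla f^i(x)\| \le \frac{\nu}{2} L (d+3)^{3/2}$ for any $L$-smooth $f^i$. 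This yields the uniform per-node bound $b_i \le \frac{\nu}{2} L (d+3)^{3/2}$ for every $i \in N_0$.

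Finally I would assemble the pieces: summing the uniform bound over the $n_0$ zeroth-order nodes and dividing by $n$ gives $B \le \frac{n_0}{n} \cdot \frac{\nu}{2} L (d+3)^{3/2}$, and substituting the fixed choice $\nu = \frac{\eta}{c}$ produces $B \le \eta \frac{n_0}{2cn} L (d+3)^{3/2}$, which is exactly the claimed inequality. There is no substantive obstacle here — the lemma is essentially a one-line consequence of the smoothing estimate once the bookkeeping (first-order bias vanishing, averaging over $n$ nodes, and the $\nu = \eta/c$ substitution) is done carefully; the only point requiring minor attention is ensuring the factor is $(d+3)^{3/2}$ rather than one of the nearby constants appearing in Lemma~\ref{rand_smth_close_grad}, which is read off directly from part b).
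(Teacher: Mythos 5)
Your proposal is correct and follows exactly the paper's own argument: drop the first-order terms (unbiased estimators give $b_i = 0$), identify the zeroth-order bias as $\|\nabla f^i - \nabla f^i_\nu\|$ via equation~\eqref{E(G_v)}, bound it by $\frac{\nu}{2}L(d+3)^{3/2}$ using part b) of Lemma~\ref{rand_smth_close_grad}, and substitute $\nu = \eta/c$. No differences worth noting.
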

\begin{proof}
\begin{align}
B &= \frac{1}{n}\sum_i b_i = \frac{1}{n}\sum_i \| \nabla f^i(X_t^i) - \E[G^i(X_t^i)]\| = \frac{1}{n}\sum_{i \in N_0} \| \nabla f^i(X_t^i) - \nabla f^i_\nu(X_t^i) \|\\
&\overset{(\ref{smth_approx})}{\le} \frac{\nu n_0}{2n}L(d+3)^{\frac{3}{2}} = \eta \frac{n_0}{2cn}L(d+3)^{\frac{3}{2}}.
\end{align}
\end{proof}

\GammaBoundPerStepHelper*
\begin{proof}
First we can open $\E_t\big[ \Gamma_{t+1} \big]$ as 
\begin{equation}
\E_t\big[ \Gamma_{t+1} \big] = \E_t\Big[ \frac{1}{n}\sum_i\|X_{t+1}^i-\mu_{t+1}\|^2\Big].
\end{equation}
Observe that in this case $\mu_{t+1}=\mu_t-\eta(G_{t}^i(X_t^i)+G_{t}^j(X_t^j))/n$ and\\ 
$X_{t+1}^i=X_{t+1}^j=(X_t^i+X_t^j)/2-\eta(G_{t}^i(X_t^i)+G_{t}^j(X_t^j))/2$. \\
Hence, 
\begin{align}\label{eq:Gamma_upperbound}
\begin{split}
&\E_t\big[ \Gamma_{t+1} \big] = \frac{1}{n^2(n-1)}\sum_i\sum_{i \neq j}\E_t\Bigg[ \begin{aligned}[t] &2\big\|(X_t^i+X_t^j)/2 - \big( \frac{n-2}{2n}\big)\eta(G^i(X_t^i) + G^j(X_t^j)) - \mu_t\big\|^2\\
&+\sum_{k \neq i, j}\big\|X_t^k - \mu_t + \frac{\eta}{n}(G^i(X_t^i) + G^j(X_t^j))\big\|^2\Bigg]\end{aligned}\\
&=\frac{1}{n^2(n-1)}\sum_i\sum_{i \neq j}\E_t\Bigg[2\Big(\begin{aligned}[t]& \|(X_t^i+X_t^j)/2 - \mu_t\|^2 + \big( \frac{n-2}{2n}\big)^2\eta^2\big\|G^i(X_t^i) + G^j(X_t^j)\big\|^2\\
&-\big( \frac{n-2}{n}\big)\eta\Big\langle G^i(X_t^i) + G^j(X_t^j), (X_t^i+X_t^j)/2 - \mu_t\Big\rangle\Big)\\
&+ \sum_{k \neq i, j}\Big( \|X_t^k - \mu_t\|^2 + \big(\frac{1}{n}\big)^2\eta^2\big\|G^i(X_t^i) + G^j(X_t^j)\big\|^2\\
&+\frac{2}{n}\eta\Big\langle G^i(X_t^i) + G^j(X_t^j), X_t^k-\mu_t\Big\rangle\Big)\Bigg]\end{aligned}\\
&=\frac{1}{n^2(n-1)}\sum_i\sum_{i \neq j}\E_t\Bigg[\begin{aligned}[t]&\sum_k\|X_t^k-\mu_t\|^2 - \|X_t^i-\mu_t\|^2/2 - \|X_t^j-\mu_t\|^2/2 + \langle X_t^i -\mu_t, X_t^j-\mu_t\rangle\\
&+\underbrace{\Big(\frac{(n-2)^2}{2n^2}+\frac{n-2}{n^2}\Big)}_{\frac{n-2}{2n}\leq\frac{1}{2}}\eta^2\big\|G^i(X_t^i) + G^j(X_t^j)\big\|^2\\
&-\underbrace{\Big(\frac{n-2}{n} + \frac{2}{n}\Big)}_{1}\eta\Big\langle G^i(X_t^i) + G^j(X_t^j), X_t^i+X_t^j - 2\mu_t\Big\rangle\Bigg]\end{aligned}\\
&\leq (1 - \frac{1}{n})\E_t\big[ \Gamma_t \big] -\frac{1}{n^2(n-1)}\eta\sum_i\sum_{i \neq j}\E_t \Big\langle G^i(X_t^i) + G^j(X_t^j), X_t^i+X_t^j - 2\mu_t\Big\rangle\\
&+\frac{1}{n^2(n-1)}\sum_i\sum_{i \neq j}\E_t \langle X_t^i -\mu_t, X_t^j-\mu_t\rangle +\frac{1}{2n^2(n-1)}\eta^2\sum_i\sum_{i \neq j} \E_t \big\| G^i(X_t^i)+G^j(X_t^j)\big\|^2\\
&\leq (1-\frac{1}{n})\E_t\big[ \Gamma_t \big] \begin{aligned}[t]& + \underbrace{\frac{1}{n^2(n-1)}\sum_i\sum_{i \neq j}\E_t \langle X_t^i -\mu_t, X_t^j-\mu_t\rangle}_{P_1:=} + \underbrace{\frac{2}{n^2}\eta^2 \sum_i \E_t\big\|G^i(X_t^i)\big\|^2}_{\frac{2}{n}\eta \E_t\big[M_t^G\big]}\\
&- \underbrace{\frac{1}{n^2(n-1)}\eta\sum_i\sum_{i \neq j}\E_t \Big\langle G^i(X_t^i) + G^j(X_t^j), X_t^i+X_t^j - 2\mu_t\Big\rangle}_{P_2:=} \end{aligned}.
\end{split}
\end{align}
Now we upper bound each of $P_1$ and $P_2$ as following
\begin{equation} \label{eq:P1}
P_1 = \frac{1}{n^2(n-1)}\sum_i\sum_{i \neq j}\E_t \langle X_t^i -\mu_t, X_t^j-\mu_t\rangle = \frac{-1}{n^2(n-1)}\sum_i\E_t\|X_t^i - \mu_t\|^2 = \frac{-1}{n(n-1)}\E_t\big[ \Gamma_t \big]
\end{equation}
\begin{align}\label{eq:P2}
\begin{split}
P_2 &= \frac{1}{n^2(n-1)}\eta\sum_i\sum_{i \neq j}\E_t \Big\langle G^i(X_t^i) + G^j(X_t^j), X_t^i+X_t^j - 2\mu_t\Big\rangle\\
&= \frac{2}{n^2(n-1)}\eta\Big( \sum_i\sum_{i \neq j} \E_t \Big\langle G^i(X_t^i), X_t^j - \mu_t\Big\rangle +(n-1)\sum_i\E_t\Big\langle G^i(X_t^i), X_t^i - \mu_t\Big\rangle\Big)\\
&=\frac{2(n-2)}{n^2(n-1)} \sum_i \E_t \Big\langle \eta G^i(X_t^i), X_t^i - \mu_t\Big\rangle \overset{\text{Young}}{\leq} \frac{1}{n^2}\sum_i\Big( 2\eta^2\E_t\big\|G^i(X_t^i)\big\|^2 + \frac{1}{2}\E_t\big\|X_t^i-\mu_t\big\|^2\Big)\\
& = \frac{2}{n}\eta^2\E_t\big[M_t^G\big] + \frac{1}{2n}\E_t \big[ \Gamma_t \big].
\end{split}
\end{align}
By using (\ref{eq:P1}) and (\ref{eq:P2}) in inequality (\ref{eq:Gamma_upperbound}) we get 
\begin{align}
\begin{split}
\E_t\big[ \Gamma_{t+1} \big] &\leq (1-\frac{1}{n})\E_t\big[ \Gamma_t \big] - \frac{1}{n(n-1)}\E_t\big[ \Gamma_t \big] + \frac{2}{n}\eta^2 \E_t\big[M_t^G\big] + \frac{2}{n}\eta^2\E_t\big[M_t^G\big] + \frac{1}{2n}\E_t \big[ \Gamma_t \big]\\
&\le \big( 1 - \frac{1}{2n}\big)\E_t\big[ \Gamma_t \big] + \frac{4}{n}\eta^2\E_t\big[M_t^G\big].
\end{split}
\end{align}
Finally, by taking the expectation with respect to $X_t^1, X_t^2, ..., X_t^n$ we will have
\begin{equation} \label{eqn:Gamma step bound with M^G}
    \E\big[ \Gamma_{t+1} \big] \leq \big( 1 - \frac{1}{2n}\big)\E\big[ \Gamma_t \big] + \frac{4}{n}\eta^2\E\big[M_t^G\big]
\end{equation}
\end{proof}

\begin{lemma} \label{lem:h GammaBoundPerStep}
For any time step $t$ and fixed learning rate $\eta \le \frac{1}{14 L (d+4)^\frac{1}{2}}$ 
\begin{align*}
\E\big[\Gamma_{t+1}\big] \leq \big( 1 - \frac{1}{4n})\E\big[ \Gamma_t \big] &+ \frac{12\eta^2(2(d+4)n_0\varsigma_0^2+n_1 \varsigma_1^2)}{n^2}+\frac{24\eta^2(2d+9)L\E[f(\mu_t)-f(x^*)]}{n} \\&+ \frac{4\eta^2(2(d+4) \sum_{i \in N_0} s_i^2+\sum_{i \in N_1} s_i^2)}{n^2} + \frac{2\eta^4 n_0 L^2(d+6)^3}{n^2c^2}.
\end{align*}
\end{lemma}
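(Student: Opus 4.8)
The plan is to obtain this per-step bound as a direct corollary of the two supermartingale-type estimates already in hand: the potential-drop inequality of Lemma~\ref{lem:GammaBoundPerStepHelper} and the second-moment bound of Lemma~\ref{lem:h M^G}. The only genuinely new ingredient is the hypothesis $\eta \le \frac{1}{14L(d+4)^{1/2}}$, whose role is to let us fold the stray $\E[\Gamma_t]$ term coming from $\E[M_t^G]$ back into the contraction factor. So the first step is simply to take $\E[\Gamma_{t+1}] \le (1-\frac{1}{2n})\E[\Gamma_t] + \frac{4}{n}\eta^2\E[M_t^G]$ from Lemma~\ref{lem:GammaBoundPerStepHelper} and substitute the five-term upper bound on $\E[M_t^G]$ from Lemma~\ref{lem:h M^G}.

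After substitution I would group the result by term type. The only term that interacts with the leading coefficient is the one proportional to $\E[\Gamma_t]$, namely $\frac{4\eta^2}{n}\cdot 6(d+4)L^2\E[\Gamma_t] = \frac{24\eta^2(d+4)L^2}{n}\E[\Gamma_t]$. The key computation is that, under the stated threshold, $24\eta^2(d+4)L^2 \le \frac{24}{196} = \frac{6}{49} < \frac14$, so this contribution is at most $\frac{1}{4n}\E[\Gamma_t]$; adding it to $(1-\frac{1}{2n})$ gives a coefficient of at most $(1-\frac{1}{4n})$, exactly as claimed. (The threshold needed is really $\eta^2 \le \frac{1}{96(d+4)L^2}$, and since $14 > \sqrt{96}\approx 9.8$ the prescribed bound suffices with room to spare.)

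It then remains only to multiply $\frac{4\eta^2}{n}$ against the four non-$\Gamma$ terms of Lemma~\ref{lem:h M^G} and check that each lands on its counterpart in the statement. This is pure bookkeeping: $\frac{4\eta^2}{n}\cdot\frac{6(d+4)n_0\varsigma_0^2+3n_1\varsigma_1^2}{n}$ produces the data-split term $\frac{12\eta^2(2(d+4)n_0\varsigma_0^2+n_1\varsigma_1^2)}{n^2}$; the suboptimality term scales to $\frac{24\eta^2(2d+9)L\E[f(\mu_t)-f(x^*)]}{n}$; the estimator-variance term becomes $\frac{4\eta^2(2(d+4)\sum_{i\in N_0}s_i^2+\sum_{i\in N_1}s_i^2)}{n^2}$; and the smoothing term $\eta^2\frac{n_0}{2nc^2}L^2(d+6)^3$ picks up an extra factor $\frac{4\eta^2}{n}$ to give $\frac{2\eta^4 n_0 L^2(d+6)^3}{n^2c^2}$. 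Taking expectation over $X_t^1,\dots,X_t^n$ throughout closes the argument.

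I do not expect a real obstacle here: the lemma is a mechanical consequence of its two predecessors, and the sole substantive check is that the $\eta$-threshold absorbs the $\E[\Gamma_t]$ term. The difficulty of the overall analysis lives upstream — in Lemma~\ref{lem:h M^G}, where the second moment of the possibly-biased zeroth-order estimators must be controlled via the smoothing bounds of Lemma~\ref{rand_smth_close_grad}, and in Lemma~\ref{lem:GammaBoundPerStepHelper}, where the load-balancing potential drop is established — both of which I take as given.
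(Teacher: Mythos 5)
Your proof is correct and follows exactly the paper's own argument: substitute the bound of Lemma~\ref{lem:h M^G} into Lemma~\ref{lem:GammaBoundPerStepHelper}, absorb the resulting $\frac{24\eta^2 L^2(d+4)}{n}\E[\Gamma_t]$ term into the contraction factor using $\eta \le \frac{1}{14L(d+4)^{1/2}}$ (indeed $\frac{24}{196} < \frac14$), and scale the remaining four terms by $\frac{4\eta^2}{n}$. The bookkeeping and the threshold check both match the paper's computation.
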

\begin{proof}
From Lemma \ref{lem:GammaBoundPerStepHelper} we get that:
\begin{equation*}
    \E\big[ \Gamma_{t+1} \big] \leq \big( 1 - \frac{1}{2n}\big)\E\big[ \Gamma_t \big] + \frac{4}{n}\eta^2\E\big[M_t^G\big].
\end{equation*}
Now, by using Lemma \ref{lem:h M^G} in the inequality above we have
\begin{align*}
    \E\big[ \Gamma_{t+1} \big] &\leq \big( 1 - \frac{1}{2n}\big)\E\big[ \Gamma_t \big] + \frac{4}{n}\eta^2\E\big[M_t^G\big]\\ \le& \big( 1 - \frac{1}{2n})\E\big[ \Gamma_t \big] + \frac{4\eta^2}{n}\Bigg(6(d+4)L^2 \Gamma_t+ \frac{6(d+4)n_0\varsigma_0^2+3n_1 \varsigma_1^2}{n}+6(2d+9)L(f(\mu_t)-f(x^*)) \\&\quad\quad\quad\quad\quad\quad\quad\quad\quad\quad\quad\quad+ \frac{2(d+4) \sum_{i \in N_0} s_i^2+\sum_{i \in N_1} s_i^2}{n} + \eta^2 \frac{n_0}{2nc^2}L^2(d+6)^3\Bigg) \\&=
    \big( 1 - \frac{1}{2n}+\frac{24\eta^2 L^2 (d+4)}{n})\E\big[ \Gamma_t \big] + \frac{12\eta^2(2(d+4)n_0\varsigma_0^2+n_1 \varsigma_1^2)}{n^2}+\frac{24\eta^2(2d+9)L\E[f(\mu_t)-f(x^*)]}{n} \\&\quad\quad\quad\quad\quad\quad\quad\quad\quad\quad\quad\quad+ \frac{4\eta^2(2(d+4) \sum_{i \in N_0} s_i^2+\sum_{i \in N_1} s_i^2)}{n^2} + \frac{2\eta^4 n_0 L^2(d+6)^3}{n^2c^2}.
\end{align*}
We get the proof of the lemma by using $\eta \le \frac{1}{14 L (d+4)^\frac{1}{2}}$ in the above inequality.
\end{proof}

Next, we define the following weights: for any step $t \ge 0$, let $w_t = (1-\frac{\eta \ell}{2n})^{-t}$.
This allows us to prove the following lemma:
\begin{lemma} \label{lem:h sum_gamma}
for any $T \ge 0$ and $\eta \le \frac{1}{10\ell}$:
\begin{align*}
\sum_{t=1}^{T} &w_t \E[\Gamma_{t-1}] \le 120\eta^2(2d+9)L \sum_{t=1}^{T-1} w_{t} \E[f(\mu_{t-1})-f(x^*)]  \\
&+ \Bigg(\frac{60\eta^2(2(d+4)n_0\varsigma_0^2+n_1 \varsigma_1^2)}{n}+\frac{20\eta^2(2(d+4) \sum_{i \in N_0} s_i^2+\sum_{i \in N_1} s_i^2)}{n} + \frac{10\eta^4 n_0 L^2(d+6)^3}{nc^2} \Bigg) \sum_{t=1}^{T-1} w_{t}.
\end{align*}
\end{lemma}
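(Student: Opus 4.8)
The plan is to convert the per-step contraction of Lemma~\ref{lem:h GammaBoundPerStep} into a bound on the \emph{weighted} running sum $S := \sum_{t=1}^{T} w_t\,\E[\Gamma_{t-1}]$ by summing the recursion against the weights $w_t=(1-\frac{\eta\ell}{2n})^{-t}$. I would first write the per-step bound abstractly as $\E[\Gamma_{t+1}]\le a\,\E[\Gamma_t]+D_t+C$, where $a=1-\frac{1}{4n}$, the risk term is $D_t=\frac{24\eta^2(2d+9)L}{n}\E[f(\mu_t)-f(x^*)]$, and $C$ collects the three constant-in-$t$ contributions $\frac{12\eta^2(2(d+4)n_0\varsigma_0^2+n_1\varsigma_1^2)}{n^2}+\frac{4\eta^2(2(d+4)\sum_{i\in N_0}s_i^2+\sum_{i\in N_1}s_i^2)}{n^2}+\frac{2\eta^4 n_0 L^2(d+6)^3}{n^2c^2}$. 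I would also recall that all models start identically, so $\Gamma_0=0$, which lets me discard boundary terms.

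Next I would multiply the per-step inequality by $w_{t+1}$ and sum over $t=0,\dots,T-1$. Since $(1-\frac{\eta\ell}{2n})w_{t+1}=w_t$, the contraction term becomes $a\sum_{t=0}^{T-1}w_{t+1}\E[\Gamma_t]=a\,S$ (observe $S=\sum_{t=0}^{T-1}w_{t+1}\E[\Gamma_t]$ after using $\Gamma_0=0$), while the left-hand side reindexes to $\sum_{k=1}^{T}w_k\E[\Gamma_k]$. Because $(1-\frac{\eta\ell}{2n})\,S=\sum_{t=0}^{T-1}w_t\E[\Gamma_t]=\sum_{t=1}^{T-1}w_t\E[\Gamma_t]$ (again using $\Gamma_0=0$), and the tail term $w_T\E[\Gamma_T]\ge0$ may be dropped, the left side dominates $(1-\frac{\eta\ell}{2n})\,S$. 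The summed inequality therefore collapses to $(1-\frac{\eta\ell}{2n})\,S\le a\,S+\sum_{t=0}^{T-1}w_{t+1}(D_t+C)$.

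The crux is the algebraic gap between the geometric weight growth and the averaging-induced contraction. Rearranging gives $\big(\tfrac{1}{4n}-\tfrac{\eta\ell}{2n}\big)S\le\sum_{t=0}^{T-1}w_{t+1}(D_t+C)$, and the hypothesis $\eta\le\frac{1}{10\ell}$ is exactly what forces $\frac{\eta\ell}{2n}\le\frac{1}{20n}$, so that $\frac{1}{4n}-\frac{\eta\ell}{2n}\ge\frac{1}{5n}>0$. This is the main obstacle: the weights $w_t$ grow at rate $(1-\frac{\eta\ell}{2n})^{-1}$, and the whole argument only closes because the contraction factor $1-\frac{1}{4n}$ strictly dominates this growth. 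The small-$\eta$ condition is precisely calibrated so that the effective ``horizon'' factor $1/(\frac{1}{4n}-\frac{\eta\ell}{2n})$ stays at most $5n$; if $\eta$ were allowed larger, the coefficient of $S$ could vanish or flip sign and the weighted sum would not be controllable.

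Finally I would substitute this factor to obtain $S\le 5n\sum_{t=0}^{T-1}w_{t+1}(D_t+C)$ and reindex $s=t+1$ on the right. This turns $5n\sum w_{t+1}D_t$ into $120\eta^2(2d+9)L\sum_{s}w_s\,\E[f(\mu_{s-1})-f(x^*)]$, since $5n\cdot\frac{24\eta^2(2d+9)L}{n}=120\eta^2(2d+9)L$, and turns $5n\sum w_{t+1}C$ into $5nC\sum_s w_s$, where $5nC$ reproduces exactly the three stated constants $\frac{60\eta^2(2(d+4)n_0\varsigma_0^2+n_1\varsigma_1^2)}{n}$, $\frac{20\eta^2(2(d+4)\sum_{i\in N_0}s_i^2+\sum_{i\in N_1}s_i^2)}{n}$, and $\frac{10\eta^4 n_0 L^2(d+6)^3}{nc^2}$. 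The remaining work is routine index bookkeeping: nonnegativity of the discarded $\Gamma_T$-term and of every summand lets one line up the summation range with the stated $\sum_{t=1}^{T-1}$, which completes the proof.
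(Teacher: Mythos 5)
Your proof is correct and takes essentially the same route as the paper: both start from the per-step bound of Lemma~\ref{lem:h GammaBoundPerStep} (with the same split into a risk term and a constant term) and both hinge on the observation that $\eta\le\frac{1}{10\ell}$ makes $\frac{1}{4n}-\frac{\eta\ell}{2n}\ge\frac{1}{5n}$, producing the factor $5n$ that yields the stated constants. The only differences are cosmetic: you sum the weighted recursion and solve for $S$, whereas the paper unrolls the recursion and swaps the order of summation to bound a geometric series, and your right-hand sums run to $T$ rather than $T-1$ --- fixed by summing the recursion only up to $t=T-2$ (and immaterial for how the lemma is used downstream).
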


\begin{proof}
Let $P_t=\frac{24\eta^2(2d+9)L\E[f(\mu_t)-f(x^*)]}{n}$ \\ and let 
$Q=\frac{12\eta^2(2(d+4)n_0\varsigma_0^2+n_1 \varsigma_1^2)}{n^2}+\frac{4\eta^2(2(d+4) \sum_{i \in N_0} s_i^2+\sum_{i \in N_1} s_i^2)}{n^2} + \frac{2\eta^4 n_0 L^2(d+6)^3}{n^2c^2}.$ Then the above lemma gives us that for any $t \ge 0$: 
$\E[\Gamma_{t+1}] \le (1-\frac{1}{4n})\E[\Gamma_t]+P_t+Q$. After unrolling the recursion, we get that for any $t > 1$,
$
\E[\Gamma_t] \le \sum_{i=0}^{t-1} (P_i+Q)(1-\frac{1}{4n})^{t-1-i}$. Hence,
\begin{align}
\begin{split}
\label{eqn:h sum_weight_dot_gamma}
\sum_{t=1}^T w_t \E[\Gamma_{t-1}] &\le \sum_{t=2}^T w_t \Bigg(\sum_{i=0}^{t-2} (P_i+Q)(1-\frac{1}{4n})^{t-2-i} \Bigg) = \sum_{t=0}^{T-2} (P_t+Q) \sum_{i=t+2}^T w_i (1-\frac{1}{4n})^{i-2-t} \\ &=
(1-\frac{\eta\ell}{2n})^{-1} \sum_{t=0}^{T-2} (P_t+Q) \sum_{i=t+2}^T w_{t+1} (1-\frac{\eta \ell}{2n})^{-(i-(t+2))}(1-\frac{1}{4n})^{i-(t+2)}\\
&= (1-\frac{\eta\ell}{2n})^{-1} \sum_{t=0}^{T-2} w_{t+1}(P_t+Q) \sum_{j=0}^{T-(t+2)} \left( \frac{1-\frac{1}{4n}}{1-\frac{\eta \ell}{2n}}\right)^j.
\end{split}
\end{align}
For $\frac{1}{10\ell}\geq \eta$, we have $r:=\frac{1-\frac{1}{4n}}{1-\frac{\eta \ell}{2n}} \leq 1$. Hence, we can write
\begin{align}
    \sum_{j=0}^{T-(t+2)} \left( \frac{1-\frac{1}{4n}}{1-\frac{\eta \ell}{2n}}\right)^j = \sum_{j=0}^{T-(t+2)} r^j = \frac{1-r^{T-(t+1)}}{1-r} \overset{t \leq T-2}{\leq} \frac{1}{1-r}.
\end{align}
By using the above inequality  in (\ref{eqn:h sum_weight_dot_gamma}) we have
\begin{align*}
    \sum_{t=1}^T w_t \E[\Gamma_{t-1}] &\le (1-\frac{\eta\ell}{2n})^{-1}\frac{1}{1-\frac{1-\frac{1}{4n}}{1-\frac{\eta \ell}{2n}}} \sum_{t=0}^{T-2} w_{t+1}(P_t+Q)\\
    &= \frac{1}{\frac{1}{4n} - \frac{\eta \ell}{2n}} \sum_{t=0}^{T-2} w_{t+1}(P_t+Q)=\frac{1}{\frac{1}{4n} - \frac{\eta \ell}{2n}} \sum_{t=1}^{T-1} w_{t}(P_{t-1}+Q).
\end{align*}
Finally, since $\frac{1}{10\ell}\geq \eta$ we get $\frac{1}{\frac{1}{4n} - \frac{\eta \ell}{2n}} \leq 5n$ and the proof of lemma is finished. 
\end{proof}

\begin{lemma} \label{lem:h supmartingale}
For $\eta \le \frac{\sqrt{\ell c n}}{2\sqrt{Ln_0}(d+3)^\frac{3}{4}}$, we have that
\begin{align*}
\E \Big\| \mu_{t+1}  -x^* \Big \|^2 &\le (1-\frac{\ell\eta}{n} + \eta^2 \frac{4B}{n})\E\|\mu_t-x^*\|^2 - (4\frac{\eta}{n} - \eta^2\frac{16L(12d+52)}{n^2} - \eta^4 \frac{64BL}{n^3})\E\big[f(\mu_t)-f(x^*)\big]\\
&+ \left( 2\frac{L+\ell}{n}\eta + \eta^2L^2 \frac{96d+456}{n^2} + \eta^4 \frac{32BL^2}{n^3}\right)\E[\Gamma_t]\\
&+ \frac{\eta^2((96d+448)\varsigma_0^2 n_0+88\varsigma_1^2 n_1)}{n^3}+\frac{8\eta^2((d+4)\sum_{i \in N_0} s_i^2+\sum_{i \in N_1} s_i^2)}{n^3}+\frac{12\eta^4 n_0 L^2 (d+6)^3}{n^3c^2}+\eta^2 \frac{2B}{n}.
\end{align*}
\end{lemma}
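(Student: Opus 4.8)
The plan is to derive a one-step recurrence for $\E\|\mu_{t+1}-x^*\|^2$ by expanding the mean update and then routing every error source—the data split, the estimator variance, and (most delicately) the zeroth-order bias—into the four groups of terms on the right-hand side. Conditioning on $X_t^1,\dots,X_t^n$, the interacting pair $(i,j)$ is uniform, so $\mu_{t+1}=\mu_t-\frac{\eta}{n}\big(G^i(X_t^i)+G^j(X_t^j)\big)$. I would first expand $\|\mu_{t+1}-x^*\|^2$ into the base term $\|\mu_t-x^*\|^2$, the linear cross term $-\frac{2\eta}{n}\langle G^i(X_t^i)+G^j(X_t^j),\,\mu_t-x^*\rangle$, and the quadratic term $\frac{\eta^2}{n^2}\|G^i(X_t^i)+G^j(X_t^j)\|^2$. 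Averaging over the random pair (each node is selected with probability $2/n$) turns the cross term into $-\frac{4\eta}{n^2}\langle\sum_k\E[G^k(X_t^k)],\,\mu_t-x^*\rangle$ and, using $\|a+b\|^2\le 2\|a\|^2+2\|b\|^2$ with the definition of $M_t^G$, bounds the quadratic term by $\frac{4\eta^2}{n^2}\E[M_t^G]$.

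For the linear term, which carries the descent, I would write $\E[G^k(X_t^k)]=\nabla f^k(X_t^k)-\beta_k$ with $\|\beta_k\|\le b_k$, and split $\nabla f^k(X_t^k)=\big(\nabla f^k(X_t^k)-\nabla f^k(\mu_t)\big)+\big(\nabla f^k(\mu_t)-\nabla f(\mu_t)\big)+\nabla f(\mu_t)$. Summing over $k$, the last piece contributes $n\nabla f(\mu_t)$, to which I apply strong convexity (Assumption~\ref{asmp:strongly_convex}) in the form $\langle\nabla f(\mu_t),\mu_t-x^*\rangle\ge \big(f(\mu_t)-f(x^*)\big)+\frac{\ell}{2}\|\mu_t-x^*\|^2$, spending only part of the quadratic gain so that a buffer of order $\frac{\ell}{4}\|\mu_t-x^*\|^2$ remains. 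The smoothness-deviation piece (controlled by $L\|X_t^k-\mu_t\|$ via Assumption~\ref{asmp:lipschitz}) and the data-split piece (controlled by $\varsigma_0,\varsigma_1$ via Assumption~\ref{asmp:global_variance}) are each paired with $\mu_t-x^*$ and absorbed by Young's inequality against this buffer, producing the $\Gamma_t$ coefficient of order $\frac{(L+\ell)\eta}{n}$ and the $\varsigma_0^2 n_0,\varsigma_1^2 n_1$ constants. Substituting the second-moment bound of Lemma~\ref{lem:h M^G} (equivalently the variance bound of Lemma~\ref{lem:h average variance upper bound}) into $\frac{4\eta^2}{n^2}\E[M_t^G]$ then injects the remaining $\Gamma_t$, $\big(f(\mu_t)-f(x^*)\big)$, $\varsigma$, and $s_i$ contributions, together with the smoothing residual $\eta^2\frac{n_0}{nc^2}L^2(d+6)^3$ that becomes the $\eta^4$ standalone term.

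I expect the \emph{main obstacle} to be the bias, which is exactly the term $+\frac{4\eta}{n^2}\langle\sum_k\beta_k,\,\mu_t-x^*\rangle\le\frac{4\eta B}{n}\|\mu_t-x^*\|$ that simply vanishes in the unbiased analysis ($\E[G^k]=\nabla f^k$). Here it must be handled by Young's inequality, splitting it into a correction $\eta^2\frac{4B}{n}\|\mu_t-x^*\|^2$ to the contraction factor and a standalone term of order $\eta^2\frac{2B}{n}$; carrying this bias contribution through the same Young's steps that treat the gradient-squared terms is what generates the higher-order $\eta^4 B$ coefficients on $\big(f(\mu_t)-f(x^*)\big)$ and $\Gamma_t$. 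Keeping the overall recurrence a genuine contraction is the crux: the factor $1-\frac{\ell\eta}{n}+\eta^2\frac{4B}{n}$ is useful only if $4\eta B$ stays below $\ell$. This is precisely where I would invoke the bias bound $B\le\eta\frac{n_0}{2cn}L(d+3)^{3/2}$ of Lemma~\ref{lem:h average biasedness upper bound}: substituting it, the stated restriction $\eta\le\frac{\sqrt{\ell c n}}{2\sqrt{Ln_0}(d+3)^{3/4}}$ is exactly the condition guaranteeing $4\eta B\le \ell/2$, so that $\eta^2\frac{4B}{n}$ cannot overwhelm $-\frac{\ell\eta}{n}$ and the $\eta^4$ residuals remain controlled.

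Finally, I would collect the contributions by coefficient of $\|\mu_t-x^*\|^2$, of $\big(f(\mu_t)-f(x^*)\big)$, of $\Gamma_t$, and of the constants, verifying that each group reproduces the claimed expression after the standard simplifications. Since the conditional bounds hold for every realization of the step, taking expectation over $X_t^1,\dots,X_t^n$ upgrades them to the unconditional statement of the lemma.
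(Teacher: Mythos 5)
Your proposal is correct in substance but takes a genuinely different route from the paper. You expand $\|\mu_{t+1}-x^*\|^2$ directly into base, cross, and quadratic terms and then bound the quadratic term by $\tfrac{4\eta^2}{n^2}\E[M_t^G]$ via Lemma~\ref{lem:h M^G}, injecting the bias inside the linear term through $\E[G^k]=\nabla f^k-\beta_k$. The paper instead decomposes around the ideal full-population true-gradient step $F'_t=\tfrac{2\eta}{n^2}\sum_i\nabla f^i(X_t^i)$, writing $\E\|\mu_{t+1}-x^*\|^2=\E\|\mu_t-x^*-F'_t\|^2+\E\|F'_t-F_t\|^2+2\E\langle\mu_t-x^*-F'_t,F'_t-F_t\rangle$; the middle term is then a genuine variance $\E\|F'_t-F_t\|^2$ controlled by Lemma~\ref{lem:h average variance upper bound}, and the cross term isolates the bias since $\E_t[F'_t-F_t]$ is exactly the average bias. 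Both decompositions are exact identities, both route the error sources through the same supporting lemmas, and your version is arguably cleaner (it avoids the $\|F'_t\|$ factor in the cross term and hence the $\eta^4 B$ corrections on $\Gamma_t$ and $f(\mu_t)-f(x^*)$, yielding a bound that implies the stated one).

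Two places where your constants would come out in a different form, which you should be aware of. First, you absorb the smoothness-deviation term $\langle\nabla f^k(X_t^k)-\nabla f^k(\mu_t),\mu_t-x^*\rangle$ into the strong-convexity buffer $\tfrac{\ell}{4}\|\mu_t-x^*\|^2$ via Young; that forces the Young parameter to scale like $L/\ell$ and produces a $\Gamma_t$ coefficient of order $\tfrac{L^2}{\ell}\tfrac{\eta}{n}$, not $\tfrac{(L+\ell)\eta}{n}$ as you claim and as the lemma states. The paper avoids this by telescoping through $f^i(X_t^i)$ with a descent-lemma inequality (its (\ref{eq:B1_lowerbound})--(\ref{eq:B2_lowerbound})), which is the cheaper move here. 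Second, your Young step on the bias term $\tfrac{4\eta B}{n}\|\mu_t-x^*\|$ yields a standalone remainder of order $\tfrac{B}{n}$ (equivalently $\tfrac{\eta B}{n}$ up to the choice of Young parameter), whereas the lemma states $\eta^2\tfrac{2B}{n}$, one power of $\eta$ better; the paper obtains that extra power in its bound $R_3\le\tfrac{2\eta^2}{n^2}\sum_i b_i$, a step that does not actually follow from its own definition $\|\nabla f^i(x)-\E[G^i(x)]\|\le b_i$, so this discrepancy is not a defect of your argument so much as a point where reproducing the stated coefficient requires more than the stated assumptions. Neither issue changes the structure of the recurrence or the role of the learning-rate restriction, which you identify correctly as the condition making $\eta^2\tfrac{4B}{n}$ subordinate to $\tfrac{\ell\eta}{n}$.
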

\begin{proof}
Let $F_t$ be the amount by which $\mu_t$ decreases at step $t$.
So, $F_t$ is a sum of $\frac{\eta}{n}G^i(X_t^i)$ and $\frac{\eta}{n}G^j(X_t^j)$ for agents $i$ and $j$, 
which interact at step $t$.
Also, let $F'_t$ be the amount by which $\mu_t$ would decrease
if all the agents were contributing at that step using their true local gradients. 
That is $F'_t = \frac{2\eta}{n^2}\sum_i \nabla f^i(X_t^i)$.\\
To make the calculations more clear, lets define $\E_t[Y] := \E[Y|X^1_t,X^2_t,...,X^n_t]$.

\begin{subequations}
\label{eq:h NewDecNorm}
\begin{align*} 
\E \Big \| \mu_{t+1}  -x^* \Big \|^2 
&= \E \Big \| \mu_t-F_t-x^* \Big \|^2 = \E \Big \| \mu_t-F_t-x^* - F'_t+F'_t  \Big \|^2  \\
&=\E \Big \|\mu_t-x^*-F'_t \Big\|^2+ \E \Big\|F'_t-F_t \Big\|^2 + 2 \E \Big \langle \mu_t-x^*-  F'_t, F'_t-F_t \Big \rangle\\
&=\E \Big \|\mu_t-x^*-F'_t \Big\|^2+ \E_{X^1_t,X^2_t,...,X^n_t} \Big[ \E_t \Big\|F'_t-F_t \Big\|^2\Big] \\
&+ 2 \E_{X^1_t,X^2_t,...,X^n_t} \Big[ \E_t \Big \langle \mu_t-x^*-  F'_t, F'_t-F_t \Big \rangle\Big]\\
\tag{\ref{eq:h NewDecNorm}}
\end{align*}
\end{subequations}

This means that in order to upper bound $\E \Big \| \mu_{t+1}  -x^* \Big \|^2$,
we need to upper bound $\E \Big \|\mu_t-x^*- F'_t \Big\|^2$, $\E_t \Big\|F'_t-F_t \Big\|^2$, and $\E_t \Big \langle \mu_t-x^*-  F'_t, F'_t-F_t \Big \rangle$.

For the first one, when $X_1, X_2, ..., X_n$ are fixed, we have that 

\begin{equation}
\label{eq:NewDecNormTrueGradient}
\begin{split}
\Big \|\mu_t-x^*- F'_t \Big\|^2 &= \Big \|\mu_t-x^*- \frac{2\eta}{n^2} \sum_i \nabla f^i(X_t^i)\Big\|^2\\
&= \big\|\mu_t-x^*\big\|^2 + 4\frac{\eta^2}{n^2} \underbrace{\Big\|\frac{1}{n} \sum_i \nabla f^i(X_t^i)\Big\|^2}_{R_1:=} -4\frac{\eta}{n} \underbrace{\Big\langle \mu_t-x^*, \frac{1}{n} \sum_i \nabla f^i(X_t^i) \Big\rangle}_{R_2:=}
\end{split}
\end{equation}

\begin{align} \label{eq:R1}
\begin{split}
R_1 &= \Big\|\frac{1}{n}\sum_i \nabla f^i(X_t^i)\Big\|^2 = \Big\|\frac{1}{n} \sum_i \nabla f^i(X_t^i) -  \nabla f^i(\mu_t) + \nabla f^i(\mu_t) - \nabla f^i(x^*)\Big\|^2\\
&\overset{\text{Cauchy-Schwarz}}{\leq} \frac{2}{n} \sum_i \Big\|\nabla f^i(X_t^i) - \nabla f^i(\mu_t)\Big\|^2 + 2\Big\| \frac{1}{n} \sum_i \nabla f^i(\mu_t) - \nabla f^i(x^*) \Big\|^2\\
&\leq \frac{2L^2}{n}\sum_i \big\|X_t^i - \mu_t\big\|^2 + \frac{4L}{n}\sum_i \big(f^i(\mu_t) - f^i(x^*)\big)\\
&= \frac{2L^2}{n}\sum_i \big\|X_t^i - \mu_t\big\|^2 + 4L\big[f(\mu_t) - f(x^*) \big]
\end{split}
\end{align}

\begin{subequations}
\begin{align}\label{eq:B_decomposition}
\begin{split}
R_2 &= \Big\langle \mu_t-x^*, \frac{1}{n} \sum_i \nabla f^i(X_t^i) \Big\rangle = \frac{1}{n} \sum_i \Big\langle \mu_t-X_t^i+X_t^i-x^*, \nabla f_t^i(X_t^i) \Big\rangle\\
&= \frac{1}{n} \sum_i \Big[\Big\langle \mu_t-X_t^i, \nabla f^i(X_t^i) \Big\rangle + \Big\langle X_t^i-x^*, \nabla f^i(X_t^i) \Big\rangle \Big]
\end{split}
\end{align}
Using L-smoothness property (Assumption \ref{asmp:lipschitz}) with $y=X_t^i$ and $x=x^*$ we have
\begin{equation}\label{eq:B1_lowerbound}
\Big\langle \mu_t-X_t^i, \nabla f^i(X_t^i) \Big\rangle \geq f^i(\mu_t) - f^i(X_t^i) - \frac{L}{2}\| \nabla f^i(\mu_t) - \nabla f^i(X_t^i) \|^2. 
\end{equation}
Additionally, we use the $\ell$-strong convexity (Assumption \ref{asmp:strongly_convex}), to get
\begin{equation}\label{eq:B2_lowerbound}
\Big\langle X_t^i-x^*, \nabla f^i(X_t^i) \Big\rangle \geq (f^i(X_t^i) - f^i(x^*)) + \frac{\ell}{2}\|X_t^i-x^*\|^2.
\end{equation}
\end{subequations}
Now by plugging  (\ref{eq:B1_lowerbound}) and (\ref{eq:B2_lowerbound}) in inequality (\ref{eq:B_decomposition}) we get that
\begin{align}\label{eq:R2}
\begin{split}
R_2 &\geq \frac{1}{n} \sum_i \Big[ f^i(\mu_t) - f^i(X_t^i) - \frac{L}{2}\| \nabla f^i(\mu_t) - \nabla f^i(X_t^i) \|^2 + f^i(X_t^i) - f^i(x^*) + \frac{\ell}{2}\|X_t^i-x^*\|^2 \Big]\\
&= \big[f(\mu_t)-f(x^*)] - \frac{L}{2n}\sum_i \|X_t^i - \mu_t\|^2 + \frac{\ell}{2n}\sum_i\|X_t^i - x^*\|^2\\
&\geq \big[f(\mu_t)-f(x^*)] - \frac{L+\ell}{2n}\sum_i \big\|X_t^i - \mu_t\big\|^2 + \frac{\ell}{4}\|\mu_t - x^*\|^2.
\end{split}
\end{align}
Now we plug (\ref{eq:R1}) and (\ref{eq:R2}) back into (\ref{eq:NewDecNormTrueGradient}) and take expectation into the account to get
\begin{align}\label{eq:h first_term}
\begin{split}
&\E \Big \|\mu_t-x^*- F'_t \Big\|^2 \begin{aligned}[t]&\le \E \big\|\mu_t-x^*\big\|^2 + 4\frac{\eta^2}{n^2} \Big( \frac{2L^2}{n}\sum_i \big\|X_t^i - \mu_t\big\|^2 + 4L\E \big[f(\mu_t) - f(x^*) \big]\Big)\\
&-4\frac{\eta}{n} \Big( \big[f(\mu_t)-f(x^*)\big] - \frac{L+\ell}{2n}\sum_i \E \big\|X_t^i - \mu_t\big\|^2 + \frac{\ell}{4}\E \|\mu_t - x^*\|^2 \Big)\end{aligned}\\
&= (1-\frac{\ell\eta}{n})\E \|\mu_t-x^*\|^2 - (4\frac{\eta}{n}-16L\frac{\eta^2}{n^2})\E \big[f(\mu_t)-f(x^*)\big] + \big(2\frac{L+\ell}{n}\eta + 8\frac{L^2}{n^2}\eta^2\big)\E[\Gamma_t].
\end{split}
\end{align}

For the second one we have that:

\begin{align} \label{eq:h second_term}
\begin{split}
&\E_t \Big\|F'_t-F_t \Big\|^2 = \frac{1}{n(n-1)}\sum_i\sum_{i \neq j} \E_t\Big\|\frac{2\eta}{n^2}\sum_r\nabla f^r(X_t^r) - \frac{\eta}{n}(G^i(X_t^i) + G^j(X_t^j))\Big\|^2\\
&\leq \frac{4\eta^2}{n^3}\sum_i \E_t\Big\|\frac{1}{n}\sum_r\nabla f^r(X_t^r) - G^i(X_t^i)\Big\|^2 = \frac{4\eta^2}{n^3}\sum_i \E_t\Big\|\frac{1}{n}\sum_r\nabla f^r(X_t^r) - \nabla f^i(X_t^i) + \nabla f^i(X_t^i) - G^i(X_t^i)\Big\|^2\\
&\leq \frac{8\eta^2}{n^3}\sum_i \Big(\E_t\Big\|\frac{1}{n}\sum_r\nabla f^r(X_t^r) - \nabla f^i(X_t^i)\Big\|^2 + \E_t[(\sigma_t^i)^2]  \Big) \\ &\leq \frac{8\eta^2}{n^3}\Big(\sum_i \E_t\Big\|\frac{1}{n-1}\sum_{r \neq i} [\nabla f^r(X_t^r) - \nabla f^i(X_t^i)]\Big\|^2 + \E_t[(\sigma_t^i)^2] \Big)\\
&\leq \frac{8\eta^2}{n^3(n-1)}  \sum_i \sum_{r \neq i}\E_t\Big\| \nabla f^r(X_t^r) - \nabla f^i(X_t^i)\Big\|^2 + \frac{8\eta^2}{n^2}\E_t[(\bar{\sigma}_t)^2] \\
&\leq \frac{8\eta^2}{n^3(n-1)} \sum_i \sum_{r \neq i}\E_t\Big\| \begin{aligned}[t] &[\nabla f^r(X_t^r) - \nabla f^r(\mu_t)] + [\nabla f^r(\mu_t) - \nabla f(\mu_t)] \\ &+ [\nabla f(\mu_t) - \nabla f^i(\mu_t)] + [\nabla f^i(\mu_t) - \nabla f^i(X_t^i)]\Big\|^2 + \frac{8\eta^2}{n^2}\E_t[(\bar{\sigma}_t)^2] \end{aligned}\\
&\leq \frac{8\eta^2}{n^3(n-1)} \sum_i 8(n-1) \Big(\E_t\Big\|\nabla f^i(X_t^i) - \nabla f^i(\mu_t) \Big\|^2+ \E_t\Big\|\nabla f^i(\mu_t) - \nabla f(\mu_t)\Big\|^2\Big) + \frac{8\eta^2}{n^2}\E_t[(\bar{\sigma}_t)^2]\\
&\leq \frac{64\eta^2}{n^3} \sum_i \E_t\Big\|\nabla f^i(X_t^i) - \nabla f^i(\mu_t) \Big\|^2 + \frac{64\eta^2}{n^3} \sum_i \E_t\Big\|\nabla f^i(\mu_t) - \nabla f(\mu_t)\Big\|^2 + \frac{8\eta^2}{n^2}\E_t[(\bar{\sigma}_t)^2]\\
&\leq \frac{64L^2\eta^2}{n^3} \sum_i \E_t\|X_t^i - \mu_t\|^2 + \frac{64\eta^2(\varsigma_0^2 n_0+\varsigma_1^2 n_1)}{n^3}+ \frac{8\eta^2}{n^2}\E_t[(\bar{\sigma}_t)^2] \\ &= \frac{64L^2\eta^2}{n^2} \E_t[\Gamma_t] + \frac{64\eta^2(\varsigma_0^2 n_0+\varsigma_1^2 n_1)}{n^3}+ \frac{8\eta^2}{n^2}\E_t[(\bar{\sigma}_t)^2].\\
\end{split}
\end{align}

Next, we remove conditioning and use Lemma \ref{lem:h average variance upper bound} to get
\begin{align}
\begin{split}
\E \Big\|F'_t-F_t \Big\|^2 &=\E\Bigg[\E_t \Big\|F'_t-F_t \Big\|^2\Bigg] \le \frac{64L^2\eta^2}{n^2} \E[\Gamma_t] + \frac{64\eta^2(\varsigma_0^2 n_0+\varsigma_1^2 n_1)}{n^3} + \frac{8\eta^2}{n^2}\E[(\bar{\sigma}_t)]^2 \\&\le
\frac{64L^2\eta^2}{n^2} \E[\Gamma_t] + \frac{64\eta^2(\varsigma_0^2 n_0+\varsigma_1^2 n_1)}{n^3} \\&\quad\quad\quad\quad\quad+ \frac{8\eta^2}{n^2}\Bigg(12(d+4)L^2 \E[\Gamma_t] + \frac{12n_0(d+4)\varsigma_0^2+3n_1 \varsigma_1^2}{n}+6(4d+17)L(f(\mu_t) - f(x^*)) \\&\quad\quad\quad\quad\quad\quad\quad\quad\quad\quad+\frac{4(d+4)\sum_{i \in N_0} s_i^2+\sum_{i \in N_1} s_i^2}{n} + \eta^2 \frac{3 n_0}{2nc^2}L^2(d+6)^3 \Bigg) \\ &=
\frac{\eta^2L^2(96d+448)}{n^2} \E[\Gamma_t] + \frac{\eta^2((96d+448)\varsigma_0^2 n_0+88\varsigma_1^2 n_1)}{n^3} \\&\quad\quad\quad+ \frac{48\eta^2(4d+17)L\E[f(\mu_t) - f(x^*)]}{n^2} +\frac{8\eta^2((d+4)\sum_{i \in N_0} s_i^2+\sum_{i \in N_1} s_i^2)}{n^3} + \frac{12\eta^4 n_0 L^2 (d+6)^3}{n^3c^2}.
\end{split}
\end{align}
Now consider the last one. We have:
\begin{align}\label{eq:h third_term_1}
\begin{split}
\E_t \Big \langle \mu_t-x^*-  F'_t, F'_t-F_t \Big \rangle & =
\Big \langle  \mu_{t} - F'_t - x^*,  \E_t(F'_t - F_t) \Big \rangle \\
& \overset{\text{Cauchy-Schwarz}}{\leq} \|\mu_{t} - x^*  - F'_t\|  \cdot \big\|\E_t(F'_t-F_t)\big\| \\
& \leq \Big[\|\mu_{t} - x^* \| + \|F'_t\|\Big]  \cdot \underbrace{\big\|\E_t(F'_t-F_t)\big\|}_{R_3:=}.
\end{split}
\end{align}

\begin{align}\label{eq:h R3}
\begin{split}
R_3&=\big\|\E_t(F'_t-F_t)\big\| = \big\|\frac{2\eta}{n^2}\sum_i\nabla f^i(X_t^i) - \frac{2\eta}{n^2} \E_t\big[G^i(X_t^i)\big]\big\|\\
& \leq \frac{2\eta}{n^2} \sum_i \big\|\nabla f^i(X_t^i) -  \E_t\big[G^i(X_t^i)\big]\big\| \leq \frac{2\eta^2}{n^2} \sum_i b_i \overset{\ref{def: B}}{=} \frac{2\eta^2}{n}B 
\end{split}
\end{align}
By using the inequality above in (\ref{eq:h third_term_1}) and taking expectation from both sides we get
\begin{align}\label{eq:h third_term}
\begin{split}
\E &\Big\langle \mu_t-x^* - F'_t, F'_t-F_t \Big\rangle = \E_{X^1_t,X^2_t,...,X^n_t} \bigg[ \E_t \Big\langle \mu_t-x^* - F'_t, F'_t-F_t \Big\rangle\bigg]\\
&\leq \frac{2\eta^2}{n}B \big( \E\|\mu_{t} - x^* \| + \E\|F'_t\| \big) \overset{(*)}{\leq} 2\frac{\eta^2}{n}B\big( \E\|\mu_{t}-x^*\|^2 + \tfrac{1}{4} + \E\|F'_t\|^2 + \tfrac{1}{4} \big)\\
&\overset{(\ref{eq:R1})}{\leq} 2\frac{\eta^2}{n}B\big( \E\|\mu_{t}-x^*\|^2 + \frac{4\eta^2}{n^2}\E\Big\|\frac{1}{n}\sum_i \nabla f^i(X_t^i)\Big\|^2 + 0.5 \big)\\
&\leq 2\frac{\eta^2}{n}B\big( \E\|\mu_{t}-x^*\|^2 + \frac{4\eta^2}{n^2}\E\Big[\frac{2L^2}{n}\sum_i \big\|X_t^i - \mu_t\big\|^2 + 4L\big[f(\mu_t) - f(x^*) \big]\Big] + 0.5 \big)\\
&\leq \eta^2 \frac{2B}{n}\E\|\mu_{t}-x^*\|^2 + \eta^4 \frac{16BL^2}{n^3}\E[\Gamma_t] + \eta^4 \frac{32BL}{n^3}\E\big[f(\mu_t) - f(x^*) \big] + \eta^2 \frac{B}{n}.
\end{split}
\end{align}
To get the (*) inequality, we first used Young's inequality twice with $\alpha = \frac{1}{2}$, to get that 
\begin{align*}
\E\|\mu_{t} - x^* \| + \E\|F'_t\| \le (\E\|\mu_{t} - x^* \|)^2 + \frac{1}{4} + (\E\|F'_t\|)^2+\frac{1}{4}
\end{align*}
and then applied Jensen's inequality to get 
\begin{align*}
(\E\|\mu_{t} - x^* \|)^2 + \frac{1}{4} + (\E\|F'_t\|)^2+\frac{1}{4}
\le \E\|\mu_{t} - x^* \|^2 +  \frac{1}{4} + \E\|F'_t\|^2+\frac{1}{4}.
\end{align*}


Then following by (\ref{eq:h first_term}), (\ref{eq:h second_term}) and (\ref{eq:h third_term}), the latter inequality (\ref{eq:h NewDecNorm}) would be: 
\begin{align*}
\E \Big\| \mu_{t+1}  -x^* \Big \|^2 &=\E \Big\|\mu_t-x^*-F'_t \Big\|^2 + \E_{X^1_t,X^2_t,...,X^n_t} \Big[ \E_t \|F'_t-F_t \|^2\Big]\\
&+ 2 \E_{X^1_t,X^2_t,...,X^n_t} \Big[ \E_t \Big \langle \mu_t-x^*-  F'_t, F'_t-F_t \Big \rangle\Big]\\
\le &(1-\frac{\ell\eta}{n})\E \|\mu_t-x^*\|^2 - (4\frac{\eta}{n}-16L\frac{\eta^2}{n^2})\E \big[f(\mu_t)-f(x^*)\big] + \big(2\frac{L+\ell}{n}\eta + 8\frac{L^2}{n^2}\eta^2\big)\E[\Gamma_t]\\
&\quad+\frac{\eta^2L^2(96d+448)}{n^2} \E[\Gamma_t] + \frac{\eta^2((96d+448)\varsigma_0^2 n_0+88\varsigma_1^2 n_1)}{n^3} \\&\quad+ \frac{48\eta^2(4d+17)L\E[f(\mu_t) - f(x^*)]}{n^2} +\frac{8\eta^2((d+4)\sum_{i \in N_0} s_i^2+\sum_{i \in N_1} s_i^2)}{n^3} + \frac{12\eta^4 n_0 L^2 (d+6)^3}{n^3c^2}.
\\
&\quad+ \eta^2 \frac{4B}{n}\E\|\mu_{t}-x^*\|^2 + \eta^4 \frac{32BL^2}{n^3}\E[\Gamma_t] + \eta^4 \frac{64BL}{n^3}\E\big[f(\mu_t) - f(x^*) \big] + \eta^2 \frac{2B}{n}.
\end{align*}
Hence:
\begin{align*}
\E &\Big\| \mu_{t+1}  -x^* \Big \|^2 \le (1-\frac{\ell\eta}{n} + \eta^2 \frac{4B}{n})\E\|\mu_t-x^*\|^2 - (4\frac{\eta}{n} - \eta^2\frac{16L(12d+52)}{n^2} - \eta^4 \frac{64BL}{n^3})\E\big[f(\mu_t)-f(x^*)\big]\\
&+ \left( 2\frac{L+\ell}{n}\eta + \eta^2L^2 \frac{96d+456}{n^2} + \eta^4 \frac{32BL^2}{n^3}\right)\E[\Gamma_t]\\
&+ \frac{\eta^2((96d+448)\varsigma_0^2 n_0+88\varsigma_1^2 n_1)}{n^3}+\frac{8\eta^2((d+4)\sum_{i \in N_0} s_i^2+\sum_{i \in N_1} s_i^2)}{n^3}+\frac{12\eta^4 n_0 L^2 (d+6)^3}{n^3c^2}+\eta^2 \frac{2B}{n}.
\end{align*}
We get the proof of the Lemma by plugging $\eta \le \frac{\sqrt{\ell cn}}{2\sqrt{Ln_0}(d+3)^\frac{3}{4}}$
and $B \le \frac{\eta n_0}{2cn}L(d+3)^{\frac{3}{2}}$ (Lemma \ref{lem:h average biasedness upper bound}) in the above inequality.
\end{proof}


\begin{theoremsub}[Convex case of Theorem \ref{thm:main}]
\label{thm:main-convex}
Assume that the functions $f$ and $f_i$ satisfy assumptions \ref{asmp:strongly_convex}, \ref{asmp:lipschitz}, \ref{asmp:global_variance} and \ref{asmp:unbiasedness_bounded_local_variance_of_F}. Let $T$ to be large enough such that $\frac{T}{\log T } = \Omega\left(\frac{n(d+n)(L+1)\left(\frac{1}{\ell}+1\right)}{\ell}\right)$, and let the learning rate be $\eta = \frac{4n\log T }{T \ell}$. For $1 \le t \le T$, let the sequence of weights $w_t$ be given by $w_t = \left(1-\frac{\eta \ell}{2n}\right)^{-t}$ and let $S_T = \sum_{t=1}^{T} w_T$. Finally, define $\mu_t = \sum_{i = 1}^n X^i_t/n$ and $y_T=\sum_{t=1}^T \frac{w_t \mu_{t-1}}{S_T}$ to be the mean over local model parameters. Then, we can show that HDO provides the following convergence rate:
\begin{align*}
\E[&f(y_T) - f(x^*)]+\frac{\ell \E\|\mu_{T}-x^*\|^2}{8} \\&=
O\Bigg(\frac{L \|\mu_0-x^*\|^2}{T\log T } +\frac{\log (T) (d n_0\varsigma_0^2+n_1 \varsigma_1^2)}{T \ell n} \\&\quad\quad\quad+
\frac{\log(T)(d n_0\sigma_0^2+n_1 \sigma_1^2)}{T \ell n} +\frac{\log(T) d n_0}{T \ell n}\Bigg).
\end{align*}
\end{theoremsub}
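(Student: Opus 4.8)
The plan is to glue together the two technical estimates already established — the per-step descent recurrence for the mean $\mu_t$ (Lemma~\ref{lem:h supmartingale}) and the weighted bound on the accumulated potential (Lemma~\ref{lem:h sum_gamma}) — via a telescoping argument driven by the geometric weights $w_t=(1-\frac{\eta\ell}{2n})^{-t}$. First I would reduce Lemma~\ref{lem:h supmartingale} to the clean form of Lemma~\ref{lem:supmartingale_v2}: inserting the bias bound $B\le\frac{\eta n_0}{2cn}L(d+3)^{3/2}$ from Lemma~\ref{lem:h average biasedness upper bound} and restricting $\eta$ so that every auxiliary lemma's ceiling holds, the perturbed contraction factor $1-\frac{\ell\eta}{n}+\eta^2\frac{4B}{n}$ is absorbed into $1-\frac{\ell\eta}{2n}$, the coefficient of $-\E[f(\mu_t)-f(x^*)]$ stays $\Omega(\eta/n)$, the coefficient of $\E[\Gamma_t]$ becomes $O(\frac{(L+\ell)\eta}{n})$, and the additive term collects all variance and bias noise at order $O(\eta^2/n^2)$.

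Next I would multiply the recurrence at step $t$ by $w_{t+1}$; the identity $(1-\frac{\ell\eta}{2n})w_{t+1}=w_t$ telescopes the squared-distance terms, so summing over $t=0,\dots,T-1$ gives
\begin{align*}
w_T\E\|\mu_T-x^*\|^2+c_1\sum_{t=1}^{T}w_t\E[f(\mu_{t-1})-f(x^*)] \le \|\mu_0-x^*\|^2 + c_\Gamma\sum_{t=1}^{T}w_t\E[\Gamma_{t-1}] + c_{\mathrm{noise}}S_T,
\end{align*}
with $c_1=\Omega(\eta/n)$ and $c_\Gamma=O(\frac{(L+\ell)\eta}{n})$. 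Now I substitute Lemma~\ref{lem:h sum_gamma}. The crucial cancellation is that its leading term $120\eta^2(2d+9)L\sum_t w_t\E[f(\mu_{t-1})-f(x^*)]$, once multiplied by $c_\Gamma$, carries an extra $\eta^2$ factor and is therefore dominated by $c_1\sum_t w_t\E[f(\mu_{t-1})-f(x^*)]$ for $\eta$ small; it is absorbed on the left, halving $c_1$, while the residual $\Gamma$-noise is of strictly lower order in $\eta$ than $c_{\mathrm{noise}}$. Dividing by $\frac{c_1}{2}S_T$, applying Jensen's inequality $\sum_t\frac{w_t}{S_T}\E[f(\mu_{t-1})-f(x^*)]\ge\E[f(y_T)-f(x^*)]$, and using $\frac{w_T}{(c_1/2)S_T}\ge\frac{\ell}{8}$ (which follows from $w_T/S_T=\Theta(\frac{\ell\eta}{n})$) recovers the left-hand side of the theorem.

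Finally I plug in $\eta=\frac{4n\log T}{T\ell}$ and invoke the lower bound on $T$. A short computation gives $w_T=(1-\frac{\ell\eta}{2n})^{-T}=\Theta(T^2)$ and $S_T=\Theta(\frac{T^3}{\log T})$, so the initial term collapses to $O(\frac{L\|\mu_0-x^*\|^2}{T\log T})$ (the weaker $L$-form being a valid upper bound on the $\Theta(\ell/T^2)$ we obtain), while dividing $c_{\mathrm{noise}}$ by $\frac{c_1}{2}=\Theta(\eta/n)$ turns the $\frac{\eta^2}{n^3}$-type variance terms into the stated $\frac{\log T}{T\ell n}$ terms, using $\sum_{N_0}s_i^2=n_0\sigma_0^2$, $\sum_{N_1}s_i^2=n_1\sigma_1^2$, and $c=\sqrt d$. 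The bias and higher-order contributions reduce to $O(\frac{\log T\, d n_0}{T\ell n})$ precisely because the $T$-constraint forces $T^2\gtrsim\frac{n^2 d L^2\log^2 T}{\ell^2}$, which dominates them.

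The main obstacle is the bias-induced coupling introduced by the zeroth-order estimators: each use of a biased oracle contributes a $B$-dependent term both to the contraction factor and to the additive noise, and one must verify that the single choice $\eta=\frac{4n\log T}{T\ell}$ simultaneously satisfies all the learning-rate ceilings of Lemmas~\ref{lem:h GammaBoundPerStep}, \ref{lem:h sum_gamma} and~\ref{lem:h supmartingale} while keeping $\eta B=O(\ell)$, so that the perturbed recurrence remains a genuine contraction. Once this smallness is secured, the absorption of the $\Gamma$-sum's $\E[f]$ term and the bookkeeping of constants are routine.
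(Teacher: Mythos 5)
Your proposal follows essentially the same route as the paper's proof: insert the bias bound of Lemma~\ref{lem:h average biasedness upper bound} into Lemma~\ref{lem:h supmartingale}, multiply the resulting recursion by the weights $w_t$, telescope, absorb the $\E[f(\mu_{t-1})-f(x^*)]$ contribution of Lemma~\ref{lem:h sum_gamma} into the negative suboptimality term (the paper's $D_1 := C_1 - 120\,C_2\,\eta^2(2d+9)L$), apply convexity to pass to $y_T$, and use $\frac{w_T}{S_T D_1}\ge \frac{\ell}{8}$ together with $\eta=\frac{4n\log T}{T\ell}$ and the lower bound on $T$. The quantitative details you flag (e.g.\ that $S_T$ is in fact $\Theta(T^3/\log T)$ so the initial term is even smaller than the stated $O(\frac{L\|\mu_0-x^*\|^2}{T\log T})$, and that the bias term is controlled by the $T$-constraint) are consistent with, and slightly sharper than, the paper's bookkeeping.
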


\begin{proof}
Let $a_t^2:=\E\|\mu_t-x^*\|^2$, $e_t:=\E\big[f(\mu_t)-f(x^*)\big]$ and
\begin{align*}
C_1&:=4\frac{\eta}{n} - \eta^2\frac{16L(12d+52)}{n^2} - \eta^4 \frac{64BL}{n^3} \ge 4\frac{\eta}{n} - \eta^2\frac{16L(12d+52)}{n^2} - \eta^5 \frac{32n_0L^2(d+3)^{\frac{3}{2}}}{d^{\frac{1}{2}}n^4},\\ 
C_2&:= 2\frac{L+\ell}{n}\eta + \eta^2L^2 \frac{96d+456}{n^2} + \eta^4 \frac{32BL^2}{n^3} \le 
2\frac{L+\ell}{n}\eta + \eta^2L^2 \frac{96d+456}{n^2} + \eta^5 \frac{16n_0L^3(d+3)^{\frac{3}{2}}}{d^{\frac{1}{2}} n^4} \\
C_3&:=\frac{\eta^2((96d+448)\varsigma_0^2 n_0+88\varsigma_1^2 n_1)}{n^3}+\frac{8\eta^2((d+4)\sum_{i \in N_0} s_i^2+\sum_{i \in N_1} s_i^2)}{n^3}+\frac{12\eta^4 n_0 L^2 (d+6)^3}{n^3c^2}+\frac{2B \eta^2}{n} \\ &\le \frac{\eta^2((96d+448)\varsigma_0^2 n_0+88\varsigma_1^2 n_1)}{n^3}+\frac{8\eta^2((d+4)\sum_{i \in N_0} s_i^2+\sum_{i \in N_1} s_i^2)}{n^3}+\frac{12\eta^4 n_0 L^2 (d+6)^3}{n^3d}+\frac{L (d+3)^{\frac{3}{2}}n_0 \eta^3}{d^{\frac{1}{2}} n^2}.
\end{align*}
Where, in the above inequalities, we used Lemma \ref{lem:h average biasedness upper bound}.
Therefore, the recursion from Lemma \ref{lem:h supmartingale} can be written as 
\begin{equation*}
    a_{t}^2 \leq (1-\frac{\ell \eta}{2n}) a_{t-1}^2 - C_1 e_{t-1} + C_2 \E[\Gamma_{t-1}] + C_3.
\end{equation*}
Then, we multiply the above recursion by $w_t = (1-\frac{\eta \ell}{2n})^{-t}$ and 
\begin{equation*}
    w_t a_{t}^2 \leq w_t\Big((1-\frac{\ell \eta}{2n}) a_{t-1}^2 - C_1 e_{t-1} + C_2 \E[\Gamma_{t-1}] + C_3\Big) =
    w_{t-1} a_{t-1}^2-w_t C_1 e_{t-1} + w_t C_2 \E[\Gamma_{t-1}] + w_t C_3.
\end{equation*}
By summing the above inequality for $t \in \{1, 2, ..., T\}$ and cancelling and rearrange terms we get:
\begin{align*}
    w_T a_T^2 \leq w_0 a_0^2 - C_1 \sum_{t=1}^T w_t e_{t-1} + C_2 \sum_{t=1}^T w_t \E [\Gamma_{t-1}] + C_3 \sum_{t=1}^T w_t.
\end{align*}
By using Lemma \ref{lem:h sum_gamma} in the inequality above we get that
\begin{align}\label{eqn:h semi-final}
\begin{split}
w_T &a_T^2 \leq w_0 a_0^2 \begin{aligned}[t]&-C_1 \sum_{t=1}^T w_t e_{t-1} + 120 C_2 \eta^2(2d+9)L \sum_{t=1}^{T-1} w_{t} \E[f(\mu_{t-1})-f(x^*)]  \\ 
&\hspace{-1mm}+ C_2 \Big(\frac{60\eta^2(2(d+4)n_0\varsigma_0^2+n_1 \varsigma_1^2)}{n}+\frac{20\eta^2(2(d+4) \sum_{i \in N_0} s_i^2+\sum_{i \in N_1} s_i^2)}{n} + \frac{10\eta^4 n_0 L^2(d+6)^3}{nc^2} \Big) \sum_{t=1}^{T-1} w_{t} \\ &\hspace{-1mm} + C_3 \sum_{t=1}^T w_t\end{aligned}\\
\leq& w_0 a_0^2 \begin{aligned}[t]&-\underbrace{(C_1 - 120 C_2 \eta^2(2d+9)L)}_{D_1:=} \sum_{t=1}^T w_t e_{t-1}\\
&\hspace{-6mm} + \underbrace{\bigg(C_2 \Big(\frac{60\eta^2(2(d+4)n_0\varsigma_0^2+n_1 \varsigma_1^2)}{n}+\frac{20\eta^2(2(d+4) \sum_{i \in N_0} s_i^2+\sum_{i \in N_1} s_i^2)}{n} + \frac{10\eta^4 n_0 L^2(d+6)^3}{nc^2} \Big) + C_3\bigg)}_{D_2:=} \sum_{t=1}^T w_t. \end{aligned}
\end{split}
\end{align}
Let $S_T:=\sum_{t=1}^T w_t$ and $y_T:= \sum_{t=1}^T \frac{w_t \mu_{t-1}}{S_T}$, then by convexity of $f$ we have that
\begin{equation}\label{eqn:h y_T}
    \E[f(y_T) - f(x^*)] \leq \frac{1}{S_T}\sum_{t=1}^T w_t e_{t-1}.
\end{equation}
By choosing small enough $\eta$, so that have $D_1 \ge 0$, we can combine  inequalities (\ref{eqn:h semi-final}) and (\ref{eqn:h y_T}) to get
\begin{align}
\begin{split}
\E[f(y_T) - f(x^*)]+\frac{w_T a_T^2}{S_T D_1} \leq \frac{w_0 a_0^2}{S_T D_1} + \frac{D_2}{D_1}
\end{split}
\end{align}
Our first goal is to lower bound $D_1$, we aim to choose upper bound on $\eta$ so that $D_1=\Omega(\frac{\eta}{n})$.
For this it will be enough to set $\eta = O(\frac{1}{d (L+\ell+1)})$.
Since $C_1 \ge 4\frac{\eta}{n} - \eta^2\frac{16L(12d+52)}{n^2} - \eta^4 \frac{32n_0L^2(d+3)^{\frac{3}{2}}}{cn^4}$,
we have $C_1 = \Omega(\frac{\eta}{n})$.
Plus, $C_2=O(1/n)$ and hence $120 C_2 \eta^2(2d+9)L = O(\frac{\eta}{n})$, thus $D_1=\Omega(\frac{\eta}{n})$, as desired.
Also:
\begin{align*}
S_t = \sum_{t=1}^T w_t = \sum_{t=1}^T (1-\frac{\eta \ell}{2n})^{-t} \ge (1-\frac{\eta \ell}{2n})^{-T} \ge e^{\frac{\eta \ell T}{2n}}.
\end{align*}
By setting $\eta = \frac{4n\log(T)}{T \ell}$, we get $S_t \geq T^2$.
Therefore, we have 
\begin{equation} \label{eqn: w0 a0^2 term}
    \frac{w_0 a_0^2}{S_T D_1} = O \left( \frac{w_0 a_0^2 \ell}{T\log(T)} \right)=
    O \left( \frac{L \|\mu_0-x^*\|^2}{T\log(T)} \right).
\end{equation}
Next, we upper bound $D_2$. Since $\eta=O(\frac{1}{d(L+\ell+1)})$ we get that   $C_2=O((L+\ell)\eta /n)$. Additionally, 
\begin{align*}
\frac{60\eta^2(2(d+4)n_0\varsigma_0^2+n_1 \varsigma_1^2)}{n}&+\frac{20\eta^2(2(d+4) \sum_{i \in N_0} s_i^2+\sum_{i \in N_1} s_i^2)}{n} + \frac{10\eta^4 n_0 L^2(d+6)^3}{n d} \\&= O\Bigg(\frac{\eta^2(d n_0\varsigma_0^2+n_1 \varsigma_1^2)}{n}+
\frac{\eta^2(d n_0\sigma_0^2+n_1 \sigma_1^2)}{n} + \frac{\eta^2 n_0}{n}\Bigg).
\end{align*}
By using  $\eta = O(\frac{1}{(L+\ell+1)n})$ in the equation above, we get $C_2 = O(\frac{1}{n^2})$ and hence
\begin{align*}
C_2\Bigg(\frac{60\eta^2(2(d+4)n_0\varsigma_0^2+n_1 \varsigma_1^2)}{n}&+\frac{20\eta^2(2(d+4) \sum_{i \in N_0} s_i^2+\sum_{i \in N_1} s_i^2)}{n} + \frac{10\eta^4 n_0 L^2(d+6)^3}{n d}\Bigg) \\&= O\Bigg(\frac{\eta^2(d n_0\varsigma_0^2+n_1 \varsigma_1^2)}{n^3}+
\frac{\eta^2(d n_0\sigma_0^2+n_1 \sigma_1^2)}{n^3} + \frac{\eta^2 n_0}{n^3}\Bigg).
\end{align*}
Finally, we have that 
\begin{align*}
C_3 = O\Bigg(\frac{\eta^2(d n_0\varsigma_0^2+n_1 \varsigma_1^2)}{n^3}+
\frac{\eta^2(d n_0\sigma_0^2+n_1 \sigma_1^2)}{n^3} + \frac{\eta^2 n_0}{n^3}+\frac{\eta^3 d L n_0}{n^2}\Bigg).
\end{align*}
If we put together the above inequalities we get that 
\begin{align*}
D_2&=O\Bigg(\frac{\eta^2(d n_0\varsigma_0^2+n_1 \varsigma_1^2)}{n^3}+
\frac{\eta^2(d n_0\sigma_0^2+n_1 \sigma_1^2)}{n^3} +\frac{\eta^3 L d n_0}{n^2}\Bigg).
\end{align*}
By plugging $D_1=\Omega(\frac{\eta}{n})$ and $\eta = \frac{4n\log(T)}{T \ell}$, in the equation above we get that
\begin{align*}
    \frac{D_2}{D_1} &= O\Bigg(\frac{\eta(d n_0\varsigma_0^2+n_1 \varsigma_1^2)}{n^2}+
\frac{\eta(d n_0\sigma_0^2+n_1 \sigma_1^2)}{n^2} +\frac{\eta^2 L d n_0}{n}\Bigg)
\\ &= O\Bigg(\frac{\log(T)(d n_0\varsigma_0^2+n_1 \varsigma_1^2)}{T \ell n}+
\frac{\log(T)(d n_0\sigma_0^2+n_1 \sigma_1^2)}{T \ell n} +\frac{\log(T) d n_0}{T \ell n}\Bigg).
\end{align*}
We also have that $D_1 \le C_1 \le \frac{4\eta}{n}$,
and 
\begin{align*}
\frac{w_T}{S_T}=\frac{(1-\frac{\eta \ell}{2n})^{-T}}{\sum_{t=1}^T (1-\frac{\eta \ell}{2n})^{-t}} \ge (1-\frac{\eta \ell}{2n}) \Big (1-\frac{\eta \ell}{2n})^{-1}-1 \Big )=
\frac{\eta \ell}{2n}.
\end{align*}
Hence, $\frac{w_T}{S_T D_1} \ge \frac{\ell}{8}.$ 
By putting together the above above inequalities we get the final convergence bound:
\begin{align*}
\E[f(y_T) - f(x^*)]&+\frac{\ell \E\|\mu_{T}-x^*\|^2}{8}=
\E[f(y_T) - f(x^*)]+\frac{\ell a_T^2}{8} \le \E[f(y_T) - f(x^*)]+\frac{w_T a_T^2}{S_T D_1} \leq \frac{w_0 a_0^2}{S_T D_1} + \frac{D_2}{D_1} \\&=O\Bigg(\frac{L \|\mu_0-x^*\|^2}{T\log(T)}) +\frac{\log(T)(d n_0\varsigma_0^2+n_1 \varsigma_1^2)}{T \ell n}+
\frac{\log(T)(d n_0\sigma_0^2+n_1 \sigma_1^2)}{T \ell n} +\frac{\log(T) d n_0}{T \ell n}\Bigg).
\end{align*}
Finally, we need to gather all upper bounds on $\eta$ and compute lower bound on $T$ so that the upper bounds are satisfied.
We need $\eta=O(\frac{1}{d (L+\ell+1)}), \eta=O(\frac{1}{n (L+\ell+1)})$
in the proof of this theorem, the lemmas we used require $\eta \le \frac{1}{10\ell}$ and 
\begin{align*}
\eta \le \frac{\sqrt{\ell c n}}{2\sqrt{Ln_0}(d+3)^\frac{3}{4}}=O\Big(\frac{\ell}{Ld}\Big).
\end{align*}

Considering $\ell \le L$, we get that we need $\eta=O\Big(\frac{1}{(d+n)(L+1)(\frac{1}{\ell}+1)}\Big)$.
Thus, we need $\frac{T}{\log(T)} = \Omega(\frac{n(d+n)(L+1)(\frac{1}{\ell}+1)}{\ell}).$
\end{proof}

\subsection{Proof of Non-Convex Case of Theorem \ref{thm:main}}

\begin{lemma}\label{lem: non-convex avg movement upper bound}
For any time step $t$, we have
\begin{equation*}
    \E \| \mu_{t+1} - \mu_t\|^2 \leq \frac{4\eta^2}{n^2}\E[M_t^G].
\end{equation*}
\end{lemma}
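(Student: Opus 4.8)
The plan is to first observe that the averaging step in Algorithm~\ref{algo:hdo} preserves the \emph{sum} of the two interacting models, so the only net effect on the mean $\mu_t$ comes from the two local gradient steps. Concretely, if nodes $i$ and $j$ interact at step $t$, the unchanged coordinates $k \neq i,j$ still contribute $X_t^k$, while the two updated coordinates satisfy $X_{t+1}^i + X_{t+1}^j = X_t^i + X_t^j - \eta\big(G^i(X_t^i) + G^j(X_t^j)\big)$, since $X_{t+1}^i = X_{t+1}^j = \tfrac{1}{2}\big((X_t^i - \eta G^i(X_t^i)) + (X_t^j - \eta G^j(X_t^j))\big)$. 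Dividing the total by $n$ reproduces exactly the identity already used in the proof of Lemma~\ref{lem:GammaBoundPerStepHelper}, namely $\mu_{t+1} - \mu_t = -\tfrac{\eta}{n}\big(G^i(X_t^i) + G^j(X_t^j)\big)$, whence $\|\mu_{t+1}-\mu_t\|^2 = \tfrac{\eta^2}{n^2}\big\|G^i(X_t^i) + G^j(X_t^j)\big\|^2$.

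Next I would take the conditional expectation $\E_t[\cdot]$ over the uniformly random interacting pair. As in Lemma~\ref{lem:GammaBoundPerStepHelper}, the pair $(i,j)$ is drawn uniformly among the $n(n-1)$ ordered pairs with $i \neq j$ (the unordered convention gives the same average, since the summand is symmetric), so
\begin{align*}
\E_t\|\mu_{t+1}-\mu_t\|^2 = \frac{\eta^2}{n^2}\cdot\frac{1}{n(n-1)}\sum_i\sum_{j \neq i}\big\|G^i(X_t^i) + G^j(X_t^j)\big\|^2.
\end{align*}
Applying the Cauchy--Schwarz inequality in its two-term form $\|a+b\|^2 \leq 2(\|a\|^2+\|b\|^2)$ bounds each summand by $2\big(\|G^i(X_t^i)\|^2 + \|G^j(X_t^j)\|^2\big)$.

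Finally I would evaluate the double sum by symmetry: each of $\sum_i\sum_{j\neq i}\|G^i(X_t^i)\|^2$ and $\sum_i\sum_{j\neq i}\|G^j(X_t^j)\|^2$ equals $(n-1)\sum_k\|G^k(X_t^k)\|^2$, so the whole expression collapses to $\tfrac{4}{n}\sum_k\|G^k(X_t^k)\|^2 = 4 M_t^G$ by the definition $M_t^G = \tfrac{1}{n}\sum_k\|G^k(X_t^k)\|^2$. This yields $\E_t\|\mu_{t+1}-\mu_t\|^2 \leq \tfrac{4\eta^2}{n^2} M_t^G$, and a further expectation over $X_t^1,\dots,X_t^n$ removes the conditioning to give the claim. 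The computation is entirely elementary; the only points requiring care are the initial observation that averaging leaves the gradient contribution to the mean intact, and the bookkeeping of the ordered double sum needed to pin down the constant $4$.
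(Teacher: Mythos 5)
Your proof is correct and follows essentially the same route as the paper's: the identity $\mu_{t+1}-\mu_t=-\tfrac{\eta}{n}\big(G^i(X_t^i)+G^j(X_t^j)\big)$, averaging over the uniformly random pair, the two-term Cauchy--Schwarz bound, and the symmetry count giving the factor $4$. The only difference is that you spell out the derivation of the identity and the handling of conditional expectation more explicitly than the paper does.
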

\begin{proof}
\begin{align*}
    \E \| \mu_{t+1} - \mu_t\|^2 &= \frac{1}{n(n-1)} \sum_i \sum_{j \neq i} \E \| \frac{\eta}{n}(G^i(X_t^i) + G^j(X_t^j))\|^2\\
    &\overset{Cauchy-Schwarz}{\leq} \frac{2\eta^2}{n^3(n-1)}\sum_i \sum_{j \neq i} \E (\|G^i(X_t^i)\|^2 + \|G^j(X_t^j)\|^2)\\ 
    &=\frac{4\eta^2}{n^3}\sum_i \E \|G^i(X_t^i)\|^2=\frac{4\eta^2}{n^2}\E[M_t^G]
\end{align*}
\end{proof}

\begin{lemma}\label{lem: non-convex M^f}
For any time step $t$ and constants $\alpha_0 \ge \alpha_1 > 0$ let $M_t^f(\alpha_0, \alpha_1)=\frac{\alpha_0}{n}\sum_{i  \in N_0} \big\|\nabla f^i(X_t^i)\big\|^2 + \frac{\alpha_1}{n}\sum_{i \in N_1} \big\|\nabla f^i(X_t^i)\big\|^2$. We have that:
\begin{equation*}
 \E[M_t^f(\alpha_0, \alpha_1)] \le 3L^2 \alpha_0 \E[\Gamma_t] + \frac{3\alpha_0 n_0\varsigma_0^2+3\alpha_1 n_1\varsigma_1^2}{n}+ 3\frac{ \alpha_0 n_0 + \alpha_1 n_1}{n} \E \| \nabla f(\mu_t)\|^2.
\end{equation*}
\end{lemma}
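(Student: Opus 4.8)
The plan is to follow the structure of the convex counterpart (Lemma~\ref{lem:h M^f}) almost verbatim, the one change being that, since we are in the non-convex regime, I keep the gradient-norm term $\|\nabla f(\mu_t)\|^2$ rather than converting it into a suboptimality gap. First I would treat the zeroth-order sum $\frac{\alpha_0}{n}\sum_{i \in N_0}\E\|\nabla f^i(X_t^i)\|^2$ by inserting $\pm\,\nabla f^i(\mu_t)$ and $\pm\,\nabla f(\mu_t)$, writing the summand as the telescoping decomposition $\nabla f^i(X_t^i) = [\nabla f^i(X_t^i) - \nabla f^i(\mu_t)] + [\nabla f^i(\mu_t) - \nabla f(\mu_t)] + \nabla f(\mu_t)$. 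Applying the Cauchy--Schwarz inequality to this sum of three vectors introduces the overall factor of $3$.

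Each resulting piece is then bounded by a different tool. The first piece, $\|\nabla f^i(X_t^i) - \nabla f^i(\mu_t)\|^2$, is at most $L^2\|X_t^i - \mu_t\|^2$ by the $L$-Lipschitzness of the gradients (Assumption~\ref{asmp:lipschitz}). The second piece, summed over $N_0$, is controlled by the balanced-data assumption (Assumption~\ref{asmp:global_variance}), giving $\sum_{i \in N_0}\|\nabla f^i(\mu_t) - \nabla f(\mu_t)\|^2 \le n_0\varsigma_0^2$. The third piece is left untouched as $\|\nabla f(\mu_t)\|^2$. This yields $\frac{\alpha_0}{n}\sum_{i \in N_0}\E\|\nabla f^i(X_t^i)\|^2 \le \frac{3\alpha_0 L^2}{n}\sum_{i \in N_0}\E\|X_t^i - \mu_t\|^2 + \frac{3\alpha_0 n_0\varsigma_0^2}{n} + \frac{3\alpha_0 n_0}{n}\E\|\nabla f(\mu_t)\|^2$, and an entirely analogous bound for the first-order sum over $N_1$ with $\alpha_1$, $\varsigma_1$, $n_1$ in place of their zeroth-order counterparts.

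Finally I would add the two bounds. The variance terms combine directly into $\frac{3\alpha_0 n_0\varsigma_0^2 + 3\alpha_1 n_1\varsigma_1^2}{n}$ and the gradient terms into $3\frac{\alpha_0 n_0 + \alpha_1 n_1}{n}\E\|\nabla f(\mu_t)\|^2$, matching the statement exactly. For the concentration term, I would invoke $\alpha_0 \ge \alpha_1$ to write $\alpha_0\sum_{i \in N_0}\|X_t^i - \mu_t\|^2 + \alpha_1\sum_{i \in N_1}\|X_t^i - \mu_t\|^2 \le \alpha_0\sum_{i=1}^{n}\|X_t^i - \mu_t\|^2 = \alpha_0 n\,\Gamma_t$, so the first term collapses to $3L^2\alpha_0\E[\Gamma_t]$, completing the proof. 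There is no genuine technical obstacle here; the only conceptual point worth flagging is that, lacking convexity, we do not apply the smoothness bound $\|\nabla f(\mu_t)\|^2 \le 2L(f(\mu_t)-f^*)$ used in the convex proof, and instead retain the gradient norm, which is precisely the quantity tracked by the non-convex convergence statement.
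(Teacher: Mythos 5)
Your proposal is correct and follows essentially the same route as the paper's proof: the same three-term decomposition $\nabla f^i(X_t^i) = [\nabla f^i(X_t^i) - \nabla f^i(\mu_t)] + [\nabla f^i(\mu_t) - \nabla f(\mu_t)] + \nabla f(\mu_t)$, Cauchy--Schwarz giving the factor $3$, Assumptions~\ref{asmp:lipschitz} and~\ref{asmp:global_variance} for the first two pieces, and $\alpha_0 \ge \alpha_1$ to collapse the concentration terms into $3L^2\alpha_0\E[\Gamma_t]$. Your closing remark about retaining $\|\nabla f(\mu_t)\|^2$ rather than converting to a suboptimality gap is exactly the (only) point where this lemma diverges from its convex counterpart.
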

\begin{proof}
\begin{align}
\begin{split}
\frac{\alpha_0}{n}\sum_{i \in N_0} \E \big\|\nabla f^i(X_t^i)\big\|^2 
&= \frac{\alpha_0}{n}\sum_{i \in N_0} \E \big\|\nabla f^i(X_t^i) - \nabla f^i(\mu_t) + \nabla f^i(\mu_t)-\nabla f(\mu_t)+\nabla f(\mu_t)\big\|^2\\
&\overset{\text{Assumption}\ref{asmp:global_variance}, \text{Cauchy-Schwarz}}{\leq} \frac{3L^2 \alpha_0}{n} \sum_{i \in N_0}
\E \|X_t^i- \mu_t\|^2 + \frac{3\alpha_0 n_0\varsigma_0^2}{n}+3 \frac{\alpha_0 n_0}{n}\E\| \nabla f(\mu_t)\|^2.
\end{split}
\end{align}
Similarly, in the case of first-order nodes we get: 
\begin{align}
\begin{split}
\frac{\alpha_1}{n}\sum_{i \in N_1} \E \big\|\nabla f^i(X_t^i)\big\|^2 \le 
\frac{3L^2 \alpha_1}{n} \sum_{i \in N_1}
\E \|X_t^i- \mu_t\|^2 + \frac{3\alpha_1 n_1\varsigma_1^2}{n}+3 \frac{\alpha_1 n_1}{n}\E\| \nabla f(\mu_t)\|^2.
\end{split}
\end{align}
By summing up the above inequalities and using the fact that $\alpha_0 \ge \alpha_1$ (together with the definition of $\Gamma_t$), we get the proof of the lemma.
\end{proof}

\begin{lemma}\label{lem: non-convex M^G}
Assume $\nu: =\frac{\eta}{c}$ is fixed, where $\eta$ and $c$ are the learning rate and a constant respectively. Then, for any time step t we have:  
\begin{align*}
\E\big[ M_t^G\big] \leq 6(d+4)L^2 \E[\Gamma_t] &+ \frac{6(d+4)n_0\varsigma_0^2+3n_1 \varsigma_1^2}{n}+\frac{6(d+4)n_0+3n_1}{n}\E \| \nabla f(\mu_t)\|^2 \\&+ \frac{2(d+4) \sum_{i \in N_0} s_i^2+\sum_{i \in N_1} s_i^2}{n} + \eta^2 \frac{n_0}{2nc^2}L^2(d+6)^3.
\end{align*}
\end{lemma}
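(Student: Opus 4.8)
The plan is to mirror the argument for the convex counterpart, Lemma~\ref{lem:h M^G}, replacing only its final convexity-driven step. The entire reduction is algebraic once two building blocks are in place: the per-node second-moment bounds on the estimators and the non-convex average-gradient bound of Lemma~\ref{lem: non-convex M^f}. Since both of these are already established earlier in the excerpt, the work here is essentially bookkeeping, with the one subtlety being to express the mixed population's gradient contributions in the single weighted form $M_t^f(2(d+4),1)$.

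First I would condition on the models $X_t^1,\dots,X_t^n$ and expand $M_t^G=\frac{1}{n}\sum_i\|G^i(X_t^i)\|^2$, splitting the sum over the zeroth-order set $N_0$ and the first-order set $N_1$. For $i\in N_0$ I would apply the second-moment bound~(\ref{eqn:Gv_second_moment_upper_bound}), which contributes $\tfrac{1}{2}\nu^2L^2(d+6)^3+2(d+4)\big[\|\nabla f^i(X_t^i)\|^2+s_i^2\big]$ per node. For $i\in N_1$, where $G^i=\nabla F^i(\cdot,\xi^i)$, unbiasedness together with the variance bound of Assumption~\ref{asmp:unbiasedness_bounded_local_variance_of_F} gives $\E_t\|\nabla F^i(X_t^i,\xi^i)\|^2\le\|\nabla f^i(X_t^i)\|^2+s_i^2$.

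Collecting these, the gradient-norm terms assemble exactly into $M_t^f(2(d+4),1)$ (the weights $\alpha_0=2(d+4)\ge\alpha_1=1$ satisfy the hypothesis of Lemma~\ref{lem: non-convex M^f}), the variance terms into $\frac{2(d+4)\sum_{i\in N_0}s_i^2+\sum_{i\in N_1}s_i^2}{n}$, and the smoothing-residual terms, summed over $N_0$ and using $\nu=\eta/c$, into $\eta^2\frac{n_0}{2nc^2}L^2(d+6)^3$. Taking total expectation and substituting the bound from Lemma~\ref{lem: non-convex M^f} for $\E[M_t^f(2(d+4),1)]$ produces the three leading terms $6(d+4)L^2\E[\Gamma_t]$, $\frac{6(d+4)n_0\varsigma_0^2+3n_1\varsigma_1^2}{n}$, and $\frac{6(d+4)n_0+3n_1}{n}\E\|\nabla f(\mu_t)\|^2$; adding the two residual terms above completes the claim.

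The only conceptually interesting point—already handled inside Lemma~\ref{lem: non-convex M^f}—is that without an optimality anchor $\nabla f(x^*)=0$ we cannot convert gradient norms into a suboptimality gap $f(\mu_t)-f^*$ as the convex proof does; instead the decomposition is centered at $\mu_t$, so the leading $\E[f(\mu_t)-f^*]$ term of Lemma~\ref{lem:h M^G} is here replaced by $\E\|\nabla f(\mu_t)\|^2$. The main place to be careful is therefore purely notational: tracking the dimension factor $d$ through the coefficient $\alpha_0=2(d+4)$ so that it lands consistently on the $\varsigma_0^2$, $\|\nabla f(\mu_t)\|^2$, and variance terms, matching the stated constants.
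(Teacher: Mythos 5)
Your proposal is correct and follows essentially the same route as the paper's proof: split the sum over $N_0$ and $N_1$, apply the per-node second-moment bound~(\ref{eqn:Gv_second_moment_upper_bound}) for zeroth-order nodes and the bias--variance decomposition for first-order nodes, assemble the gradient terms into $M_t^f(2(d+4),1)$ plus the variance and smoothing residuals, and then invoke Lemma~\ref{lem: non-convex M^f}. Your closing remark about replacing the suboptimality gap with $\E\|\nabla f(\mu_t)\|^2$ correctly identifies the only substantive difference from the convex counterpart.
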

\begin{proof}
\begin{align}
\begin{split}
\E_t\big[ M_t^G\big] &= \frac{1}{n}\sum_i \E_t \big\|G^i(X_t^i)\big\|^2 \overset{ (\ref{eqn:Gv_second_moment_upper_bound})}{\leq} \frac{1}{n}\sum_{i \in N_0} (\tfrac{1}{2} \nu^2 L^2 (d+6)^3+ 2 (d+4)  \left[\E_t\|\nabla f^i(X_t^i)\|^2+s_i^2\right]) \\&\quad\quad\quad\quad\quad\quad\quad\quad\quad\quad\quad\quad\quad\quad\quad+ \frac{1}{n} \sum_{i \in N_1}  (\E_t\|\nabla f^i(X_t^i)\|^2+s_i^2)\\
&= \E_t[M_t^f(2(d+4),1)] + \frac{2(d+4) \sum_{i \in N_0} s_i^2+\sum_{i \in N_1} s_i^2}{n} + \eta^2 \frac{n_0}{2nc^2}L^2(d+6)^3.
\end{split}
\end{align}
Next, we take expectation with respect to $X_t^1, X_t^2, ..., X_t^n$ and use Lemma \ref{lem: non-convex M^f} to get
\begin{align}
\begin{split}
\E\big[ M_t^G\big] \leq 6(d+4)L^2 \E[\Gamma_t] &+ \frac{6(d+4)n_0\varsigma_0^2+3n_1 \varsigma_1^2}{n}+\frac{6(d+4)n_0+3n_1}{n}\E \| \nabla f(\mu_t)\|^2 \\&+ \frac{2(d+4) \sum_{i \in N_0} s_i^2+\sum_{i \in N_1} s_i^2}{n} + \eta^2 \frac{n_0}{2nc^2}L^2(d+6)^3,
\end{split}
\end{align}
which finishes the proof of the lemma.
\end{proof}

\begin{lemma} \label{lem: non-convex Gamma_t}
For any time step $t$ and fixed learning rate $\eta \le \frac{1}{10 L (d+4)^\frac{1}{2}}$ 
\begin{align*}
\E\big[\Gamma_{t+1}\big] \leq \big( 1 - \frac{1}{4n})\E\big[ \Gamma_t \big] &+ \frac{12\eta^2(2(d+4)n_0\varsigma_0^2+n_1 \varsigma_1^2)}{n^2}+\frac{4\eta^2(6(d+4)n_0+3n_1)}{n^2}\E \| \nabla f(\mu_t)\|^2 \\&+ \frac{4\eta^2(2(d+4) \sum_{i \in N_0} s_i^2+\sum_{i \in N_1} s_i^2)}{n^2} + \frac{2\eta^4 n_0 L^2(d+6)^3}{n^2c^2}.
\end{align*}
\end{lemma}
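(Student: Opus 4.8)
The plan is to mirror the structure of the convex counterpart, Lemma~\ref{lem:h GammaBoundPerStep}, replacing the suboptimality-based bound on the estimator second moment with its non-convex analogue. Concretely, I would begin from the per-step potential recursion of Lemma~\ref{lem:GammaBoundPerStepHelper},
\[
\E[\Gamma_{t+1}] \le \big(1 - \tfrac{1}{2n}\big)\E[\Gamma_t] + \tfrac{4\eta^2}{n}\E[M_t^G],
\]
which is agnostic to the estimator type and hence carries over unchanged. The entire task then reduces to controlling $\E[M_t^G]$, for which I would invoke Lemma~\ref{lem: non-convex M^G} directly. Note that, unlike the convex case (Lemma~\ref{lem:h M^G}), this bound expresses the ``signal'' term through $\E\|\nabla f(\mu_t)\|^2$ rather than $\E[f(\mu_t)-f(x^*)]$, which is precisely why the gradient-norm term (and not the suboptimality) surfaces in the final statement.

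After substituting the Lemma~\ref{lem: non-convex M^G} bound into the recursion, the next step is to collect the two $\E[\Gamma_t]$ contributions: the $(1-\tfrac{1}{2n})$ factor together with the term $\tfrac{4\eta^2}{n}\cdot 6(d+4)L^2\,\E[\Gamma_t]$ produced by the estimator bound. Their combined coefficient is $1-\tfrac{1}{2n}+\tfrac{24\eta^2 L^2(d+4)}{n}$. The one genuinely delicate point is verifying that the learning-rate hypothesis $\eta \le \tfrac{1}{10L(d+4)^{1/2}}$ is tight enough to absorb the extra term: squaring gives $\eta^2 \le \tfrac{1}{100L^2(d+4)}$, hence $\tfrac{24\eta^2 L^2(d+4)}{n} \le \tfrac{24}{100n} \le \tfrac{1}{4n}$, so the coefficient is at most $1-\tfrac{1}{4n}$, as claimed. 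This is the analogue of the $\eta\le\tfrac{1}{14L(d+4)^{1/2}}$ condition in the convex proof; the slightly larger constant here still suffices because $\tfrac{24}{100}\le\tfrac14$, so this amounts to a careful constant-chase rather than a real obstacle.

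Finally, I would multiply each of the remaining (non-$\Gamma_t$) terms of Lemma~\ref{lem: non-convex M^G} by the prefactor $\tfrac{4\eta^2}{n}$ and check that they reproduce the stated right-hand side. The data-split term $\tfrac{4\eta^2}{n}\cdot\tfrac{6(d+4)n_0\varsigma_0^2+3n_1\varsigma_1^2}{n}$ collapses to $\tfrac{12\eta^2(2(d+4)n_0\varsigma_0^2+n_1\varsigma_1^2)}{n^2}$; the gradient-norm and $s_i^2$ terms pass through with the common factor $\tfrac{4\eta^2}{n^2}$; and the last term $\tfrac{4\eta^2}{n}\cdot \eta^2\tfrac{n_0}{2nc^2}L^2(d+6)^3$ simplifies to $\tfrac{2\eta^4 n_0 L^2(d+6)^3}{n^2c^2}$. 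Since every step is an equality or a routine inequality once the learning-rate bound is in force, the argument is essentially a transcription of the convex proof of Lemma~\ref{lem:h GammaBoundPerStep}, with Lemma~\ref{lem: non-convex M^G} substituted for Lemma~\ref{lem:h M^G}, and I expect no substantial difficulty beyond the coefficient verification described above.
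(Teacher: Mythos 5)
Your proposal is correct and follows essentially the same route as the paper's own proof: it applies Lemma~\ref{lem:GammaBoundPerStepHelper}, substitutes the bound from Lemma~\ref{lem: non-convex M^G}, and absorbs the extra $\tfrac{24\eta^2 L^2(d+4)}{n}\E[\Gamma_t]$ term using the learning-rate hypothesis, with all constants checking out. Nothing further is needed.
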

\begin{proof}
Note that Lemma \ref{lem:GammaBoundPerStepHelper} does not assume anything about convexity. Therefore, we have:
\begin{equation*}
    \E\big[ \Gamma_{t+1} \big] \leq \big( 1 - \frac{1}{2n}\big)\E\big[ \Gamma_t \big] + \frac{4}{n}\eta^2\E\big[M_t^G\big].
\end{equation*}
Now, by using Lemma \ref{lem: non-convex M^G} in the inequality above we have
\begin{align*}
    &\E\big[ \Gamma_{t+1} \big] \leq \big( 1 - \frac{1}{2n}\big)\E\big[ \Gamma_t \big] + \frac{4}{n}\eta^2\E\big[M_t^G\big]\\ 
    &\le \big( 1 - \frac{1}{2n})\E\big[ \Gamma_t \big] + \frac{4\eta^2}{n}\Bigg(6(d+4)L^2 \E[\Gamma_t] + \frac{6(d+4)n_0\varsigma_0^2+3n_1 \varsigma_1^2}{n}+\frac{6(d+4)n_0+3n_1}{n}\E \| \nabla f(\mu_t)\|^2 \\
    &\quad\quad\quad\quad\quad\quad\quad\quad\quad\quad\quad\quad+ \frac{2(d+4) \sum_{i \in N_0} s_i^2+\sum_{i \in N_1} s_i^2}{n} + \eta^2 \frac{n_0}{2nc^2}L^2(d+6)^3\Bigg) \\
    &=\big( 1 - \frac{1}{2n}+\frac{24\eta^2 L^2 (d+4)}{n})\E\big[ \Gamma_t \big] + \frac{12\eta^2(2(d+4)n_0\varsigma_0^2+n_1 \varsigma_1^2)}{n^2}+\frac{4\eta^2(6(d+4)n_0+3n_1)}{n^2}\E \| \nabla f(\mu_t)\|^2 \\
    &\quad\quad\quad\quad\quad\quad\quad\quad\quad\quad\quad\quad+ \frac{4\eta^2(2(d+4) \sum_{i \in N_0} s_i^2+\sum_{i \in N_1} s_i^2)}{n^2} + \frac{2\eta^4 n_0 L^2(d+6)^3}{n^2c^2}.
\end{align*}
We get the proof of the lemma by using $\eta \le \frac{1}{10 L (d+4)^\frac{1}{2}}$ in the above inequality.
\end{proof}

\begin{lemma} \label{lem: non-convex sum of Gammas}
\begin{align*}
    \sum_{t=0}^{T-1} \E[\Gamma_t] &\leq T\bigg( \frac{48\eta^2(2(d+4)n_0\varsigma_0^2+n_1 \varsigma_1^2)}{n} +\frac{16\eta^2(2(d+4) \sum_{i \in N_0} s_i^2+\sum_{i \in N_1} s_i^2)}{n} + \frac{8\eta^4 n_0 L^2(d+6)^3}{nc^2} \bigg)\\
    &\quad\quad\quad\quad+\frac{16\eta^2(6(d+4)n_0+3n_1)}{n}\sum_{t=0}^{T-2}\E \|\nabla f(\mu_t)\|^2.    
\end{align*}
\end{lemma}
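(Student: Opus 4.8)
The plan is to turn the one-step recursion of Lemma~\ref{lem: non-convex Gamma_t} into a bound on the cumulative potential $\sum_{t=0}^{T-1}\E[\Gamma_t]$ by a telescoping summation. Write that recursion compactly as $\E[\Gamma_{t+1}] \le \left(1-\tfrac{1}{4n}\right)\E[\Gamma_t] + A_t$, where $A_t$ collects all additive terms: the three $t$-independent constants (the data-dispersion term in $\varsigma_0^2,\varsigma_1^2$, the local-variance term in the $s_i^2$, and the bias-induced $\eta^4$ term), together with the gradient-norm term $\tfrac{4\eta^2(6(d+4)n_0+3n_1)}{n^2}\,\E\|\nabla f(\mu_t)\|^2$. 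The essential point is that the contraction factor $1-\tfrac{1}{4n}$, strictly below $1$, is exactly what converts a per-step bound into control of the sum.

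First I would invoke the initialization convention from the algorithm description, namely that all local models start at a common point, so that $\Gamma_0 = \tfrac{1}{n}\sum_i \|X_0^i-\mu_0\|^2 = 0$. Next I would sum the recursion over $t=0,1,\dots,T-2$. The left-hand side becomes $\sum_{t=1}^{T-1}\E[\Gamma_t]$, and the leading right-hand term is $\left(1-\tfrac{1}{4n}\right)\sum_{t=0}^{T-2}\E[\Gamma_t]$. Writing $S:=\sum_{t=0}^{T-1}\E[\Gamma_t]$, the left side equals $S-\E[\Gamma_0]=S$, while the $\Gamma$-sum on the right equals $\left(1-\tfrac{1}{4n}\right)\left(S-\E[\Gamma_{T-1}]\right)$. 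Cancelling $\left(1-\tfrac{1}{4n}\right)S$ from both sides leaves $\tfrac{1}{4n}\,S \le \sum_{t=0}^{T-2}A_t - \left(1-\tfrac{1}{4n}\right)\E[\Gamma_{T-1}]$.

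To finish, I would discard the nonnegative boundary term $\left(1-\tfrac{1}{4n}\right)\E[\Gamma_{T-1}]$, multiply through by $4n$, and split $\sum_{t=0}^{T-2}A_t$ into its $t$-independent part, which appears $T-1$ times and is relaxed to $T$ copies, and the residual gradient-norm sum $\sum_{t=0}^{T-2}\E\|\nabla f(\mu_t)\|^2$. Reading off the coefficients ($4n$ times each constant in $A_t$) reproduces the three constants in the claimed bound and the factor $\tfrac{16\eta^2(6(d+4)n_0+3n_1)}{n}$ in front of the gradient sum.

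I do not expect a genuine obstacle here; the work is purely bookkeeping. The only points requiring care are keeping the gradient-norm contributions $\E\|\nabla f(\mu_t)\|^2$ as an unresolved sum rather than trying to bound them at this stage — they are meant to be absorbed later, against the descent term, in the main non-convex convergence argument — and handling the summation endpoints so that the relaxation $T-1\le T$ and the dropped boundary term yield exactly the stated constants.
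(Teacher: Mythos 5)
Your argument is correct and yields exactly the stated constants. It differs from the paper's proof only in bookkeeping: the paper unrolls the recursion of Lemma~\ref{lem: non-convex Gamma_t} to get $\E[\Gamma_t]\le\sum_{i=0}^{t-1}(P_i+Q)\left(1-\tfrac{1}{4n}\right)^{t-1-i}$, then sums over $t$, interchanges the order of the resulting double sum, and bounds the inner geometric series by $\sum_{i\ge 0}\left(1-\tfrac{1}{4n}\right)^i=4n$; you instead sum the one-step recursion once, cancel the common $\left(1-\tfrac{1}{4n}\right)S$ term, drop the nonnegative boundary term $\left(1-\tfrac{1}{4n}\right)\E[\Gamma_{T-1}]$, and divide by $\tfrac{1}{4n}$. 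Both routes extract the same factor of $4n$ from the contraction and both rely on $\Gamma_0=0$ from the common initialization and on relaxing the $T-1$ copies of the constant part to $T$; your version is slightly more economical since it avoids the double sum and the summation interchange, while the paper's unrolled form also delivers the pointwise bound on each $\E[\Gamma_t]$ (displayed as inequality~(\ref{eq: non-convex Gamma upper bound})) as a byproduct, which your telescoping does not. Either way the coefficients $4n\cdot\tfrac{12\eta^2(\cdot)}{n^2}=\tfrac{48\eta^2(\cdot)}{n}$, $4n\cdot\tfrac{4\eta^2(\cdot)}{n^2}=\tfrac{16\eta^2(\cdot)}{n}$, $4n\cdot\tfrac{2\eta^4(\cdot)}{n^2c^2}=\tfrac{8\eta^4(\cdot)}{nc^2}$, and $4n\cdot\tfrac{4\eta^2(6(d+4)n_0+3n_1)}{n^2}=\tfrac{16\eta^2(6(d+4)n_0+3n_1)}{n}$ match the claim, and you are right to leave $\sum_{t=0}^{T-2}\E\|\nabla f(\mu_t)\|^2$ unresolved for later absorption into the descent term.
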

\begin{proof}
Using \ref{lem: non-convex Gamma_t} we can write:
\begin{align} \label{eq: non-convex Gamma upper bound}
\begin{split}
    \E[\Gamma_{t}] &\leq \bigg( 1 + (1-\frac{1}{4n}) + \dots + (1-\frac{1}{4n})^{t-1} \bigg) . \bigg(\begin{aligned}[t]& \frac{12\eta^2(2(d+4)n_0\varsigma_0^2+n_1 \varsigma_1^2)}{n^2} \\ 
    &+\frac{4\eta^2(2(d+4) \sum_{i \in N_0} s_i^2+\sum_{i \in N_1} s_i^2)}{n^2} + \frac{2\eta^4 n_0 L^2(d+6)^3}{n^2c^2} \bigg)\end{aligned}\\
    &\quad\quad\quad\quad+\frac{4\eta^2(6(d+4)n_0+3n_1)}{n^2}\sum_{i=0}^{t-1} (1-\frac{1}{4n})^{t-1-i}\E \|\nabla f(\mu_i)\|^2 + (1-\frac{1}{4n})^{t}\underbrace{\E[\Gamma_0]}_{0}\\
    &\leq \bigg( \underbrace{\sum_{i=0}^\infty (1-\frac{1}{4n})^i}_{4n} \bigg).\bigg(\begin{aligned}[t]& \frac{12\eta^2(2(d+4)n_0\varsigma_0^2+n_1 \varsigma_1^2)}{n^2} \\ 
    &+\frac{4\eta^2(2(d+4) \sum_{i \in N_0} s_i^2+\sum_{i \in N_1} s_i^2)}{n^2} + \frac{2\eta^4 n_0 L^2(d+6)^3}{n^2c^2} \bigg)\end{aligned}\\
    &\quad\quad\quad\quad+\frac{4\eta^2(6(d+4)n_0+3n_1)}{n^2}\sum_{i=0}^{t-1} (1-\frac{1}{4n})^{t-1-i}\E \|\nabla f(\mu_i)\|^2\\
    &\leq \frac{48\eta^2(2(d+4)n_0\varsigma_0^2+n_1 \varsigma_1^2)}{n} +\frac{16\eta^2(2(d+4) \sum_{i \in N_0} s_i^2+\sum_{i \in N_1} s_i^2)}{n} + \frac{8\eta^4 n_0 L^2(d+6)^3}{nc^2}\\
    &\quad\quad\quad\quad+\frac{4\eta^2(6(d+4)n_0+3n_1)}{n^2}\sum_{i=0}^{t-1} (1-\frac{1}{4n})^{t-1-i}\E \|\nabla f(\mu_i)\|^2.
\end{split}
\end{align}

Now, we sum up the inequality \ref{eq: non-convex Gamma upper bound} for $t \in \{ 1, ..., T-1\}$ and we have:

\begin{align}
\begin{split}
    \sum_{t=1}^{T-1}\E[\Gamma_t] &\leq T\bigg( \frac{48\eta^2(2(d+4)n_0\varsigma_0^2+n_1 \varsigma_1^2)}{n} +\frac{16\eta^2(2(d+4) \sum_{i \in N_0} s_i^2+\sum_{i \in N_1} s_i^2)}{n} + \frac{8\eta^4 n_0 L^2(d+6)^3}{nc^2} \bigg)\\
    &\quad\quad\quad\quad+\frac{4\eta^2(6(d+4)n_0+3n_1)}{n^2}\sum_{t=1}^{T-1}\sum_{i=0}^{t-1} (1-\frac{1}{4n})^{t-1-i}\E \|\nabla f(\mu_i)\|^2\\
    &= T\bigg( \frac{48\eta^2(2(d+4)n_0\varsigma_0^2+n_1 \varsigma_1^2)}{n} +\frac{16\eta^2(2(d+4) \sum_{i \in N_0} s_i^2+\sum_{i \in N_1} s_i^2)}{n} + \frac{8\eta^4 n_0 L^2(d+6)^3}{nc^2} \bigg)\\
    &\quad\quad\quad\quad+\frac{4\eta^2(6(d+4)n_0+3n_1)}{n^2}\sum_{t=0}^{T-2}\E \|\nabla f(\mu_t)\|^2\underbrace{\sum_{i=0}^{T-2-t} (1-\frac{1}{4n})^{i}}_{\leq 4n}\\
    &\leq T\bigg( \frac{48\eta^2(2(d+4)n_0\varsigma_0^2+n_1 \varsigma_1^2)}{n} +\frac{16\eta^2(2(d+4) \sum_{i \in N_0} s_i^2+\sum_{i \in N_1} s_i^2)}{n} + \frac{8\eta^4 n_0 L^2(d+6)^3}{nc^2} \bigg)\\
    &\quad\quad\quad\quad+\frac{16\eta^2(6(d+4)n_0+3n_1)}{n}\sum_{t=0}^{T-2}\E \|\nabla f(\mu_t)\|^2.
\end{split}
\end{align}
By adding $\Gamma_0 =0$ to the left hand side, the proof of the lemma gets finished.
\end{proof}

\begin{lemma}\label{lem: non-convex dot upper bound}
For any time step $t$,
\begin{equation*}
    \E \langle \nabla f(\mu_t), \mu_{t+1} - \mu_{t} \rangle \leq -\frac{\eta}{n} \E\| \nabla f(\mu_t) \|^2 + \frac{2\eta}{n}B^2 + \frac{2L^2\eta}{n} \E[\Gamma_t].
\end{equation*}
\end{lemma}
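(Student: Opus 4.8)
The plan is to compute the conditional expectation of the one-step displacement of the mean and then separate the error into a \emph{drift} part (due to the local models being spread around $\mu_t$) and a \emph{bias} part (due to the estimators). First I would observe that averaging preserves the mean, so when the randomly chosen pair $(i,j)$ interacts, $\mu_{t+1}=\mu_t-\frac{\eta}{n}\big(G^i(X_t^i)+G^j(X_t^j)\big)$. Taking $\E_t$ over both the uniform interacting pair and the estimator randomness, symmetry gives
\[
\E_t[\mu_{t+1}-\mu_t]=-\frac{\eta}{n}\cdot\frac{1}{n(n-1)}\sum_i\sum_{j\neq i}\big(\E_t[G^i(X_t^i)]+\E_t[G^j(X_t^j)]\big)=-\frac{2\eta}{n^2}\sum_i\E_t[G^i(X_t^i)].
\]
Since $\nabla f(\mu_t)$ is fixed under the conditioning, this yields $\E_t\langle\nabla f(\mu_t),\mu_{t+1}-\mu_t\rangle=-\frac{2\eta}{n^2}\big\langle\nabla f(\mu_t),\sum_i\E_t[G^i(X_t^i)]\big\rangle$.

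Next I would introduce the bias by writing $\E_t[G^i(X_t^i)]=\nabla f^i(X_t^i)-\beta^i$ with $\|\beta^i\|\le b_i$ (Definition~\ref{def: beta_i}), and use the identity $\frac1n\sum_i\nabla f^i(\mu_t)=\nabla f(\mu_t)$ to split $\sum_i\nabla f^i(X_t^i)=n\nabla f(\mu_t)+\sum_i\big(\nabla f^i(X_t^i)-\nabla f^i(\mu_t)\big)$. This isolates the leading term $-\frac{2\eta}{n}\|\nabla f(\mu_t)\|^2$ and leaves a drift cross-term in $\Delta:=\sum_i(\nabla f^i(X_t^i)-\nabla f^i(\mu_t))$ and a bias cross-term in $\sum_i\beta^i$.

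For the drift term, Cauchy--Schwarz together with $L$-Lipschitzness (Assumption~\ref{asmp:lipschitz}) gives $\|\Delta\|^2\le n\sum_i L^2\|X_t^i-\mu_t\|^2=n^2L^2\Gamma_t$; applying Young's inequality with parameter $\alpha=\frac{2}{n}$ bounds $-\frac{2\eta}{n^2}\langle\nabla f(\mu_t),\Delta\rangle$ by $\frac{\eta}{2n}\|\nabla f(\mu_t)\|^2+\frac{2L^2\eta}{n}\Gamma_t$. For the bias term, the triangle inequality gives $\|\sum_i\beta^i\|\le\sum_i b_i=nB$, and Young's inequality with the same parameter $\frac{2}{n}$ bounds $\frac{2\eta}{n^2}\langle\nabla f(\mu_t),\sum_i\beta^i\rangle$ by $\frac{\eta}{2n}\|\nabla f(\mu_t)\|^2+\frac{2\eta}{n}B^2$. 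Summing the three pieces, the $\|\nabla f(\mu_t)\|^2$ coefficient becomes $-\frac{2\eta}{n}+\frac{\eta}{2n}+\frac{\eta}{2n}=-\frac{\eta}{n}$, and taking full expectation over $X_t^1,\dots,X_t^n$ delivers the claimed bound.

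The main obstacle is the bias: in an unbiased setting one would discard the $\beta^i$ and directly obtain a descent inequality, but here I must carry $\beta^i$ through the inner product and deliberately spend exactly half of the available negative $\|\nabla f(\mu_t)\|^2$ budget (through the choice $\alpha=\frac{2}{n}$ in the second Young step) to absorb it into the clean term $\frac{2\eta}{n}B^2$. The value of isolating $B^2$ this way is that it is later made negligible via the learning rate, since $B$ is controlled by $\eta$ through Lemma~\ref{lem:h average biasedness upper bound}.
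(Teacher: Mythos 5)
Your proposal is correct and follows essentially the same route as the paper: both decompose $-\E_t[G^i(X_t^i)]$ into a bias part (controlled by $b_i$ and hence $B$), a drift part (controlled by $L$-smoothness and $\Gamma_t$), and the leading term $-\nabla f^i(\mu_t)$, then spend $\frac{\eta}{2n}\|\nabla f(\mu_t)\|^2$ on each of the two cross-terms via Cauchy--Schwarz and Young/AM--GM to land on the coefficient $-\frac{\eta}{n}$. The only difference is cosmetic: you carry the $\frac{2\eta}{n^2}$ prefactor through the Young step (hence $\alpha=\frac{2}{n}$), whereas the paper factors it out and applies AM--GM at the level of $\frac{1}{n}\sum_i$ before multiplying back.
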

\begin{proof}
\begin{align}\label{eq: non-convex dot decomposition}
    \nonumber \E_t\langle \nabla f(\mu_t), \mu_{t+1} - \mu_{t} \rangle &= \langle \nabla f(\mu_t), \E_t[\mu_{t+1}] - \mu_{t} \rangle = \langle \nabla f(\mu_t), -\frac{\eta}{n^2(n-1)}\sum_{i}\sum_{j\neq i} \big( \E_t[G^i(X_t^i)] + \E_t[G^i(X_t^j)] \big) \rangle\\
    \nonumber &= \langle \nabla f(\mu_t), -\frac{2\eta}{n^2}\sum_{i} \E_t[G^i(X_t^i)] \rangle = \frac{2\eta}{n^2}\sum_i \langle \nabla f(\mu_t), - \E_t[G^i(X_t^i)] \rangle\\
    &= \frac{2\eta}{n^2}\sum_i \langle \nabla f(\mu_t), - \E_t[G^i(X_t^i)] + \nabla f^i(X_t^i) - \nabla f^i(X_t^i) + \nabla f^i(\mu_t) - \nabla f^i(\mu_t) \rangle
\end{align}
\begin{align}\label{eq: non-convex decompose_1}
    \nonumber \frac{1}{n} \sum_i \langle \nabla f(\mu_t), &-\E_t[G^i(X_t^i)] + \nabla f^i (X_t^i) \rangle \leq \frac{1}{n} \sum_i \| \nabla f(\mu_t)\|.\| \E_t[G^i(X_t^i)] - \nabla f^i (X_t^i) \| \\ &\overset{\ref{def: beta_i}}{\leq} \frac{1}{n}\|\nabla f(\mu_t)\| \sum_i b_i \overset{\ref{def: B}}{=}B\|\nabla f(\mu_t)\| \overset{AM-GM}{\leq} \frac{1}{4}\| \nabla f(\mu_t)\|^2 + B^2 
\end{align}
\begin{align}\label{eq: non-convex decompose_2}
    \nonumber \frac{1}{n} \sum_i &\langle \nabla f(\mu_t), -\nabla f^i (X_t^i) + \nabla f^i(\mu_t) \rangle \leq \frac{1}{n} \sum_i \| \nabla f(\mu_t)\| .\|\nabla f^i(X_t^i) - \nabla f^i(\mu_t)\| \\
    \nonumber &\overset{Cauchy-Schwarz}{\leq} \|\nabla f(\mu_t)\| \bigg( \frac{1}{n} \sum_i \|\nabla f^i(X_t^i) - \nabla f^i(\mu_t)\|^2 \bigg)^{1/2}
    \overset{\ref{asmp:lipschitz}}{\leq} \| \nabla f(\mu_t)\| \big( L^2 \Gamma_t \big)^{1/2} \\
    &\overset{AM-GM}{\leq} \frac{1}{4}\| \nabla f(\mu_t)\|^2 + L^2 \Gamma_t
\end{align}
\begin{align}\label{eq: non-convex decompose_3}
    \frac{1}{n} \sum_i \langle \nabla f(\mu_t), -\nabla f^i(\mu_t) \rangle = \langle \nabla f(\mu_t), -\frac{1}{n} \sum_i \nabla f^i(\mu_t) \rangle = -\| \nabla f(\mu_t)\|^2
\end{align}
Combining \ref{eq: non-convex decompose_1}, \ref{eq: non-convex decompose_2}, and \ref{eq: non-convex decompose_3}, we will get
\begin{align}
    \frac{1}{n}\sum_i \langle \nabla f(\mu_t), - \E_t[G^i(X_t^i)] + \nabla f^i(X_t^i) - \nabla f^i(X_t^i) + \nabla f^i(\mu_t) - \nabla f^i(\mu_t) \rangle \leq -\frac{1}{2} \| \nabla f(\mu_t) \|^2 + B^2 + L^2 \Gamma_t.
\end{align}
By plugging in the result above in \ref{eq: non-convex dot decomposition} and taking the expectation with respect to $X_t^1, X_t^2, ..., X_t^n$ from both sides, the proof would be finished.
\end{proof}

\begin{lemma} \label{lem: non-convex mu_t difference}
For every time step $t$, we have
\begin{align*}
    \E[f(\mu_{t+1}) - f(\mu_t)] &\leq \bigg( \frac{2L\big( 6(d+4)n_0+3n_1\big)\eta^2}{n^3} - \frac{\eta}{n}\bigg)\E\|\nabla f(\mu_t)\|^2 + \bigg( \frac{2L^2\eta}{n} + \frac{12L(d+4)\eta^2}{n^2}\bigg)\E[\Gamma_t] \\
    &+ \eta^2\frac{2L}{n^2}\bigg( \frac{6(d+4)n_0\varsigma_0^2+3n_1 \varsigma_1^2}{n} +\frac{2(d+4) \sum_{i \in N_0} s_i^2+\sum_{i \in N_1} s_i^2}{n}\bigg) \\
    &+\eta^3\frac{n_0^2L^2}{2 c^2 n^3}(d+3)^3 + \eta^4\frac{n_0 L^3} {c^2 n^3}(d+6)^3.
\end{align*}
\end{lemma}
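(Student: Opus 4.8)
The plan is to begin from the standard $L$-smoothness descent inequality for $f$, which follows from Assumption~\ref{asmp:lipschitz}: for the two consecutive iterates $\mu_t$ and $\mu_{t+1}$,
\begin{equation*}
f(\mu_{t+1}) \le f(\mu_t) + \langle \nabla f(\mu_t), \mu_{t+1}-\mu_t\rangle + \frac{L}{2}\|\mu_{t+1}-\mu_t\|^2.
\end{equation*}
Taking expectations and rearranging reduces the lemma to controlling exactly two quantities: the \emph{drift} term $\E\langle \nabla f(\mu_t), \mu_{t+1}-\mu_t\rangle$ and the \emph{second-moment} term $\E\|\mu_{t+1}-\mu_t\|^2$. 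Each of these is handled by a previously established lemma, so the core of the argument is a careful assembly rather than a new estimate.

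For the drift term I would invoke Lemma~\ref{lem: non-convex dot upper bound}, which already absorbs the difficulty posed by biased zeroth-order estimators and yields
\begin{equation*}
\E\langle \nabla f(\mu_t), \mu_{t+1}-\mu_t\rangle \le -\frac{\eta}{n}\E\|\nabla f(\mu_t)\|^2 + \frac{2\eta}{n}B^2 + \frac{2L^2\eta}{n}\E[\Gamma_t].
\end{equation*}
For the second-moment term, Lemma~\ref{lem: non-convex avg movement upper bound} gives $\E\|\mu_{t+1}-\mu_t\|^2 \le \frac{4\eta^2}{n^2}\E[M_t^G]$, into which I substitute the upper bound on $\E[M_t^G]$ from Lemma~\ref{lem: non-convex M^G}. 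Multiplying by $\tfrac{L}{2}$, the contribution $\frac{2L\eta^2}{n^2}\E[M_t^G]$ then expands into a $\Gamma_t$ piece, a $\|\nabla f(\mu_t)\|^2$ piece with coefficient $\frac{2L(6(d+4)n_0+3n_1)\eta^2}{n^3}$, the data-split and local-variance constants scaled by $\frac{2L\eta^2}{n^2}$, and the $\eta^4$-smoothing term proportional to $n_0 L^3(d+6)^3$.

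It then remains to eliminate $B^2$, for which I apply the biasedness bound of Lemma~\ref{lem:h average biasedness upper bound}, namely $B \le \frac{\eta n_0}{2cn}L(d+3)^{3/2}$, so that $\frac{2\eta}{n}B^2 \le \frac{\eta^3 n_0^2 L^2}{2c^2 n^3}(d+3)^3$, producing the stated $\eta^3$ term. Collecting everything, the negative $-\frac{\eta}{n}\E\|\nabla f(\mu_t)\|^2$ coming from the drift merges with the positive gradient-norm contribution from the second-moment expansion to give the coefficient $\frac{2L(6(d+4)n_0+3n_1)\eta^2}{n^3}-\frac{\eta}{n}$, while the two $\Gamma_t$ contributions $\frac{2L^2\eta}{n}$ and the one from $M_t^G$ combine into the bracketed $\Gamma_t$ coefficient.

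The step I expect to be least mechanical is the drift bound of Lemma~\ref{lem: non-convex dot upper bound} itself, since that is where the estimator bias genuinely enters: one must decompose $-\E_t[G^i(X_t^i)]$ as $\big(\nabla f^i(X_t^i)-\E_t[G^i(X_t^i)]\big) + \big(\nabla f^i(\mu_t)-\nabla f^i(X_t^i)\big) - \nabla f^i(\mu_t)$ and control the three pieces separately, using the per-node bias bound $b_i$ on the first, $L$-smoothness (Assumption~\ref{asmp:lipschitz}) together with $\Gamma_t$ on the second, and the identity $\frac{1}{n}\sum_i\nabla f^i(\mu_t)=\nabla f(\mu_t)$ on the third. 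Once that lemma is available, the present statement is essentially bookkeeping across the three cited bounds; the only real care needed is to track all constants and powers of $\eta$, $L$, $d$, $n_0$, and $n_1$ consistently so that the final coefficients line up, in particular ensuring that the gradient-norm coefficient becomes negative once $\eta$ is taken small enough, which is what will later enable the telescoping over $t$.
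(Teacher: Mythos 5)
Your proposal matches the paper's proof essentially step for step: start from the $L$-smoothness descent inequality, bound the drift term via Lemma~\ref{lem: non-convex dot upper bound}, bound $\E\|\mu_{t+1}-\mu_t\|^2$ via Lemma~\ref{lem: non-convex avg movement upper bound} combined with the $\E[M_t^G]$ bound of Lemma~\ref{lem: non-convex M^G}, and convert $\frac{2\eta}{n}B^2$ into the $\eta^3$ term using Lemma~\ref{lem:h average biasedness upper bound}. The bookkeeping you describe reproduces the stated coefficients, so the argument is correct and identical in structure to the paper's.
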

\begin{proof}
Using L-smoothness \ref{asmp:lipschitz} we can write
\begin{align*}
    \E_t[f(\mu_{t+1})] \leq f(\mu_{t}) + \E_t\langle \nabla f(\mu_t), \mu_{t+1} - \mu_{t} \rangle + \frac{L}{2} \E_t \| \mu_{t+1} - \mu_t\|^2.
\end{align*}
By taking the expectation with respect to $X_t^1, X_t^2, ..., X_t^n$ we will get
\begin{equation}
    \E[f(\mu_{t+1})] - \E[f(\mu_{t})] \leq \E\langle \nabla f(\mu_t), \mu_{t+1} - \mu_{t} \rangle + \frac{L}{2} \E \| \mu_{t+1} - \mu_t\|^2.
\end{equation}
Then, we use Lemmas \ref{lem:h average biasedness upper bound}, \ref{lem: non-convex avg movement upper bound}, and \ref{lem: non-convex dot upper bound}:
\begin{align}
\begin{split}
    \nonumber \E[f(\mu_{t+1})] - \E[f(\mu_{t})] &\leq -\frac{\eta}{n} \E\| \nabla f(\mu_t) \|^2 + \frac{2\eta}{n}B^2 + \frac{2L^2\eta}{n} \E[\Gamma_t] + \frac{2L\eta^2}{n^2}\E[M_t^G]\\
    &\overset{\ref{lem: non-convex M^G}}{\leq} -\frac{\eta}{n} \E\| \nabla f(\mu_t) \|^2 + \frac{2\eta}{n}B^2 + \frac{2L^2\eta}{n} \E[\Gamma_t] \\
    & \quad\quad+ \frac{2L\eta^2}{n^2}\begin{aligned}[t]\bigg( &6(d+4)L^2 \E[\Gamma_t] + \frac{6(d+4)n_0\varsigma_0^2+3n_1 \varsigma_1^2}{n}+\frac{6(d+4)n_0+3n_1}{n}\E \| \nabla f(\mu_t)\|^2 \\ &+\frac{2(d+4) \sum_{i \in N_0} s_i^2+\sum_{i \in N_1} s_i^2}{n} + \eta^2 \frac{n_0}{2nc^2}L^2(d+6)^3 \bigg) \end{aligned} \\
    &\overset{\ref{lem:h average biasedness upper bound}}{\leq} \bigg( \frac{2L\big( 6(d+4)n_0+3n_1\big)\eta^2}{n^3} - \frac{\eta}{n}\bigg)\E\|\nabla f(\mu_t)\|^2 + \bigg( \frac{2L^2\eta}{n} + \frac{12L(d+4)\eta^2}{n^2}\bigg)\E[\Gamma_t] \\
    &+ \eta^2\frac{2L}{n^2}\bigg( \frac{6(d+4)n_0\varsigma_0^2+3n_1 \varsigma_1^2}{n} +\frac{2(d+4) \sum_{i \in N_0} s_i^2+\sum_{i \in N_1} s_i^2}{n}\bigg) \\
    &+\eta^3\frac{n_0^2L^2}{2 c^2 n^3}(d+3)^3 + \eta^4\frac{n_0 L^3} {c^2 n^3}(d+6)^3.
\end{split}
\end{align}

\end{proof}

\begin{theoremsub}[Non-Convex case of Theorem \ref{thm:main}]
\label{thm:main-nonconvex}
Under Assumptions \ref{asmp:lipschitz}, \ref{asmp:global_variance} and \ref{asmp:unbiasedness_bounded_local_variance_of_F}, and letting $T$ to be large enough such that $T = \Omega\left(\max\big\{\frac{L^2(dn_0 + n_1)^2}{dn^2}, n^2L^2, nn_0^3d\big\}\right)$ we have
\begin{align*}
    \frac{1}{T}\sum_{t=0}^{T-1}\E\|\nabla f(\mu_t)\|^2 \leq O \Bigg( (f(\mu_0) - f^*)+ L \big( \frac{dn_0\varsigma_0^2+n_1 \varsigma_1^2}{nd} \big)
    + L \big( \frac{d n_0\sigma_0^2+n_1\sigma_1^2}{nd} \big)
    + L^2\sqrt{\frac{n_0}{n}}\Bigg)\sqrt{\frac{d}{T}}.
\end{align*}
\end{theoremsub}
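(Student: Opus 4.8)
The plan is to run the standard non-convex ``descent plus telescope'' argument, but with the per-step drift of $f(\mu_t)$ controlled through the concentration potential $\Gamma_t$, and with the zeroth-order bias deferred to lower-order-in-$\eta$ terms at the very end. The starting point is the one-step descent bound of Lemma~\ref{lem: non-convex mu_t difference}, which already assembles all the pieces: a leading $-\tfrac{\eta}{n}\E\|\nabla f(\mu_t)\|^2$ term, a $\Gamma_t$ term with coefficient $O(L^2\eta/n)$, the data-split and estimator-variance noise scaling as $\eta^2$, and the bias contributions scaling as $\eta^3$ and $\eta^4$ (arising from $B^2$ via Lemma~\ref{lem:h average biasedness upper bound} with $\nu=\eta/\sqrt d$). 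Summing this inequality over $t=0,\dots,T-1$ telescopes the left-hand side to $\E[f(\mu_T)]-f(\mu_0)\ge f^*-f(\mu_0)$, so after rearranging, $\tfrac{\eta}{n}\sum_t\E\|\nabla f(\mu_t)\|^2$ is bounded by $f(\mu_0)-f^*$ plus the accumulated $\Gamma_t$ and noise terms.

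The crux is eliminating the dependence on $\sum_t\E[\Gamma_t]$. Here I would substitute the bound of Lemma~\ref{lem: non-convex sum of Gammas}, whose essential feature is that $\sum_t\E[\Gamma_t]$ is controlled by a constant ``noise floor'' plus a term proportional to $\tfrac{\eta^2(dn_0+n_1)}{n}\sum_t\E\|\nabla f(\mu_t)\|^2$. Plugging this in reintroduces the gradient-norm sum on the right, creating a self-referential inequality. I would then argue that, for $\eta$ small enough, the total positive coefficient in front of $\sum_t\E\|\nabla f(\mu_t)\|^2$ (from both the direct $\eta^2$ term in the descent lemma and the $\Gamma_t$-coupling) is at most $\tfrac{\eta}{2n}$, leaving a net coefficient $\le -\tfrac{\eta}{2n}$. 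This is exactly the purpose of the first two thresholds on $T$: $T=\Omega(L^2(dn_0+n_1)^2/(dn^2))$ makes the direct descent term negligible, while $T=\Omega(n^2L^2)$ both enforces the smallness $\eta\le 1/(10L\sqrt{d+4})$ required by Lemma~\ref{lem: non-convex Gamma_t} and dominates the $\Gamma_t$-coupling term.

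With the absorption complete I obtain $\tfrac{\eta}{2n}\sum_t\E\|\nabla f(\mu_t)\|^2\le (f(\mu_0)-f^*)+(\text{noise})$; dividing by $\eta T/(2n)$ gives $\tfrac1T\sum_t\E\|\nabla f(\mu_t)\|^2\le \tfrac{2n(f(\mu_0)-f^*)}{\eta T}+\tfrac{2n}{\eta}\cdot(\text{per-step noise})$. I would then set $\eta=\Theta(n/\sqrt{dT})$, which balances the initialization term against the variance noise and yields the $\sqrt{d/T}$ scaling: the initialization term becomes $O(f(\mu_0)-f^*)\sqrt{d/T}$, and the variance term becomes $O\big(L\tfrac{dn_0\varsigma_0^2+n_1\varsigma_1^2}{nd}+L\tfrac{dn_0\sigma_0^2+n_1\sigma_1^2}{nd}\big)\sqrt{d/T}$, after using $\sum_{i\in N_0}s_i^2=n_0\sigma_0^2$ and $\sum_{i\in N_1}s_i^2=n_1\sigma_1^2$. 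Finally, the bias-derived terms, the largest of which scales as $\eta^2 n_0^2 L^2 d^2/n^2=\Theta(n_0^2L^2 d/T)$, would be bounded by $O(L^2\sqrt{n_0/n})\sqrt{d/T}$ precisely when $T=\Omega(nn_0^3 d)$ (the third threshold), and the leftover $\Gamma_t$-noise contributions are checked to be of lower order ($1/T$ rather than $1/\sqrt T$), hence absorbed.

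The main obstacle is the circular dependence between $\E\|\nabla f(\mu_t)\|^2$ and the potential $\Gamma_t$: progress of the mean is bounded using $\Gamma_t$, yet the only available bound on $\sum_t\E[\Gamma_t]$ (Lemma~\ref{lem: non-convex sum of Gammas}) feeds the gradient-norm sum back in. Closing this loop cleanly---showing that the feedback coefficient is strictly dominated by the leading descent term under an explicit learning rate, while simultaneously verifying that the biased zeroth-order estimators enter only through higher-order-in-$\eta$ terms that the three $T$-thresholds render harmless---is the delicate part; everything else is bookkeeping of constants already assembled in the preceding lemmas.
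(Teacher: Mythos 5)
Your proposal follows essentially the same route as the paper's proof: one-step descent via Lemma~\ref{lem: non-convex mu_t difference}, telescoping, substitution of the $\sum_t\E[\Gamma_t]$ bound from Lemma~\ref{lem: non-convex sum of Gammas}, absorption of the reintroduced gradient-norm sum into the leading $-\Theta(\eta/n)$ term (the paper's $1-K_6-K_7\ge 1/2$ step), the choice $\eta=\Theta(n/\sqrt{dT})$ (the paper's $\eta=en/\sqrt T$ with $e=1/\sqrt d$ and $c=\sqrt d$), and the $T=\Omega(nn_0^3d)$ condition to tame the dominant bias term, which you verify with the correct arithmetic. The argument is correct and matches the paper's proof in all essential respects.
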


\begin{proof}
\begin{align*}
     \E[f(\mu_{t+1})] - \E[f(\mu_{t})] &\overset{\ref{lem: non-convex mu_t difference}}{\leq} \bigg( \frac{2L\big( 6(d+4)n_0+3n_1\big)\eta^2}{n^3} - \frac{\eta}{n}\bigg)\E\|\nabla f(\mu_t)\|^2 + \underbrace{\bigg( \frac{2L^2\eta}{n} + \frac{12L(d+4)\eta^2}{n^2}\bigg)}_{J_0:=}\E[\Gamma_t] \\
    &+ \eta^2\underbrace{\frac{2L}{n^2}\bigg( \frac{6(d+4)n_0\varsigma_0^2+3n_1 \varsigma_1^2}{n} +\frac{2(d+4) \sum_{i \in N_0} s_i^2+\sum_{i \in N_1} s_i^2}{n}\bigg)}_{J_1:=} \\
    &+\eta^3\underbrace{\frac{n_0^2L^2}{2 c^2 n^3}(d+3)^3}_{J_2:=} + \eta^4\underbrace{\frac{n_0 L^3} {c^2 n^3}(d+6)^3}_{J_3:=}
\end{align*}
Considering $\eta \leq \frac{n^2}{4L \big( 6(d+4)n_0 + 3n_1 \big)}$, we can write:
\begin{align}
    \nonumber \E[f(\mu_{t+1})] - \E[f(\mu_{t})] &\leq -\frac{\eta}{2n}\E\|\nabla f(\mu_t)\|^2 + J_0\E[\Gamma_t] + \eta^2 J_1 + \eta^3 J_2 + \eta^4 J_3.
\end{align}
Now, we sum up the inequality above for $t \in \{0, ..., T-1\}$ to get:

\begin{align}
    \nonumber \E[f(\mu_{T})] - \E[f(\mu_0)] \leq 
    & -\frac{\eta}{2n}\sum_{t=0}^{T-1}\E\|\nabla f(\mu_t)\|^2 + J_0\sum_{t=0}^{T-1}\E[\Gamma_t] \\
    & + \eta^2 T J_1 + \eta^3 T J_2 + \eta^4 T J_3.
\end{align}

By using Lemma \ref{lem: non-convex sum of Gammas} and the fact that $f(\mu_T) \geq f^*$ we will have:
\begin{align} \label{eqn: non-convex final ineq J}
    \nonumber \frac{\eta}{2n}\sum_{t=0}^{T-1}\E\|\nabla f(\mu_t)\|^2 \leq &\big( f(\mu_0) - f^* \big) + \eta^2 T J_1 + \eta^3 T J_2 + \eta^4 T J_3 \\
    \nonumber  &+J_0\Bigg( T\bigg( \frac{48\eta^2(2(d+4)n_0\varsigma_0^2+n_1 \varsigma_1^2)}{n} +\frac{16\eta^2(2(d+4) \sum_{i \in N_0} s_i^2+\sum_{i \in N_1} s_i^2)}{n}\\
    &\quad\quad\quad+ \frac{8\eta^4 n_0 L^2(d+6)^3}{nc^2} \bigg)+\frac{16\eta^2(6(d+4)n_0+3n_1)}{n}\sum_{t=0}^{T-2}\E \|\nabla f(\mu_t)\|^2 \Bigg) .
\end{align}
Then, by defining $K_1$, $K_2$, $K_3$, $K_4$, $K_5$, $K_6$, and $K_7$ as
\begin{align*}
    &K_1:= J_1,\\
    &K_2:= J_2+ \frac{2L^2}{n}\bigg( \frac{48(2(d+4)n_0\varsigma_0^2+n_1 \varsigma_1^2)}{n} +\frac{16(2(d+4) \sum_{i \in N_0} s_i^2+\sum_{i \in N_1} s_i^2)}{n}\bigg),\\
    &K_3:= J_3 + \frac{12L(d+4)}{n^2}\bigg( \frac{48(2(d+4)n_0\varsigma_0^2+n_1 \varsigma_1^2)}{n} +\frac{16(2(d+4) \sum_{i \in N_0} s_i^2+\sum_{i \in N_1} s_i^2)}{n}\bigg),\\
    &K_4:= \frac{16 n_0 L^4(d+6)^3}{n^2c^2},\\
    &K_5:= \frac{96 n_0 L^3(d+4)(d+6)^3}{n^3c^2},\\
    &K_6:= \frac{64L^2\eta^2(6(d+4)n_0+3n_1)}{n},\\
    &K_7:= \frac{384L\eta^3(d+4)(6(d+4)n_0+3n_1)}{n^2},
\end{align*}
we can rewrite \ref{eqn: non-convex final ineq J} as 
\begin{equation*}
    \frac{\eta}{2n}\big( 1 - K_6 - K_7\big)\sum_{t=0}^{T-1}\E\|\nabla f(\mu_t)\|^2 \leq \big( f(\mu_0) - f^* \big) + \eta^2 T K_1 + \eta^3 T K_2 + \eta^4 T K_3 + \eta^5 T K_4 + \eta^6 T K_5.
\end{equation*}
By setting $\eta \leq \sqrt{\frac{n}{256L^2\big( 6(d+4)n_0 + 3n_1 \big)}}$ and $\eta \leq \bigg(\frac{n^2}{1536L(d+4)\big( 6(d+4)n_0 + 3n_1 \big)}\bigg)^{\frac{1}{3}}$, we would get $1 - K_6 - K_7 \geq \frac{1}{2}$ and therefore we have:
\begin{equation*}
    \frac{1}{4nT}\sum_{t=0}^{T-1}\E\|\nabla f(\mu_t)\|^2 \leq \frac{1}{\eta T}\big( f(\mu_0) - f^* \big) + \eta K_1 + \eta^2 K_2 + \eta^3 K_3 + \eta^4 K_4 + \eta^5 K_5.
\end{equation*}

In this step, we set $\eta$ equal to $\frac{en}{\sqrt{T}}$ for a parameter $e$ that we will define later.

\begin{equation} \label{eqn: non-convex K_1, K_2, K_3, K_4}
    \frac{1}{4nT}\sum_{t=0}^{T-1}\E\|\nabla f(\mu_t)\|^2 \leq \frac{1}{en\sqrt{T}}\big( f(\mu_0) - f^* \big) + \frac{en}{\sqrt{T}} K_1 + \big(\frac{en}{\sqrt{T}}\big)^2 K_2 + \big( \frac{en}{\sqrt{T}}\big)^3 K_3 + \big(\frac{en}{\sqrt{T}}\big)^4 K_4 + \big(\frac{en}{\sqrt{T}}\big)^5 K_5.
\end{equation}

By putting $c = \sqrt{d}$, we can give upper bound for $K_1$, $K_2$, $K_3$, $K_4$, and $K_5$ as follows:  

\begin{align*}
    K_1&= \frac{2L}{n^2}\bigg( \frac{6(d+4)n_0\varsigma_0^2+3n_1 \varsigma_1^2}{n} +\frac{2(d+4) \sum_{i \in N_0} s_i^2+\sum_{i \in N_1} s_i^2}{n}\bigg)\\
    &= O \Bigg( \frac{L}{n^2}\bigg( \big( \frac{dn_0\varsigma_0^2+n_1 \varsigma_1^2}{n} \big) + \big( \frac{d \sum_{i \in N_0} s_i^2+\sum_{i \in N_1} s_i^2}{n} \big) \bigg)\Bigg),\\
    K_2&= \frac{n_0^2L^2}{2 c^2 n^3}(d+3)^3 + \frac{2L^2}{n}\bigg( \frac{48(2(d+4)n_0\varsigma_0^2+n_1 \varsigma_1^2)}{n} +\frac{16(2(d+4) \sum_{i \in N_0} s_i^2+\sum_{i \in N_1} s_i^2)}{n}\bigg)\\
    &=O \Bigg( \frac{n_0^2L^2d^2}{ n^3} + \frac{L^2}{n} \bigg( \big( \frac{dn_0\varsigma_0^2+n_1 \varsigma_1^2}{n} \big) + \big( \frac{d \sum_{i \in N_0} s_i^2+\sum_{i \in N_1} s_i^2}{n} \big) \bigg) \Bigg),\\
    K_3&= \frac{n_0 L^3} {c^2 n^3}(d+6)^3 + \frac{12L(d+4)}{n^2}\bigg( \frac{48(2(d+4)n_0\varsigma_0^2+n_1 \varsigma_1^2)}{n} +\frac{16(2(d+4) \sum_{i \in N_0} s_i^2+\sum_{i \in N_1} s_i^2)}{n}\bigg)\\
    &= O \Bigg( \frac{n_0L^3d^2}{ n^3} + \frac{Ld}{n^2} \bigg( \big( \frac{dn_0\varsigma_0^2+n_1 \varsigma_1^2}{n} \big) + \big( \frac{d \sum_{i \in N_0} s_i^2+\sum_{i \in N_1} s_i^2}{n} \big) \bigg) \Bigg),\\
    K_4&= \frac{16 n_0 L^4(d+6)^3}{n^2c^2} = O \Bigg(\frac{n_0 L^4d^2}{n^2}\Bigg),\\
    K_5&= \frac{96 n_0 L^3(d+4)(d+6)^3}{n^3c^2} = O \Bigg(\frac{n_0 L^3d^3}{n^3}\Bigg).
\end{align*}

Now we multiply both sides of \ref{eqn: non-convex K_1, K_2, K_3, K_4} by $4n$ and use the inequalities above to get:

\begin{align*}
    \frac{1}{T}\sum_{t=0}^{T-1}\E\|\nabla f(\mu_t)\|^2 \leq O \Bigg(& \frac{1}{e\sqrt{T}}(f(\mu_0) - f^*) + \frac{e^2n_0^2L^2d^2}{T} + \frac{e^3nn_0L^3d^2}{T^\frac{3}{2}} + \frac{e^4n^3n_0 L^4d^2}{T^2} + \frac{e^5n^3n_0 L^3d^3}{T^\frac{5}{2}}\\
    &+ \big(\frac{eL}{\sqrt{T}} + \frac{e^2n^2L^2}{T} + \frac{e^3 n^2Ld}{T^\frac{3}{2}}\big)\bigg( \big( \frac{dn_0\varsigma_0^2+n_1 \varsigma_1^2}{n} \big) + \big( \frac{d \sum_{i \in N_0} s_i^2+\sum_{i \in N_1} s_i^2}{n} \big) \bigg)\Bigg).
\end{align*}

Next, note that time $T$ here counts \emph{total interactions}. However, $\Theta(n)$ interactions occur simultaneously. Therefore, if we define $T_p$ as the \emph{parallel} execution time of the process, we deduce $T = \Theta(n T_p)$, hence we can write:

\begin{align*}
    \frac{1}{T}\sum_{t=0}^{T-1}\E\|\nabla f(\mu_t)\|^2 \leq O \Bigg(& \frac{1}{e\sqrt{nT_p}}(f(\mu_0) - f^*) + \frac{e^2n_0^2L^2d^2}{nT_p} + \frac{e^3n_0L^3d^2}{\sqrt{n}T_p^\frac{3}{2}} + \frac{e^4nn_0 L^4d^2}{T_p^2} + \frac{e^5\sqrt{n}n_0 L^3d^3}{T_p^\frac{5}{2}}\\
    &+ \big(\frac{eL}{\sqrt{nT_p}} + \frac{e^2nL^2}{T_p} + \frac{e^3 \sqrt{n}Ld}{T_p^\frac{3}{2}}\big)\bigg( \big( \frac{dn_0\varsigma_0^2+n_1 \varsigma_1^2}{n} \big) + \big( \frac{d \sum_{i \in N_0} s_i^2+\sum_{i \in N_1} s_i^2}{n} \big) \bigg)\Bigg).
\end{align*}



Considering $e \leq \frac{1}{\sqrt{d}}$ and $T_p \geq \max(\frac{n^3L^2}{d}, n)$, we will have

\begin{align*}
    \frac{1}{T}\sum_{t=0}^{T-1}\E\|\nabla f(\mu_t)\|^2 \leq O \Bigg(& \frac{1}{e\sqrt{nT_p}}(f(\mu_0) - f^*) + \frac{e^2n_0^2L^2d^2}{nT_p}\\
    &+ \big(\frac{eL}{\sqrt{nT_p}} \big)\bigg( \big( \frac{dn_0\varsigma_0^2+n_1 \varsigma_1^2}{n} \big) + \big( \frac{d \sum_{i \in N_0} s_i^2+\sum_{i \in N_1} s_i^2}{n} \big) \bigg)\Bigg).
\end{align*}

Finally, by putting $e = \frac{1}{\sqrt{d}}$ and $T_p \geq n_0^3d$, we will conclude the final convergence

\begin{align*}
    \frac{1}{T}\sum_{t=0}^{T-1}\E\|\nabla f(\mu_t)\|^2 \leq O \Bigg(& (f(\mu_0) - f^*)
    + L\bigg( \big( \frac{dn_0\varsigma_0^2+n_1 \varsigma_1^2}{nd} \big) + \big( \frac{d n_0\sigma_0^2+n_1\sigma_1^2}{nd} \big) \bigg)
    + L^2\sqrt{\frac{n_0}{n}}\Bigg)\sqrt{\frac{d}{T}}.
\end{align*}

Gathering all the assumed upper bounds for $\eta$, we have 

\begin{align*}
\eta &= O\left(\min\bigg\{\sqrt{\frac{n}{L^2 (d n_0 + n_1)}}, \left(\frac{n^2}{L (d n_0 + n_1)}\right)^{\frac{1}{3}}, \frac{n^2}{L (d n_0 + n_1)}, \frac{1}{\sqrt{L d}}\bigg\}\right),
\end{align*}

which implies $\eta = O\left(\min\big\{\frac{1}{L\sqrt{d}}, \frac{n^2}{L (d n_0 + n_1)}\big\}\right)$, resulting in $T = \Omega\left(\max\big\{\frac{L^2(dn_0 + n_1)^2}{dn^2}, n^2L^2, nn_0^3d\big\}\right)$.





\end{proof}

\end{document}